\long\def\symbolfootnote[#1]#2{\begingroup%
\def\thefootnote{\fnsymbol{footnote}}\footnote[#1]{#2}\endgroup}
\titleformat{\section}{\large\bfseries}{\thesection.}{.5em}{}
\titlespacing*{\section}{0pt}{*3}{*2}
\titleformat{\subsection}{\normalfont\bfseries}{\thesubsection.}{.5em}{}
\titlespacing*{\subsection} {0pt}{*3}{*2}
\titleformat{\subsubsection}{\normalfont\bfseries}{\thesubsubsection.}{.5em}{}
\titlespacing*{\subsubsection} {0pt}{*3}{*2}
\theoremstyle{plain} 
\newtheorem{theorem}{Theorem}[section]
\newtheorem{lemma}{Lemma}[section]
\theoremstyle{definition} 
\newtheorem{remark}{Remark}[section]
\numberwithin{equation}{section} 
\begin{document}

\title{\textbf{\Large Scan $B$-Statistic for Kernel Change-Point Detection}}

\date{}

\maketitle

\author{
\begin{center}
\vskip -1cm

\textbf{\large Shuang Li$^a$, Yao Xie$^a$, Hanjun Dai$^b$, Le Song$^b$}

$^a$School of Industrial and Systems Engineering (ISyE), Georgia Institute of Technology, 
Atlanta, Georgia, USA, 
$^b$School of Computer Science and Engineering (CSE), Georgia Institute of Technology, 
Atlanta, Georgia, USA.

\end{center}
}

\symbolfootnote[0]{\normalsize Address correspondence to Yao Xie,
School of Industrial and Systems Engineering, Georgia Institute of Technology, Atlanta, Georgia, 30332, USA; E-mail: yao.c.xie@gmail.com}



{\small \noindent\textbf{Abstract:} Detecting the emergence of an abrupt change-point is a classic problem in statistics and machine learning. Kernel-based nonparametric statistics have been used for this task, which enjoys fewer assumptions on the distributions than the parametric approach and can handle high-dimensional data. 
In this paper, we focus on the scenario when the amount of background data is large, and propose a computationally efficient kernel-based statistics for change-point detection, which are inspired by the recently developed $B$-statistics. A novel theoretical result of the paper is the characterization of the tail probability of these statistics using the change-of-measure technique, which focuses on characterizing the tail of the detection statistics rather than obtaining its asymptotic distribution under the null distribution. Such approximations are crucial to controlling the false alarm rate, which corresponds to the average-run-length in online change-point detection. Our approximations are shown to be highly accurate. Thus, they provide a convenient way to find detection thresholds for online cases without the need to resort to the more expensive simulations. We show that our methods perform well on both synthetic data and real data.}
\\ \\
{\small \noindent\textbf{Keywords:} Change-point detection; Kernel-based statistics; Online algorithm;  False-alarm control. }
\\ \\
{\small \noindent\textbf{Subject Classifications:} Primary 62L10; Secondary 62G10, 62G32.}

\section{INTRODUCTION}

Given a sequence of samples, $x_1, x_2, \ldots, x_t$, from a domain $\mathcal{X}$, we are interested in detecting a possible change-point $\tau$, such that before the change samples $x_i$ are {\it i.i.d.}~with a null distribution $P$, and after the change samples $x_i$ are {\it i.i.d.}~with a distribution $Q$. Here, we consider two scenarios: the time horizon $t$ is fixed, $t=T_0$, which we call the offline or fixed-sample change-point detection, or the time horizon $t$ is not fixed, meaning that one can keep getting new samples, which we call the online or sequential change-point detection. In the offline setting, our goal is to detect the existence of a change. In the online setting, our goal is to detect the emergence of a change as soon as possible after it occurs. Here, we restrict our attention to detecting one change-point.  One such instance is seismic event detection as studied by \citet{P-S-wave-picking2014}, where one would like to either detect the presence of a weak event in retrospect to better understand the geophysical structure or detect the event as quickly as possible for online monitoring. 

Ideally, the detection algorithm should be free of distributional assumptions to be robust when applied to real data. To achieve this goal, various kernel-based nonparametric statistics have been proposed in the statistics and machine learning literature, see, e.g., \citet{Zaid2008,zaid-sparse-alternative2014,LiangKME2014,Kifer2004,density-ratio2013,online-kernel-changepoint2005}, which typically work well with multi-dimensional real data since they are distributional free. Kernel approaches are distribution free and more robust as they provide consistent results over larger classes of data distributions; albeit they can be less powerful in settings where a clear distributional assumption can be made.  However, most kernel based statistics cost $\mathcal{O}(n^2)$ to compute over $n$ samples. In the online change-point detection setting, the number of samples grows with time and hence we cannot directly use the naive approach. Recently,  \cite{B-test2013} developed the so-called \textit{$B$-test statistic} to reduce the computational complexity. The $B$-test statistic samples $N$ pairs of blocks of size $B$ from the two-sample data, compute the unbiased estimates of the kernel-based statistic between each pair and then take an average. The computational complexity of the $B$-test statistic reduces to $\mathcal{O}(nB^2)$  instead of $\mathcal{O}(n^2)$.

In this paper, we present two scan statistics related to $B$-test statistics customized for offline and online change-point detection, which we name as \textit{scan $B$-statistics}. The proposed statistics are based on kernel maximum mean discrepancy (MMD) in \citet{gretton2012kernel, kernel-based-testing-2013}. They are inspired by the $B$-test statistic but differ in various ways to tailor to the need of change-point detection. Typically, there is a small number of post-change samples (for instance, seismic events are relatively rare, and in online change-point detection, one would like to detect the change quickly). But there is a large amount of reference data. So when constructing the detection statistic, we reuse the post-change samples for the test block and construct multiple and disjoint reference blocks. This leads to a non-negligible dependence between the MMD statistics being averaged over. Hence, we cannot use the existing approach based on the central limit theorem to analyze them. Moreover, the scanning nature of the proposed statistic also introduces non-negligible dependence. We construct the reference and test blocks in a structured way so that analytical expressions for false alarm can be obtained.  
 
Our main theoretical contribution includes accurate theoretical approximations to the false-alarm rate of scan $B$-statistics. Controlling false alarms is a key challenge in change-point detection. Specifically, this means to quantify the significance level for offline change-point detection, and the average run length (ARL) for online change-point detection. Here, we cannot directly rely on the null property of the $B$-test statistic established in the existing work, because the scan statistics take the maximum of multiple statistics computed over overlapping data blocks that causes strong correlations. Hence, one cannot use the central limit theorem or even the martingale central limit theorem. Instead, we adopt a recently developed change-of-measure technique by \citet{YakirBook2013} for scan statistics, which are capable of dealing with the more challenging situation here.

Our contribution also includes: (1) obtaining a closed-form variance estimator, which allows easy calculation of the scan $B$-statistics; (2) further improving the accuracy of our approximations by taking into account the skewness of the kernel-based statistics. The accuracy of our approximations is validated by numerical examples. Finally, we demonstrate the good performance of our method using real-data, including speech and human activity data.

\subsection{Related Work}

Classic parametric approaches for change-point detection can be found in \cite{SiegmundBook1985,Tartakovsky14}. There has been an array of nonparametric change-point detection methods. Notable non-parametric schemes for change-point detection include \cite{GordonPollak94,Picard85}, which are designed for scalar observations and not suitable for vector observations. \cite{Brodsky13} provide a comprehensive introduction to the methodologies and applications of nonparametric change-point detection. \cite{Bibinger17} construct a nonparametric minimax-optimal test to discriminate continuous paths with volatility jumps and prove weak convergence of the test statistic to an extreme value distribution.
In the online setting, \cite{Kifer2004} present a meta-algorithm which compares data in some ``reference window'' to the data in the current window, using empirical distance measures that are not kernel-based; \cite{online-kernel-changepoint2005} detect abrupt changes by comparing two sets of descriptors extracted online from the signal at each time instant: the immediate past set and the immediate future set, and then use a soft margin single-class support vector machine  to build a dissimilarity measure in the feature space between those sets without estimating densities as an intermediate step,  which is asymptotically equivalent to the Fisher ratio  in the Gaussian case; \cite{density-ratio2013} present a density-ratio estimation method to detect change-points, fitting the density-ratio using a non-parametric Gaussian kernel model, whose parameters are updated online via stochastic gradient descent approach. Another important branch of nonparametric change-point detection method is based on Kolmogorov-Smirnov test, in \citet{Massey51,Lilliefors67}, which has been used in  \citet{Wang14}. The generalization of Kolmogorov-Smirnov test from the univariate setting to the multi-dimensional setting is given by \citet{Fasano87}, which, however, is less convenient to use than the kernel-based statistic test.

Seminal works by \cite{CsorgHorv88} study kernel based $U$-statistic for change-point detection. They show that the statistic indexed by the assumed change-point location parameter $\tau$, after proper standardization and rescaling of time and magnitude, converges in distribution to a Gaussian process under the null, and converges to a deterministic path in probability under the alternative distribution when the number of samples goes to infinity. These results are useful for bounding the detection statistics under the null with high-probability (hence, controlling the false detection), and for studying the consistency of tests. 
\cite{CsorgHorv97} and \cite{Serfling2009} contain comprehensive discussions on asymptotic theory of nonparametric statistics including $U$-statistics. Our scan $B$-statistic can also be viewed as a form of $U$-statistic using an appropriate definition of the kernel. The main differences between these classic works from our proposed scan $B$-statistic are: (1) our statistic uses $B$-test block decomposition and averaging to make the test statistic more computationally efficient; (2) our statistic is more challenging to analyze due to the block structure and correlation introduced by scan 
statistics; (3) our analytical approach is different: \cite{CsorgHorv88} leverage invariance principle to establish convergence of the entire sample path; we focus on characterizing the tail probability of the statistic under the null and use the change-of-measure technique to achieve good approximation accuracy. 

Other existing works that also focus on establishing asymptotic distribution of the detection statistic under the null for controlling the false alarm rate include the following: 
\citet{Zaid2008} present a maximum kernel Fisher discriminant ratio statistic and study its asymptotic null distribution;
\citet{Dehling15} investigate the two-sample test $U$-statistic for dependent data. Our approach is different from above in that we focus on directly approximating the tail of the detection statistic under the null, rather than trying to obtain its asymptotic distribution. Moreover, traditional analyses are usually done for offline change-point detection, while our analytical framework based on change-of-measure can be applied to both offline and online change-point detection. 

Change-point detection problems are related to the classical statistical two-sample test. However, they are usually more challenging than the two-sample test because the change-point location $\tau$ is unknown. Hence, when forming the detection statistic, one has to ``take the maximum'' of the detection statistics. The statistics being maxed over are usually highly correlated since they are computed using overlapping data. 

Our techniques for approximating false alarm rates differ from large-deviation techniques in \cite{dembo09}, which establish exponential rate by which the probability converges to zero. In certain scenarios, the first-order approximation obtained from large-deviation techniques may not be sufficient for choosing threshold. Our method provides more refined approximations to include polynomial terms and constants. 

Finally, there are also works taking different approaches rather than hypothesis test for change-point detection. For instance, \cite{HarchaouiCap07} develop a kernel-based multiple-change-point detection approach, where the optimal location to segment the data is obtained by dynamic programming; \cite{ArlotCelisseHarchaoui12} estimates multiple change-points by developing a kernelized linear model, and they provide a non-asymptotic oracle inequality for the estimation error. In the offline setting, 
\cite{LiangKME2014} study a problem when there are $s$ anomalous sequences out of $n$ sequences to be detected, and the test statistic is constructed using MMD; \cite{Matteson14} propose a nonparametric approach based on $U$-statistics and adopt the hierarchical clustering, which is capable of consistently estimating an unknown number of multiple change-point locations; \cite{Zou14} propose a nonparametric maximum likelihood approach, with the number of change-points determined from the Bayesian information criterion (BIC) and the locations of the change-points estimated via dynamic programming.

Our notations are standard. Let $I_k$ denote the identity matrix of size $k$-by-$k$. Let $\mathbb E[{\cal A}; {\cal B}] = \mathbb E [{\cal A} \textbf{1}_{\cal{B}}]$ denote the expectation conditioned on event $\cal{B}$, where $\textbf{1}_{\cal{B}}$ represents the indicator function that takes value 1 when the event $\cal{B}$ happens and takes value 0, otherwise. Let ${\rm Var}(\cdot)$ and ${\rm Cov}(\cdot)$ denote the variance and the covariance. Let ${\bf 0}$ and ${\bf e}$ denote vectors of all zeros and all ones, respectively. Let $[\Sigma]_{ij}$ denote the $ij$-th element of a matrix $\Sigma$. In Section \ref{sec:theory}, $\mathbb{E}_B$, ${\rm Var}_B$, and ${\rm Cov}_B$ denote the values computed under the new probability measure $\mathbb{P}_B$ after the change-of-measure, where $B$ is the block size. Similarly,  in Section \ref{sec:online}, $\mathbb{E}_t$, ${\rm Var}_t$, and ${\rm Cov}_t$ denote the values obtained under the new probability measure $\mathbb{P}_t$ after the change-of-measure, where $t$ is the time index.  

\section{BACKGROUND} 

We first briefly review the reproducing kernel Hilbert space (RKHS) and the \textit{maximum mean discrepancy} (MMD).  A RKHS $\mathcal F$ on $\mathcal X$ with a kernel $k(x,x')$ is a Hilbert space of
functions $f(\cdot):\mathcal X \mapsto \mathbb R$ equipped with inner product $\langle\cdot, \cdot\rangle_{\mathcal F}$. Its element $k(x,\cdot)$ satisfies the reproducing property:
$\langle f(\cdot), k(x, \cdot)\rangle_{\mathcal F} = f(x)$, and consequently, $\langle k(x,\cdot), k(x', \cdot)\rangle_{\mathcal F} = k(x,x')$,
meaning that we can view the evaluation of a function $f$ at any point $x\in\mathcal X$ as an inner product. Commonly used RKHS kernel functions include the Gaussian radial basis function (RBF) $k(x,x') = \exp(-\|x-x'\|^2/2\sigma^2)$, where $\sigma>0$ is the kernel bandwidth, and polynomial kernel $k(x,x')=(\langle x, x'\rangle + a)^d$, where $a>0$ and $d \in \mathbb N$ (see \citet{SchSmo02}). RKHS kernels can also be defined for sequences, graph and other structured object (see \citet{SchTsuVer04}). In this paper, if not otherwise stated, we will assume that Gaussian RBF kernel is used. 

Assume there are two sets $X$ and $Y$, each with $n$ samples taking value on a general domain $\mathcal{X}$, where $X=\{x_1, x_2, \dots, x_n\}$ are {\it i.i.d.} ~with a distribution $P$, and $Y=\{y_1, y_2, \dots, y_n\}$ are {\it i.i.d.} ~with a distribution $Q$. 
The MMD is defined as~\citep{gretton2012kernel} 
\[
\mbox{MMD} [\mathcal{F}, P, Q] := \sup_{f\in \mathcal{F}} \left\{ \mathbb{E}_{X\sim P} [ f(X) ]-\mathbb{E}_{Y\sim Q} [f(Y)]\right\}.
\]
An unbiased estimator of $\mbox{MMD}^2$ can be obtained using $U$-statistic~\citep{gretton2012kernel}
\begin{equation}
\mbox{MMD}_u^2[\mathcal F, X, Y] =\frac{1}{n(n-1)} \sum_{i \neq j}^n h(x_i, x_j, y_i, y_j),
\label{MMD_u}
\end{equation}
where $h(\cdot)$ is the kernel for $U$-statistic and it can be defined using an RKHS kernel as
\begin{equation}
h(x_i, x_j, y_i, y_j)=k(x_i,x_j)+k(y_i,y_j)-k(x_i,y_j)-k (x_j,y_i). \label{kernel-to-kernel}
\end{equation}
Intuitively, the empirical test statistic $\mbox{MMD}_u^2$ is expected to be small (close to zero) if $P=Q$, and large if $P$ and $Q$ are ``far'' apart. The complexity for evaluating $\mbox{MMD}_u^2$ is $\mathcal{O}(n^2)$, since we have to form the so-called Gram matrix for the data, which is of size $n$-by-$n$. Under the null hypothesis, $P=Q$, the $U$-statistic is degenerate and has the same distribution as an infinite sum of Chi-square variables.  

To improve computational efficiency, an alternative approach to eatimate $\mbox{MMD}^2$, called the $B$-test, is presented by~\citep{B-test2013}. The key idea is to partition the $n$ samples from $P$ and $Q$ into $N$ non-overlapping blocks, $X_1,\ldots,X_N$ and $Y_1,\ldots,Y_N$, each of size $B$. Then one computes $\mbox{MMD}_u^2[\mathcal F,X_i,Y_i]$ for each pair of blocks and takes an average:
\[
\mbox{MMD}_B^2[\mathcal F, X,Y]=\frac{1}{N}\sum_{i=1}^N \mbox{MMD}_u^2[\mathcal F,X_i,Y_i]. 
\]
Since $B$ is constant and $N$ is on the order of $\mathcal{O}(n)$, the computational complexity of $\mbox{MMD}_B^2[\mathcal F,X,Y]$ is $\mathcal{O}(nB^2)$, which is significantly lower than the $\mathcal{O} (n^2)$ complexity of $\mbox{MMD}_u^2[\mathcal F,X,Y]$. Furthermore, by averaging $\mbox{MMD}_u^2[\mathcal F,X_i,Y_i]$ over blocks, when blocks are independent, the $B$-test statistic is asymptotically normal under the null using central limit theorem. This property allows a simple threshold to be derived for the B-test.  

\section{SCAN $B$-STATISTICS}

Now we present our change-point detection procedure based on \textit{scan $B$-statistic}. Consider a sequence of data $\{\ldots, x_{-2}, x_{-1}, x_0, x_1, \dots, x_t\}$, each taking value on a general domain $\mathcal X$. Let $\{\ldots, x_{-2}, x_{-1}, x_0 \}$ denote the reference data that we know to follow a given pre-change distribution. Assume there is a large amount of reference data. 

In offline change-point detection, the number of samples is fixed, and our goal is to detect the {\it existence} of a change-point $\tau$, such that before the change-point, the samples are {\it i.i.d.}~with a distribution $P$, and after the change-point, the samples are {\it i.i.d.}~with a different distribution $Q$. The location $\tau$ where the change-point occurs is unknown. In other words, we are concerned with testing the null hypothesis
\begin{align*}
H_0: {x}_i \sim P, \quad i = 1, \dots, t,
\end{align*}
against the single change-point alternative
\begin{align*}
H_1: \exists 1 \leq \tau < t \quad {x}_i \sim \begin{cases}
Q, & i > \tau \\
P, & \rm{otherwise}.
\end{cases}
\end{align*}
Note that we are interested in the case of a sustained change: before the change, all samples follow one distribution, and after the change, all samples follow another distribution and never switch back. In online change-point detection, the number of samples is not fixed, and the goal is to detect the {\it emergence} of a change-point as quickly as possible. In various change-point detection settings, the number of post-change samples is small, but the number of reference samples is large. Therefore, when constructing MMD statistics over blocks, we will use a common post-change block and multiple disjoint pre-change reference blocks.

\begin{figure}[t!]
        \begin{center}
        \small
        \begin{tabular}{cc}
                \includegraphics[width=.48\textwidth]{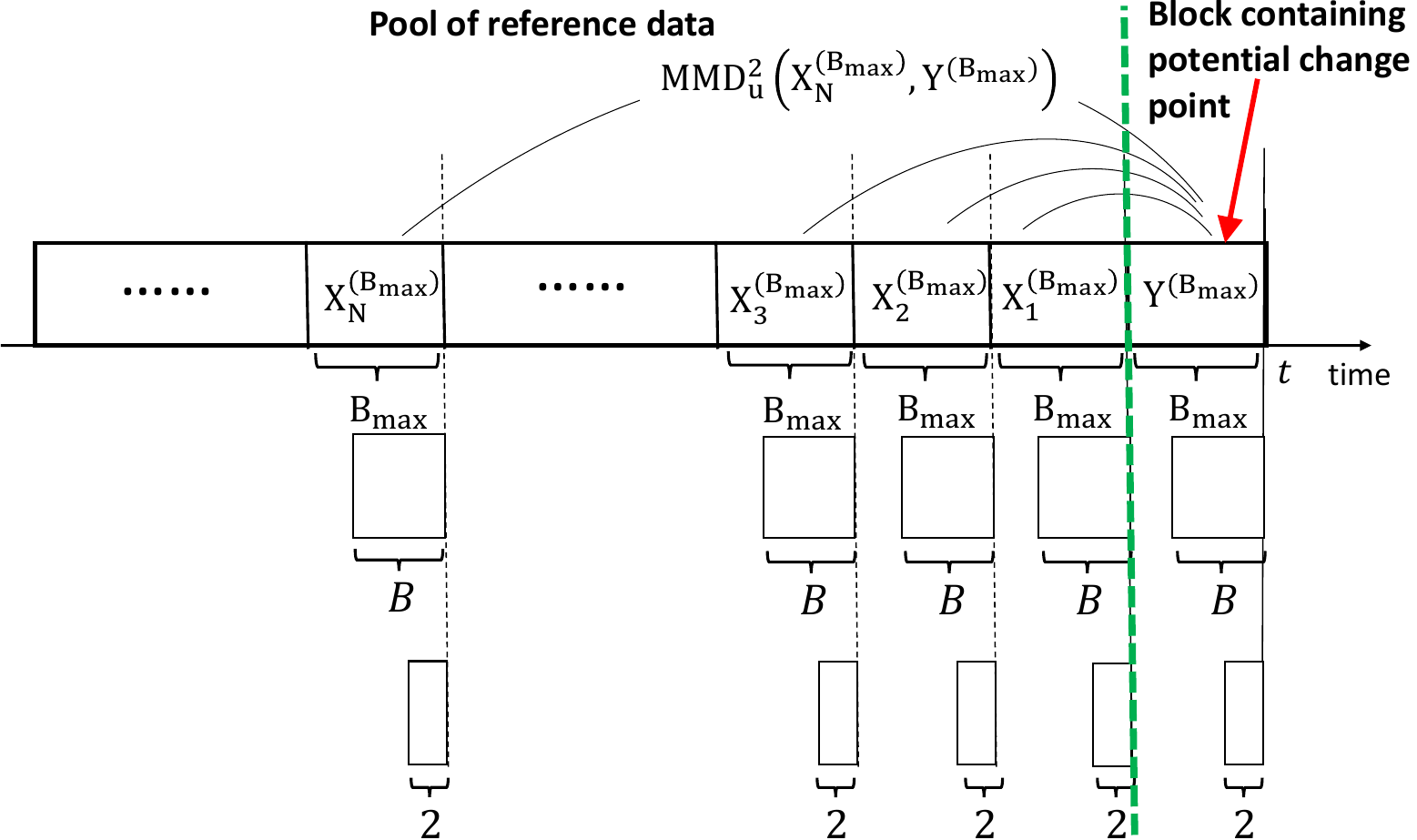} & 
                \includegraphics[width=.48\textwidth]{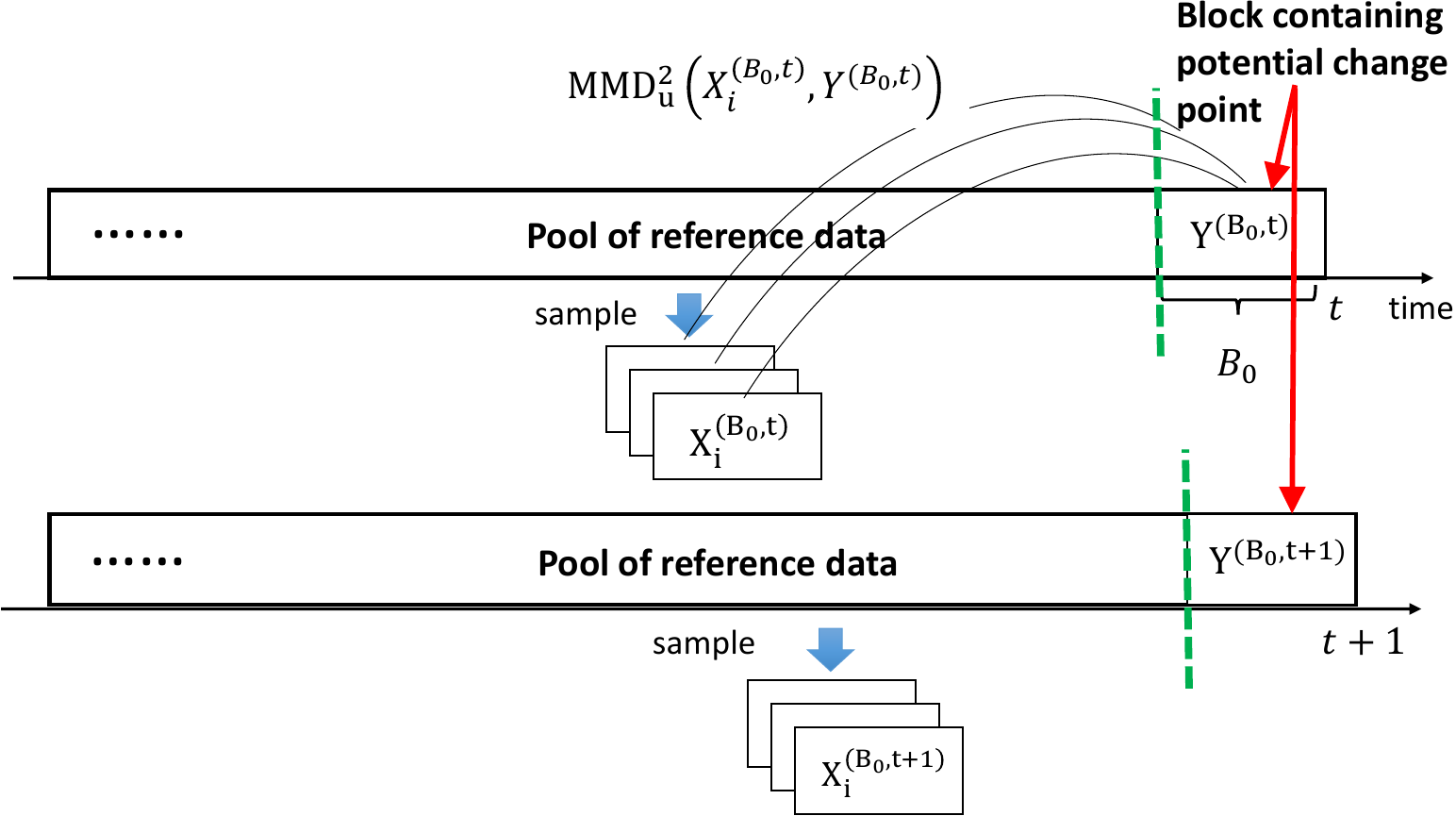}\\
                (a): offline & (b): sequential 
        \end{tabular}
        \end{center}
        \caption{Illustration of (a) offline change-point detection: data are initially split into blocks of size $B_{\rm max}$; we select data from each block to form smaller sub-blocks of various size $B$, $2\leq B\leq B_{\rm max}$; (b) online change-point detection: the most recent $B_0$ samples constitute the test block, which is constantly updated with new data; the reference blocks of the same size $B_0$ are sampled from the reference pool of data. }
        \label{model}                               
\end{figure}

\subsection{Offline Change-Point Detection}

For each possible change location $\tau$, the post-change block consists of the most recent samples indexed from $\tau$ to $t$. Since we do not know the change-point location, we scan all possible change-point locations $\tau$. This corresponds to considering a range of post-change block sizes $B$ ranging from two (i.e., the most recent two samples are post-change samples) to $B_{\rm max}$. Here, we exclude $B = 1$ because the corresponding MMD is unable to compute.

The detection statistic is constructed as follows, also illustrated in Figure \ref{model}(a). Data are split into $N$ reference blocks and one test block, each block is size  of $B_{\rm max}$. Then we select data from each block to form smaller sub-blocks of various size $B$, $2\leq B\leq B_{\rm max}$. The reference blocks are denoted as $X_i^{(B)}$, $i = 1, \ldots, N$, and the test block as $Y^{(B)}$. We compute $\mbox{MMD}_u^2$ for each reference sub-block with respect to the {\it common} post-change block, and take an average: 
\begin{equation}
\begin{split}
Z_B &= \frac{1}{N} \sum_{i=1}^N \mbox{MMD}_u^2 (X_i^{(B)},Y^{(B)}). 
      \end{split}
      \label{def_MMD_stats} 
\end{equation}

Since the estimator $\mbox{MMD}_u^2$ is unbiased, under the null hypothesis $P = Q$, $\mathbb{E} [Z_B]=0$. Let ${\rm Var} [Z_B]$ denote the variance of $Z_B$ under the null. The variance of $Z_B$ depends on the block size $B$ and the number of blocks $N$. To have a fair comparison, we normalize each $Z_B$ by their standard deviation
 \[
 Z_B' = Z_B/(\mbox{Var}[Z_B])^{1/2}, \]
and take the maximum over all $B$ to form the {\it offline scan $B$-statistic}. The variance ${\rm Var} [ Z_B ]$ is given in Lemma \ref{thm:variance}. The closed-form expression facilitates the estimation of the variance of the statistic.  A change-point is detected whenever the offline scan $B$-statistic exceeds a pre-specified threshold $b$:
\begin{equation}
\max_{2\leq B\leq B_{\rm max}} Z_B' > b. \quad \{\mbox{offline change-point detection}\} \label{scan-stats}
\end{equation}

\subsection{Online Change-Point Detection} 

In the online setting, new samples sequentially and we constantly test whether the incoming samples come from a different distribution. To reduce computational burden, in the online setting, we fix the block-size and adopt a {\it sliding window} approach. The resulted sliding window procedure can be viewed as a type of Shewhart chart by \citet{Shewhart39}. 

The detection statistic is constructed as follows, also illustrated in Figure \ref{model}(b). At each time $t$, we treat the most recent $B_0$ samples as the post-change block. In online change-point detection, we want to detect the change as quickly as possible. Hence, typically we will not wait till collecting many post-change samples. On the other hand, there is a large amount of reference data. To utilize data efficiently, we utilize a common test block consisting of the most recent samples to form the statistic with $N$ different reference blocks.
The reference blocks are formed by taking $NB_0$ samples without replacement from the reference pool.  
We compute $\mbox{MMD}_u^2$ between each reference block with respect to the common post-change block, and take an average:
\begin{equation}
Z_{B_0, t} = \frac{1}{N} \sum_{i=1}^N \mbox{MMD}_u^2 (X_{i}^{(B_0, t)},Y^{(B_0, t)}), \label{B_stats}
\end{equation}
where $B_0$ is the fixed block-size, $X_{i}^{(B_0, t)}$ is the $i$-th reference block at time $t$, and $Y^{(B_0, t)}$ is the the post-change block at time $t$. When there are new samples, we append them to the post-change block and purge the oldest samples. We show later that this construction allows for an explicit characterization of the false-alarm rate.
We divide each statistic by its standard deviation to form the {\it online scan $B$-statistic}: \[Z_{B_0,t}' = Z_{B_0, t}/(\mbox{Var}[Z_{B_0, t}])^{1/2}.\] The calculation of ${\rm Var}[Z_{B_0, t}]$ can also be achieved using Lemma \ref{thm:variance}.  The online change-point detection procedure is a stopping time: an alarm is raised whenever the detection statistic exceeds a pre-specified threshold $b>0$:
\begin{equation}
T = \inf \{t: Z'_{B_0, t} > b\}.
\quad \{\mbox{online change-point detection}\}
\label{stopping-time}
\end{equation}

The online scan $B$-statistic can be computed efficiently. 
Note that the variance of the $Z_{B_0, t}$ only depends on the block size $B_0$ but is independent of $t$. Hence, it can be pre-computed. Moreover, there is a simple way to compute the online $B$-statistic recursively, as specified in Appendix \ref{recursive}.

 \subsection{Analytical Expression for ${\rm Var}[Z_B]$}  
We obtain an analytical expression for ${\rm Var}[Z_B]$, which is useful when forming the detection statistic in (\ref{scan-stats}) and (\ref{stopping-time}). 
 \begin{lemma}[Variance of $Z_B$ under the null]\label{thm:variance}
Given block size $B\geq 2$ and the number of blocks $N$, under the null hypothesis, 
\begin{equation}
\mbox{\rm Var} [Z_B ]={\binom{B}{2}}^{-1}\left(   \frac{1}{N} \mathbb{E} [h^2(x, x',y,y')]  +\frac{N-1}{N} {\rm Cov} \left[ h(x,x',y,y'),h(x'',x''', y, y')   \right] \right),
\label{var_expr1}
\end{equation}
where $x,\, x',\,x'',\,x''',\, y,$ and $y'$ are {\it i.i.d.}~random variables with the null distribution $P$. 
\end{lemma}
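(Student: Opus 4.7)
Proof proposal.

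The plan is to decompose $Z_B$ as the average of $N$ two-sample $U$-statistics and compute its variance by counting contributions from diagonal and off-diagonal pairs of reference blocks. Concretely, write $Z_B=\frac{1}{N}\sum_{i=1}^{N}U_i$ with $U_i=\mbox{MMD}_u^2(X_i^{(B)},Y^{(B)})$. Because $h$ is symmetric under the swap $(x,y)\!\leftrightarrow\!(x',y')$, each $U_i$ is a standard $U$-statistic of order two on the \iid{} pairs $(X_{i,j}^{(B)},Y_j^{(B)})_{j=1}^{B}$ with kernel $\tilde h((x,y),(x',y'))=h(x,x',y,y')$. Then
\[
{\rm Var}[Z_B]=\frac{1}{N}{\rm Var}[U_1]+\frac{N-1}{N}{\rm Cov}[U_1,U_2].
\]

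Step 1 (diagonal term). Under $H_0$ with $P=Q$, a direct computation gives $\mathbb{E}[h(x,x',y,y')]=0$ and, moreover, $\mathbb{E}[h(x,x',y,y')\mid (x,y)]=\mu(x)+\mu(y)-\mu(y)-\mu(x)=0$, where $\mu(z)=\mathbb{E}_{Z\sim P}[k(z,Z)]$. Hence the $U$-statistic is first-order degenerate ($\zeta_1=0$) and the classical variance formula (Serfling, Ch.~5) collapses to ${\rm Var}[U_i]=\binom{B}{2}^{-1}\zeta_2=\binom{B}{2}^{-1}\mathbb{E}[h^2(x,x',y,y')]$.

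Step 2 (off-diagonal term). For $i\neq i'$, the blocks $X_i^{(B)}$ and $X_{i'}^{(B)}$ are independent but the test block $Y^{(B)}$ is shared, so I expand
\[
{\rm Cov}[U_1,U_2]=\binom{B}{2}^{-2}\!\!\sum_{j<l,\,j'<l'}\!{\rm Cov}\!\bigl[h(X_{1,j},X_{1,l},Y_j,Y_l),\,h(X_{2,j'},X_{2,l'},Y_{j'},Y_{l'})\bigr],
\]
and split by the cardinality of $\{j,l\}\cap\{j',l'\}$. When the two $Y$-index sets are disjoint, the outer expectation of the product splits across independent $Y$-coordinates and an application of $\mathbb{E}[h]=0$ kills the term. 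When exactly one index is shared, condition on that common $Y$-coordinate and integrate out the rest; the conditional expectation $\mathbb{E}[h(X,X',y,Y')\mid y]$ reduces to a sum of two canceling pairs (as in the degeneracy check above), so these terms also vanish. Only the diagonal case $\{j,l\}=\{j',l'\}$ survives, contributing exactly ${\rm Cov}[h(x,x',y,y'),h(x'',x''',y,y')]$ per pair, and there are $\binom{B}{2}$ such pairs, giving ${\rm Cov}[U_1,U_2]=\binom{B}{2}^{-1}{\rm Cov}[h(x,x',y,y'),h(x'',x''',y,y')]$.

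Step 3. Plugging the two pieces back in yields the claimed formula (\ref{var_expr1}). The main obstacle is the careful bookkeeping in Step 2: one must verify that both the disjoint and partially-overlapping index configurations give zero contribution under the null, which requires using the degeneracy $\mathbb{E}[h\mid(x,y)]=0$ rather than just $\mathbb{E}[h]=0$, and then correctly counting the surviving diagonal configurations so that the $\binom{B}{2}^{-2}$ prefactor collapses to $\binom{B}{2}^{-1}$.
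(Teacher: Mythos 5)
Your proposal is correct and follows essentially the same route as the paper: the paper likewise writes ${\rm Var}[Z_B]=\frac{1}{N}{\rm Var}[U_1]+\frac{N-1}{N}{\rm Cov}[U_1,U_2]$, establishes ${\rm Var}[U_i]=\binom{B}{2}^{-1}\mathbb{E}[h^2]$ via the classical $U$-statistic variance formula with the first-order degeneracy $\mathbb{E}[h\mid(x,y)]=0$ under the null, and shows ${\rm Cov}[U_1,U_2]=\binom{B}{2}^{-1}{\rm Cov}[h(x,x',y,y'),h(x'',x''',y,y')]$ by the same index-overlap bookkeeping (in the paper these are the separate Lemmas~\ref{Lemma:var_b_test} and~\ref{Lemma:cov}). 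No gaps.
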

The lemma is proved by making a connection between $\mbox{MMD}_u^2$ and $U$-statistic in \citet{Serfling2009} and utilizing the properties of $U$-statistic. A detailed proof is provided in Appendix \ref{sec:var}.

\subsection{Examples of Detection Statistics}\label{sec:eg}

Below, we present a few examples to demonstrate that the $B$-statistics is quite robust in various settings with different distributions.

\vspace{.1in}
\noindent {\bf Gaussian to Gaussian mixture.} 
In Figure \ref {offlinestat}(a), $P = \mathcal{N}(0, I_2)$, $Q$ is a mixture Gaussians: $0.3 \mathcal{N}(0,I_
2)+0.7 \mathcal{N}(0, 0.1I_2)$, and $\tau = 250$. The online procedure stops at time 270 meaning the change is detected with a small delay of 20 unit time.  

\vspace{.1in}
\noindent {\bf Sequence of graphs.} In Figure \ref {offlinestat}(b), we consider detecting the emergence of a community inside a network, which modeled using a stochastic block model, as considered by \citet{marangoni2015sequential}. Assume that before the change, each sample is a realization of an Erd\H os-R\'enyi random graph, with the probability of forming an edge $p_0=0.1$ uniformly across the graph. After the change, a ``community'' emerges, which is a subset of nodes, where the edges are formed in between these nodes with much higher probability $p_1=0.3$. The post-change distribution models a community where the members of the community interact more often. Our online procedure stops at time 102, meaning the change is detected with a small delay of 2 unit times.

\begin{figure}[h!]
        \begin{center}
        \begin{tabular}{ccc}
                     \includegraphics[width=4cm,height=4cm]{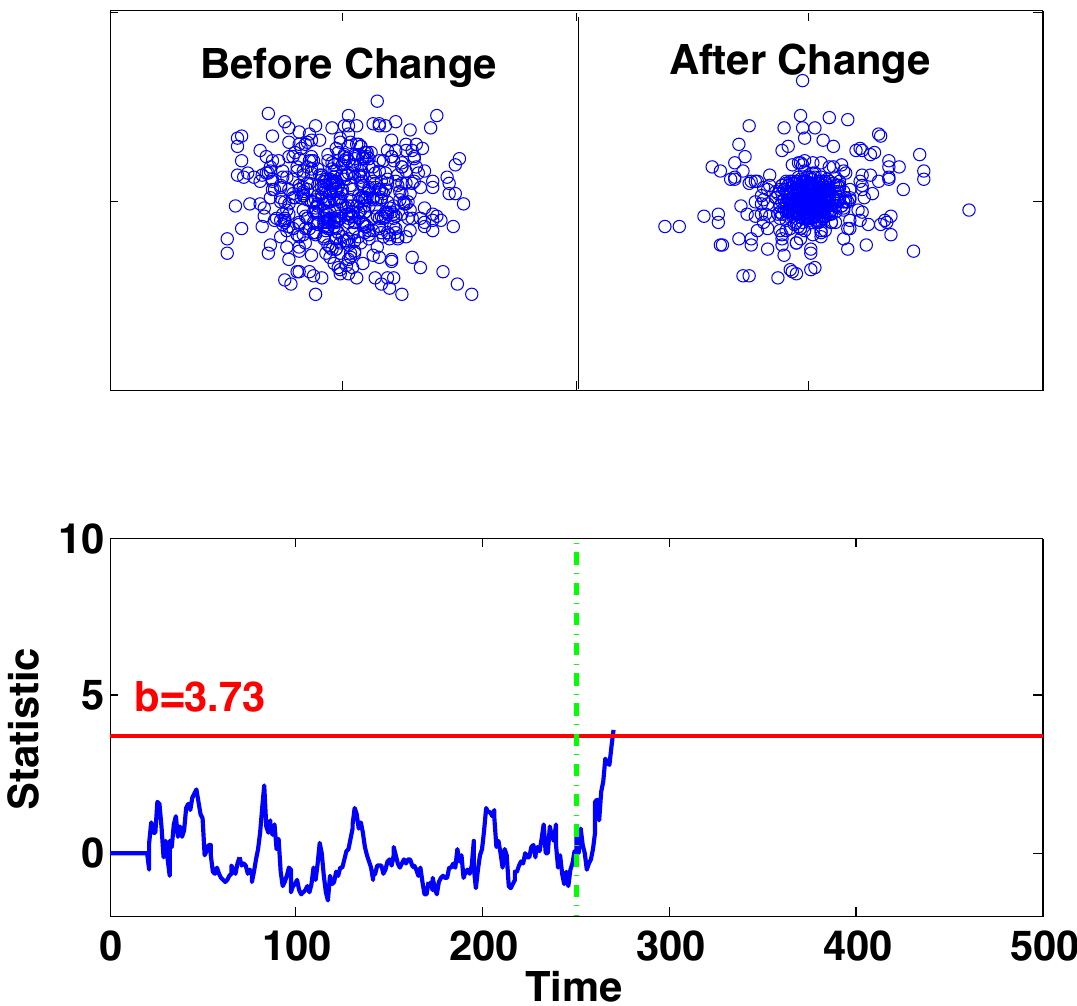}
                     &
\includegraphics[width=4cm,height=4cm]{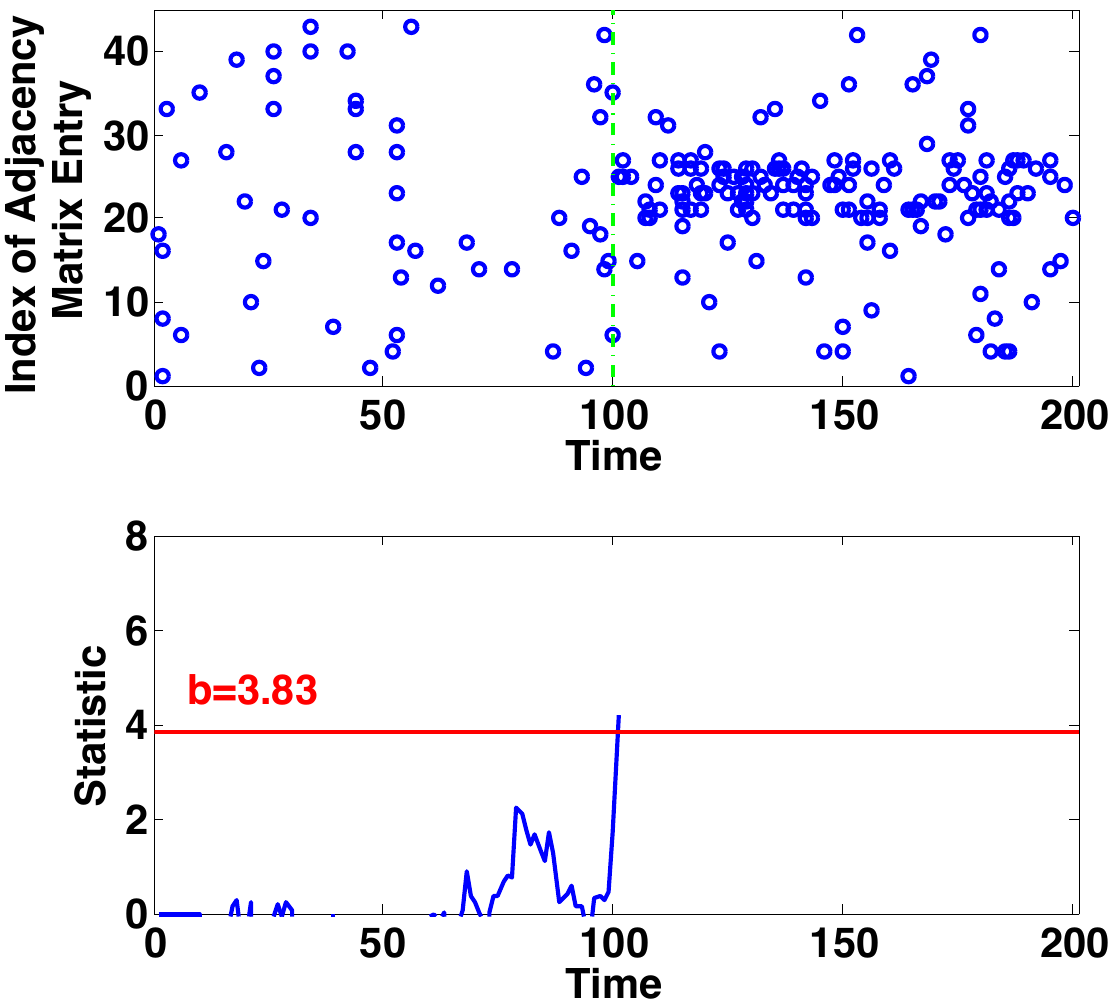}
                      & \includegraphics[width=4cm,height=4cm]{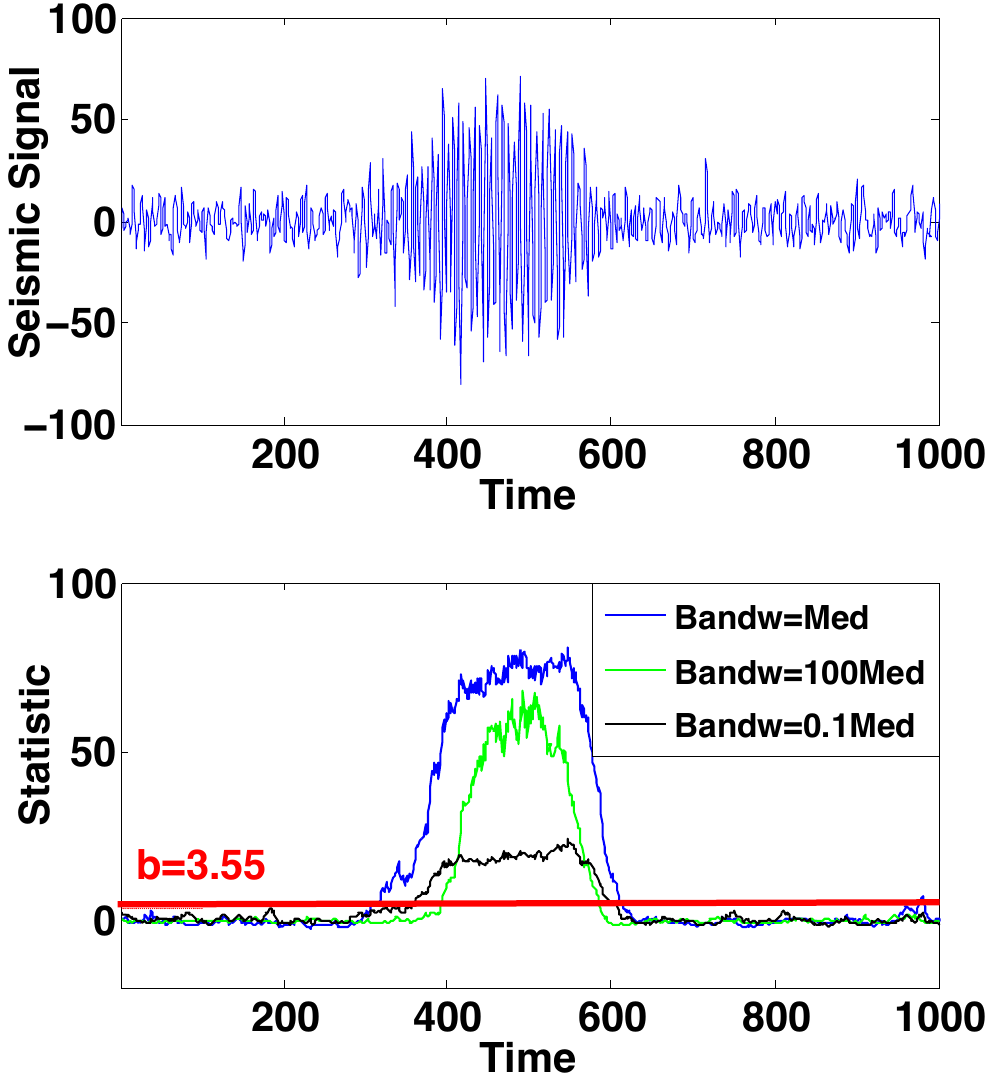}
                     \\
                     (a): Gaussian to GMM, $\tau = 250$  & (b) Graphs, $\tau=100$ & (c): Real seismic signal
                     \end{tabular}
                     \end{center}
                \caption{Examples of scan $B$-statistics with $N = 5$. All thresholds (red lines) are theoretical values obtained from Theorem \ref{thm:tail_fixedsample} (offline) and Theorem \ref{thm:ARL} (online). 
                }
                \label{offlinestat}
 \end{figure}

\vspace{.1in}
\noindent {\bf Real seismic signal; effect of kernel bandwidth.} In Figure \ref {offlinestat}(c), we consider a segment of real seismic signal that contains a change-point. Using the seismic signal, we illustrate the effect of different kernel bandwidth. For Gaussian RBF kernel $k(Y, Y')=\mbox{exp} \left(-\|Y-Y'\|^2/2 \sigma^2  \right)$,  the kernel bandwidth $\sigma >0$ is typically chosen using a ``median trick'' in \citet{scholkopf2001learning,ramdas2015decreasing}, where $\sigma$ is set to be the median of the pairwise distances between data points. 

\section{THEORETICAL APPROXIMATIONS}
\subsection{Theoretical Approximation for Significance Level of Offline Scan $B$-Statistic}\label{sec:theory}

In the offline setting, the choice of the threshold $b$ involves a tradeoff between two standard performance metrics: (1) significance level (SL), which is the probability that the statistic exceeds the threshold $b$ when the null hypothesis is true (i.e., when there is no change); and (2) power, which is the probability of the statistic exceeds the threshold when the alternative hypothesis is true.

We present an accurate approximation to the SL of the offline scan $B$-statistic, assuming the detection threshold $b$ tends to infinity and the number of blocks $N$ is fixed. The following theorem is our main result. 
 \begin{theorem}[SL of offline scan $B$-statistic]\label{thm:tail_fixedsample}
When $b\rightarrow \infty$,  and $B_{\rm max}\rightarrow \infty$, with $b/(B_{\rm max})^{1/2}$ held as a fixed positive constant, 
the significance level of the offline $B$-statistic defined in (\ref{scan-stats}) is given by
\begin{equation}
\mathbb{P} \left \{  \max_{2\leq B \leq B_{\rm max}}Z_B' > b    \right\}
=  b e^ {-\frac{1}{2}  b^2} \cdot \sum_{B=2} ^{B_{\rm max} }  
\frac{(2B-1)}{2\sqrt{2 \pi} B(B-1)}  \nu \left(  b \sqrt{\frac{2B-1}{B(B-1)}} \right)  \cdot \left[1 + o(1) \right],
\label{tail_offline}
\end{equation}
where the special function 
\begin{align}\label{eq:fucnu}
\nu (\mu) \approx \frac{(2/\mu) (\Phi(\mu/2)-0.5) }{(\mu/2)\Phi(\mu/2)+ \phi (\mu/2)},
\end{align}
$\phi(x)$ and $\Phi(x)$ are the probability density function and the cumulative distribution function of the standard normal distribution, respectively.
 \end{theorem}

Although the approximation (\ref{tail_offline}) is derived in the asymptotic regime and under the assumption that the collection of random variables $\{Z_B'\}_{B = 2,\dots, B_{\rm max}}$ form a Gaussian random field, we can show numerically that (\ref{tail_offline}) is quite accurate in the non-asymptotic regime. Consider synthetic data that are {\it i.i.d.}~normal $P=\mathcal{N}(0, I_{20})$. We set $B_{\rm max}$ to be 50, 100, 150, and in each case, $N=5$. We compare the thresholds obtained by (\ref{tail_offline}) and by simulation, for a prescribed SL $\alpha$. To obtain threshold by simulation, we generate Monte Carlo trials for offline $B$-statistics and find the $(1-\alpha)$-quantile as the estimated threshold. Table \ref{Tablestat_simu} shows that for various choices of $B_{\rm max}$, the thresholds predicted by Theorem \ref{thm:tail_fixedsample} match quite well with those obtained by simulation. The accuracy can be further improved for smaller $\alpha$ values by skewness correction as shown in Section \ref{sec:skewness}.

\begin{table}[H]
\begin{center}
\caption{Thresholds for the offline scan $B$-statistics using {\bf synthetic data}, obtained by simulation, theory (Theorem \ref{thm:tail_fixedsample}), and theory with Skewness Correction (Section \ref{sec:skewness}), respectively, for various SL values $\alpha$. } 
\begin{tabular}{|c|ccc|ccc|ccc|}
   \hline
    \multirow{2}{*}{$\alpha$} &
      \multicolumn{3}{c}{$B_{\rm max} = 50$} &
      \multicolumn{3}{|c}{$B_{\rm max}= 100$} &
      \multicolumn{3}{|c|}{$B_{\rm max} = 150$} \\
      & {$b$ (sim)} & {$b$ (theory)} &{$b$ (SC)} & {$b$ (sim)} & {$b$ (theory)} &{$b$ (SC)} & {$b$ (sim)}  &{$b$ (theory)}&{$b$ (SC)}  \\
      \hline
  0.10 & 2.41&{\bf2.38}&2.57 & 2.43 &{\bf2.50}&2.76 & 2.53 &{\bf2.56}&2.89\\ \hline
  0.05 & 2.77 &{\bf2.67}&2.97 & 2.76 &{\bf2.78} &3.17 &2.97 &{\bf2.83} &3.22\\ \hline
  0.01 & 3.54 &3.23 &{\bf3.64} &3.47 & {\bf3.32} &3.82 & 3.64 & 3.37&{\bf3.89} \\ \hline
  \end{tabular}
  \vspace{-5 mm}
  \label{Tablestat_simu}
  \end{center}
\end{table}


The complete proof of Theorem \ref{thm:tail_fixedsample} can be found in Appendix \ref{proof_main_theorem}, which leverages the change-of-measure technique. In a nutshell, we aim to find the probability of a rare event: under null the distribution, the boundary exceeding event $\left \{  \max_{2 \leq B \leq B_{\rm max}} Z_B'   > b    \right\}$ for a large threshold $b$ is rare (so that false alarm remains low). Since quantifying such a small probability is hard under the null distribution, we consider an alternative probability measure under which this boundary exceeding event happens with much higher probability. Under the new measure, one can use the local central limit theorem to a obtain an analytical expression for the probability. In the end, the original small probability will be related to the probability under the alternative measure using the Mill's ratio in \citet{YakirBook2013}.

The proof assumes the collection of random variables $\{Z_B'\}_{B = 2,\dots, B_{\rm max}}$ form a Gaussian random field (as an approximation). This means the finite-dimensional joint distributions of the collection of random variables are all Gaussian, and they are completely specified by the mean and the covariance functions, which we characterize below (this is useful for establishing Theorem \ref{thm:tail_fixedsample}). These results will be used when we quantify the tail probability of the scan $B$-statistics. 
Under the null distribution, the expectation $\mathbb{E}[Z_B']$ is zero due to the unbiased property of the MMD estimator. The covariance under the null distribution is given by the following lemma: 
\begin{lemma}[Covariance structure of $Z_B'$ in the offline setting]\label{lemma:cov_stats}
Under the null distribution, the covariance of $\{Z_B'\}_{B = 2,\dots, B_{\rm max}}$ is given by
 \begin{equation}
 r_{u, v}=  {\rm Cov} \left(  Z_u', Z_v'\right)=\sqrt{{\binom{u}{2}}{ \binom{v}{2}  }}\bigg/\binom{u \vee v }{2}, \quad 2\leq u, v \leq B_{\rm max},
 \label{r_def}
 \end{equation}
where $u \vee v = \max\{u, v\}$.
\end{lemma}
The proof can be found in Appendix \ref{App:covoffline}.

\subsection{Theoretical Approximation for ARL of Online Scan $B$-Statistic}\label{sec:online}

In the online setting, two commonly used performance metrics are (see, e.g., \citep{XieSiegmund2013}): (1) the average run length (ARL), which is the expected time before incorrectly announcing a change of distribution when none has occurred; (2) the expected detection delay (EDD), which is
the expected time to fire an alarm when 
a change occurs immediately at $\tau = 0$.  The EDD considers the worst case and provides
an upper bound on the expected delay to detect a change-point when the change occurs later in the sequence of observations. 

We present an accurate approximation to the ARL of online scan $B$-statistics. 
The approximation is quite useful in setting the threshold. As a result, given a target ARL, one can determine the corresponding threshold value $b$ from the analytical approximation, avoiding the more expensive numerical simulations. 
Our main result is the following theorem. 
\begin{theorem}[ARL in online scan $B$-statistic]\label{thm:ARL}
Let $B_0\geq 2$. When $b \to \infty$, the ARL of the stopping time $T$ defined in (\ref{stopping-time}) is given by 
\begin{equation}
\mathbb{E}[T] =  \frac{e^ {  b^2/2 }}{b} \cdot \left \{
\frac{ (2B_0-1)}{ \sqrt{2 \pi}  B_0(B_0-1)} \cdot \nu \left(  b \sqrt{\frac{2(2B_0-1)}{B_0(B_0-1)}} \right) \right \} ^{-1}\cdot \left[ 1 + o(1) \right].  
\label{ARL}
\end{equation}
\end{theorem}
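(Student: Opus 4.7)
The plan is to mirror the change-of-measure argument sketched for Theorem~\ref{thm:tail_fixedsample}, but now view $\{M_t\}_{t\ge B_0}$ as a stationary random field in time $t$ under the null. The first step is to reduce the ARL to the per-step level-crossing rate: for a stationary field with short-range dependence that crosses a high threshold $b$ rarely, excursions above $b$ form an approximate Poisson point process and the first passage time $T$ is asymptotically exponential, so $\mathbb{E}^\infty[T]\sim 1/\lambda_b$, where
\[
\lambda_b \;=\; \lim_{m\to\infty}\frac{1}{m}\,\mathbb{P}^\infty\!\left\{\max_{B_0\le t\le B_0+m-1}M_t>b\right\}.
\]
Establishing this reduction rigorously requires showing that once $M_t$ exceeds $b$ it returns below $b$ within $O(B_0)$ steps (a clumping estimate) and that well-separated clumps are asymptotically independent.

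Next, I would compute $\lambda_b$ by the Yakir--Siegmund likelihood ratio identity, exactly as outlined for Theorem~\ref{thm:tail_fixedsample}. Fix a window of length $m$, assign to each time $t$ in that window a tilted measure $f_t$ with parameter $\theta_t$ chosen so that $\mathbb{E}_t[M_t]=b$, and write
\[
\mathbb{P}^\infty\!\left\{\max_{t}M_t>b\right\}
\;=\;\sum_t\mathbb{E}_t\!\left[\frac{\mathbb{I}\{\max_s M_s>b\}}{\sum_s e^{\ell_s-\ell_t}}\right].
\]
Centering $\tilde\ell_t=\ell_t-b$, the large-deviation factor $e^{-b^2/2}$ falls out in the usual way; the remaining integral splits asymptotically into a one-dimensional overshoot integral in $\tilde\ell_t$ contributing the $\nu$ function, and a sum over the local field $\{M_{t+s}-M_t\}_{s\neq 0}$. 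The local field is governed by the online covariance in Lemma~\ref{lemma:cov_stats}, namely \eqref{r_def_2}: $r'_{u,u+s}=(1-s/B_0)(1-s/(B_0-1))$. For small $|s|$ this yields $\mathrm{Var}(M_{u+s}-M_u)\approx 2|s|(2B_0-1)/[B_0(B_0-1)]$, so the local field behaves like a two-sided random walk with this incremental variance. Because $s$ is two-sided, one picks up a factor of $2(2B_0-1)/[B_0(B_0-1)]$ inside $\nu$ (versus the one-sided $(2B-1)/[B(B-1)]$ in Theorem~\ref{thm:tail_fixedsample}, where the index $B$ effectively runs on one side only), and the prefactor loses the $1/2$ that appears in the offline formula. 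Invoking the localization theorem (Theorem~5.2 of \citep{YakirBook2013}) then gives
\[
\lambda_b \;=\; b^2 e^{-b^2/2}\cdot\frac{2B_0-1}{\sqrt{2\pi}\,B_0(B_0-1)}\,\nu\!\left(b\sqrt{\tfrac{2(2B_0-1)}{B_0(B_0-1)}}\right)(1+o(1)),
\]
and inverting yields \eqref{ARL}.

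The main obstacle will be the Poisson-clumping reduction $\mathbb{E}^\infty[T]\sim 1/\lambda_b$, because successive $M_t$'s are heavily correlated: when the sliding window advances by one sample, $B_0-1$ of the $B_0$ samples in the post-change block are shared between $M_t$ and $M_{t+1}$, and similarly for each reference block. I would control this by using the explicit covariance \eqref{r_def_2} to quantify the decay of temporal dependence, show that excursions above $b$ are localized to windows of length $O(B_0)$, and apply a Chen--Stein style Poisson approximation across well-separated windows. Secondary technical burdens are verifying the asymptotic independence of $\tilde\ell_t$ and the local field at rate $\sqrt{B_0}$, and controlling the Gaussian approximation error for the local field built from degenerate $U$-statistics; both can be handled by adapting the moment and Edgeworth bounds already used for Theorem~\ref{thm:tail_fixedsample}.
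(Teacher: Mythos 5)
Your proposal follows essentially the same route as the paper: reduce $\mathbb{E}^\infty[T]$ to the per-step boundary-crossing rate via the asymptotic exponentiality of $T$ (the paper cites \citep{GLR-changepoint1995, SiegmundYakirImage} where you invoke a Chen--Stein/Poisson-clumping argument), compute that rate with the Yakir--Siegmund likelihood-ratio identity and localization theorem using the online covariance (\ref{r_def_2}), and invert. The only quibble is your heuristic for why the argument of $\nu$ carries $2(2B_0-1)$ rather than the offline $(2B-1)$ --- both local fields are two-sided in the index; the factor of two actually arises because the online correlation is $\binom{B_0-s}{2}\big/\binom{B_0}{2}$ directly, whereas the offline correlation is a square root of a ratio of binomial coefficients, which halves the linearized decrement --- but since you correctly compute the increment variance from (\ref{r_def_2}) itself, the final formula is right.
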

The complete proof of Theorem \ref{thm:ARL} is given in Appendix \ref{theorem_ARL}.

We verify the accuracy of the approximation numerically, by comparing the thresholds obtained by Theorem \ref{thm:ARL} with those obtained from Monte Carlo simulation. Consider several cases of null distributions: standard normal
$\mathcal{N}(0, 1)$, exponential distribution with mean 1, Erd\H os-R\'enyi random graph with ten nodes and probability of 0.2 of forming random edges, as well as Laplace distribution with zero mean and unit variance. The simulation results are obtained from  5000 direct Monte Carlo trials. As shown in Figure \ref{fig:ARL},  the thresholds predicted by Theorem \ref{thm:ARL} are quite accurate.  
Figure \ref{fig:ARL} also demonstrated that theory is quite accurate for various block sizes (especially 
for larger $B_0$). However, we also note that theory tends to underestimate the thresholds. This is especially pronounced for small $B_0$, e.g., $B_0=50$. The accuracy of the theoretical results can be improved by skewness correction, shown by black lines in Figure \ref{fig:ARL}, which are discussed later in Section \ref{sec:skewness}.

Theorem \ref{thm:ARL} shows that ${\rm ARL}$ is $\mathcal{O}(e^{b^2})$ and, hence, $b$ is $\mathcal{O}((\log {\rm ARL})^{1/2})$. Note that EDD is typically on the order of $b/\Delta$ due to Wald's identity \citep{SiegmundBook1985}, where $\Delta$ is the Kullback-Leibler (KL) divergence between the null and the alternative distributions (a constant). Hence, given the desired ARL (typically on the order of 5000 or 10000), the error in the estimated threshold will only be translated linearly to EDD. This is a blessing since it means typically a reasonably accurate $b$ will cause little performance loss in EDD. Similarly, Theorem \ref{thm:tail_fixedsample} shows that ${\rm SL}$ is $\mathcal{O}(e^{-b^2})$ and a similar argument can be made for the offline case.

\begin{figure}[h!]
        \begin{center}
        \begin{tabular}{cc}
                 \includegraphics[
                 width=7cm,height=6cm
                 ]{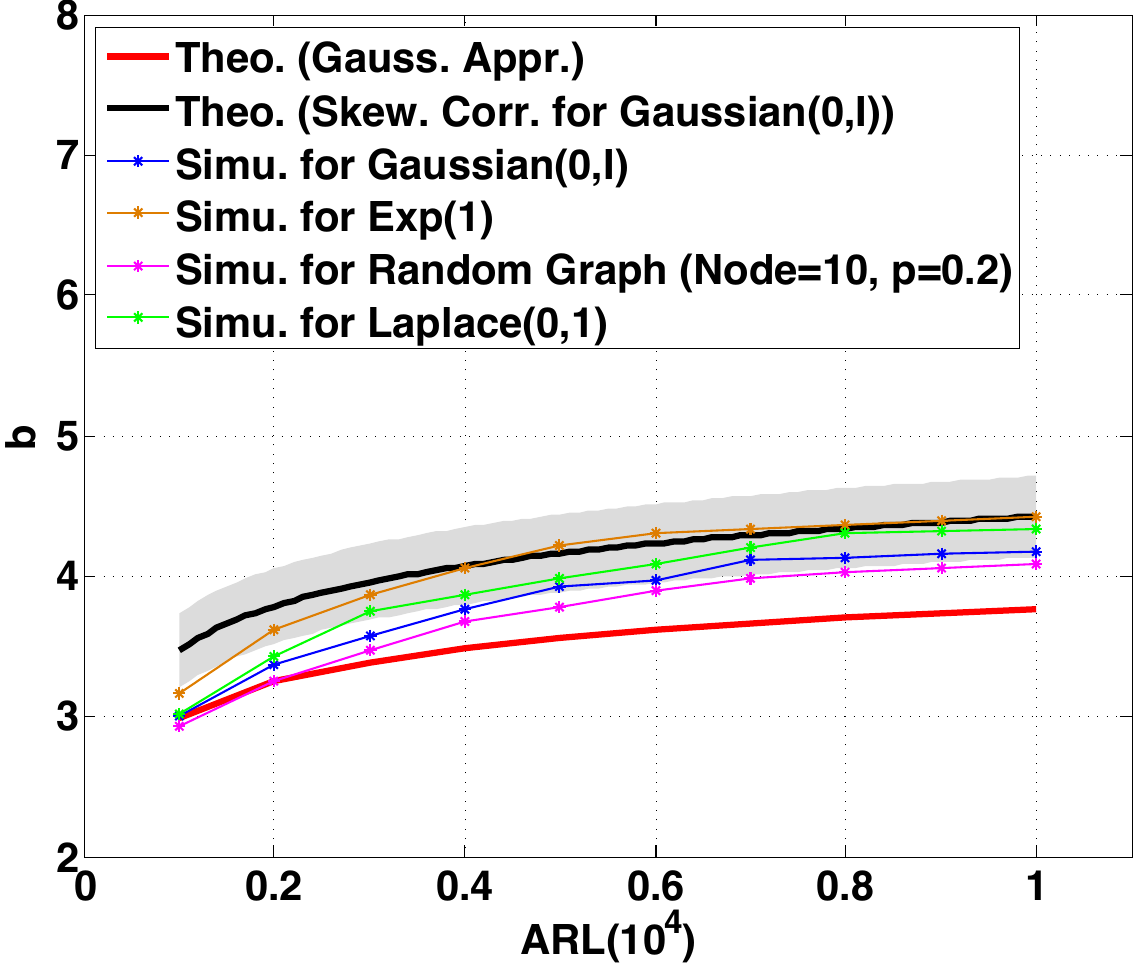}  &
                  \includegraphics[
                  width=7cm,height=6cm
                  ]{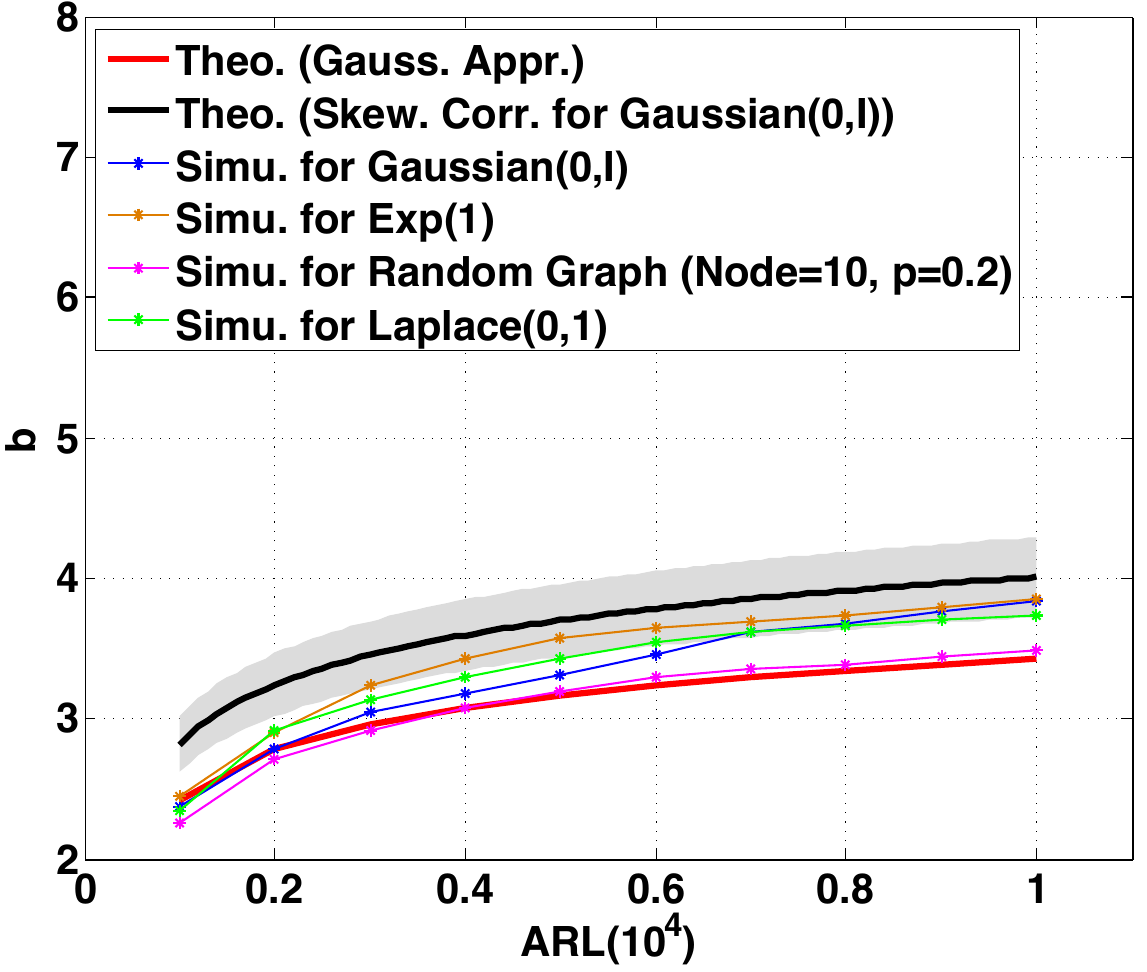}\\
                 (a): $B_0=50$ & (b): $B_0=200$ \\
                       \end{tabular}
                \caption{For a range of target ARL values, thresholds determined from simulation,  from Theorem \ref{thm:ARL}, and from theory with the skewness correction (Section \ref{sec:skewness}) under various null distributions are compared. Shaded areas represent standard deviations for skewness-corrected thresholds.}
                                \label{fig:ARL}
                \end{center}
 \end{figure}


\section{DETECTION POWER STUDY}

In this section, we study the detection power and the expected detection delay of the offline and online scan $B$-statistics, respectively, and compare them with classic methods. 

\subsection{Offline Change-Point Detection: Comparison with Parametric Statistics} 

We compare the offline scan $B$-statistic with two commonly used parametric test statistics: the Hotelling's $T^2$ and the generalized likelihood ratio (GLR) statistics. 
Assume samples $\{x_1, x_2, \dots, x_n\}$. 

\vspace{.1in}
\noindent {\bf Hotelling's $T^2$ statistic}. For a hypothetical change-point location $\tau$, we can define the Hotelling's $T^2$ statistic for samples in two segments $[1, \tau]$ and $[\tau+1, t]$ as
\[
T^2(\tau)=\frac{\tau(n-\tau)}{n}(\bar{x}_\tau -\bar{x}_\tau^* )^T\widehat{ \Sigma}^{-1} (\bar{x}_\tau -\bar{x}_\tau^*),
\]
where, $\bar{x}_\tau=\sum_{i=1}^\tau x_i /\tau$, $\bar{x}_\tau^*=\sum_{i=\tau+1}^n x_i/(n-\tau)$ and the pooled covariance estimator 
\[\widehat{ \Sigma}=(n-2)^{-1} \left( \sum_{i=1}^\tau(x_i-\bar{x}_i) (x_i-\bar{x}_i)^T + \right. \left. \sum_{i=\tau+1}^n(x_i-\bar{x}_i^*) (x_i-\bar{x}_i^*)^T\right).\]
The Hotelling's $T^2$ test detects a change whenever $\max_{1\leq \tau \leq n} \max T^2(\tau)$ exceeds a threshold.

\vspace{.1in}
\noindent The {\bf generalized likelihood ratio (GLR)} statistic can be derived by assuming the null and the alternative distributions are two multivariate normal distributions, and both the mean and the covariance matrix are all unknown. For a hypothetical change-point location $\tau$, the GLR statistic is given by
\[\ell (\tau)=n \mbox{log} |\widehat{\Sigma}_n|-\tau \mbox{log}|\widehat{\Sigma}_\tau|-(n-\tau)\mbox{log}|\widehat{\Sigma}_\tau^*|,\]
where $ \widehat{\Sigma}_\tau=\tau^{-1}\left( \sum_{i=1}^\tau (x_i-\bar{x}_i) (x_i-\bar{x}_i)^T   \right),$ and 
$ \widehat{\Sigma}_\tau^*=(n-\tau)^{-1}  \sum_{i=\tau+1}^n(x_i-\bar{x}_i^*) (x_i-\bar{x}_i^*)^T.$ The GLR statistic detects a change whenever $\max_{1\leq \tau \leq n} \ell(\tau)$ exceeds a threshold. 

For our examples, we set $n = B_{\rm max}=200$ for the Hotelling's $T^2$ and the scan $B$-statistics, respectively. Let the change-point occurs at $\tau=100$, and choose the significance level $\alpha = 0.05$. The thresholds for the offline scan $B$-statistic are obtained from Theorem \ref{thm:tail_fixedsample}, and those for the other two methods the thresholds are obtained from simulations.  Consider the following cases:

\textit{Case 1} (mean shift): observe a sequence of observations in $\mathbb{R}^{20}$, whose distribution shifts from $ \mathcal N ({\bf 0}, I_{20})$ to $\mathcal N (0.1{\bf e}, I_{20})$; 

\textit{Case 2} (mean shift with larger magnitude): observe a sequence of observations in $\mathbb{R}^{20}$, whose distribution shifts from $ \mathcal N ({\bf 0}, I_{20})$ to $\mathcal N(0.2{\bf e}, I_{20})$; 

\textit{Case 3} (mean and local covariance change): observe a sequence of observations in $\mathbb{R}^{20}$, whose distribution shifts from $ \mathcal N ({\bf e},I_{20})$ to $\mathcal N (0.2{\bf e}, \Sigma),$ where $[\Sigma]_{11} = 2$ and $[\Sigma]_{ii} = 1,\, i=2,\dots, 20$; 

\textit{Case 4} (Gaussian to Laplace): observe a sequence of one-dimensional observations, whose distribution shifts from $\mathcal N(0,1)$ to Laplace distribution with zero mean and unit variance. Note that the mean and the variance remain the same after the change. 

We estimate the power for each case using 100 Monte Carlo trials. Table \ref{offlinepower} shows that the scan $B$-statistic achieves higher power than the Hotelling's $T^2$ statistic as well as the GLR statistic in all cases. The GLR statistic performs poorly, since when $\tau$ is small or closer to the end point, it estimates the pre-change and post-change sample covariance matrix using a very limited number of samples.

\begin{table}  
\begin{center}
\caption{Comparison of detection power for offline change-point detection. Thresholds for all methods are calibrated so that the significance level is $\alpha = 0.05$.}
\vspace{.1in}
\small{
  \begin{tabular}{|c|c|c|c|c|c|}
  \hline
                  & Case 1  & Case 2 & Case 3 & Case 4 \\ \hline
$B$-statistic  & \bf 0.71 &\bf 1.00   & \bf 1.00 & \bf 0.44        \\ \hline
Hotelling's $T^2$  & 0.18 & 0.88  & 0.87  &  0.03   \\  \hline
GLR  &  0.03 & 0.05 & 0.12 & 0.04  \\ \hline
  \end{tabular}
  }
  \vspace{-5 mm}
 \label{offlinepower}
\end{center}
\end{table}

\subsection{Online Change-Point Detection: Comparison with Hotelling's $T^2$ Statistic}

Now consider the online scan $B$-statistic with a fixed block-size $B_0=20$. We compare the online scan $B$-statistic with a Shewhart chart based on Hotelling's $T^2$ statistic\footnote{Here we made no comparison of the online scan $B$-statistic with the GLR statistic, since in our experiments, Hotelling's $T^2$ consistently outperforms GLR when the dimension is high.}. At each time $t$, we form a Hotelling's $T^2$ statistic using the immediately past $B_0$ samples in $[t-B_0 + 1, t]$, 
\[T^2(t)=B_0(\bar{x}_t-\hat{\mu})^T \widehat{\Sigma}_0^{-1}(\bar{x}_t-\hat{\mu}_0),\] where $\bar{x}_t = (\sum_{i=t-B_0+1}^t x_i)/B_0$, and $\hat{\mu}_0 $ and $\widehat{\Sigma}_0$ are estimated from reference data. The procedure detects a change-point whenever $T^2(t)$ exceeds a threshold for the first time. 
The threshold for online scan $B$-statistic is obtained from Theorem \ref{thm:ARL}, and from simulations for the Hotelling's $T^2$ statistic.  
To simulate EDD, let the change occur at the first point of the testing data. 
Consider the following cases: 

\textit{Case 1} (mean shift): distribution shifts from $\mathcal N ({\bf 0}, I_{20})$ to $\mathcal N (0.3  {\bf 1}, I_{20})$; 

\textit{Case 2} (covariance change): distribution shifts from $ \mathcal N ({\bf 0}, I_{20})$ to $\mathcal N ({\bf 0}, \Sigma),$ where $[\Sigma]_{ii}=2$, $i=1,2,\dots, 5$ and $[\Sigma]_{ii}=1,\, i=6,\dots, 20$; 

\textit{Case 3} (covariance change): distribution shifts from $ \mathcal N ({\bf 0}, I_{20})$ 
to $\mathcal N ({\bf 0}, 2 I_{20})$; 

\textit{Case 4} (Gaussian to Gaussian mixture): distribution shifts from $ \mathcal N ({\bf 0}, I_{20})$ to mixture Gaussian $0.3\mathcal N ({\bf 0}, I_{20})+0.7\mathcal N ({\bf 0}, 0.1 I_{20})$; 

\textit{Case 5} (Gaussian to Laplace)\footnote{For these difficult situations, we report the EDD comparisons based on the selected 500 sequences where $B$-statistics successfully detect the changes, which are defined as crossing the threshold within 50 steps from the time that the change occurs. Hotelling's $T^2$ fails to detect the changes for all sequences.}: distribution shifts from $\mathcal N (0,1)$ to Laplace distribution with zero mean and unit variance.

We evaluate the EDD for each case using 500 Monte Carlo trials. The results are summarized in Table \ref{EDD}. Note that in detecting changes in either Gaussian mean or covariance, the online scan $B$-statistic performs competitively with Hotelling's $T^2$, which is tailored to the Gaussian distribution. In the more challenging scenarios such as Case 4 and Case 5, the Hotelling's $T^2$ fails to detect the change-point whereas the online scan $B$-statistic can detect the change fairly quickly. 
\begin{table}  
\begin{center}
\caption{Comparison of EDD in online change-point detection. The parameter is $B_0=20$ and thresholds for all methods are calibrated so that $\mbox{ARL}= 5000.$ Dashed line means that the procedure fails to detect the change, i.e., EDD is longer than 50.}
\vspace{.1in}
\small{
  \begin{tabular}{|c|c|c|c|c|c|}
  \hline
          & Case 1  & Case 2  & Case 3   & Case 4& Case 5  \\ \hline
$B$-statistic      & 4.20       &\bf{9.10}&     \bf{1.00}    & \bf{23.38}  &  \bf{23.03}  \\\hline
Hotelling's $T^2$  & \bf{2.47}  & 25.46  & 1.27  & $-$  &  $-$   \\ \hline
  \end{tabular}
  }
  \vspace{-5 mm}
 \label{EDD}
\end{center}
\end{table}

\section{SKEWNESS CORRECTION}\label{sec:skewness}

We have shown that approximations to the significance level and ARL, assuming that random variables $\{Z_B'\}_{B = 2,3,\dots}$ form a Gaussian random field, are reasonably accurate. However, $Z_B'$ does not converge to normal distribution even when $B$ is large (see Appendix \ref{app:scewness}) and it has a non-vanishing skewness, as illustrated by the following numerical example. 
Form 10000 instances of $Z_B$ computed using samples from $\mathcal{N}(0, I_{20})$.  Figures \ref{statisticdist}(a)-(b) show the empirical distributions of $Z_B$ when $N =5$, and $B=2$ or $B=200$, respectively. Also plotted are the Gaussian probability density functions with mean equal to the sample mean, and the variance predicted by Lemma \ref{thm:variance}. Note that the empirical distributions of $Z_B$ match with Gaussian distributions to a certain extent but the skewness becomes larger for larger $B$. Figures \ref{statisticdist}(c)-(d) show the corresponding Q-Q plots.

\begin{figure}[h]
        \begin{center}
        \begin{tabular}{cc}
                \includegraphics[width=6.2cm,height=3.8cm
                ]{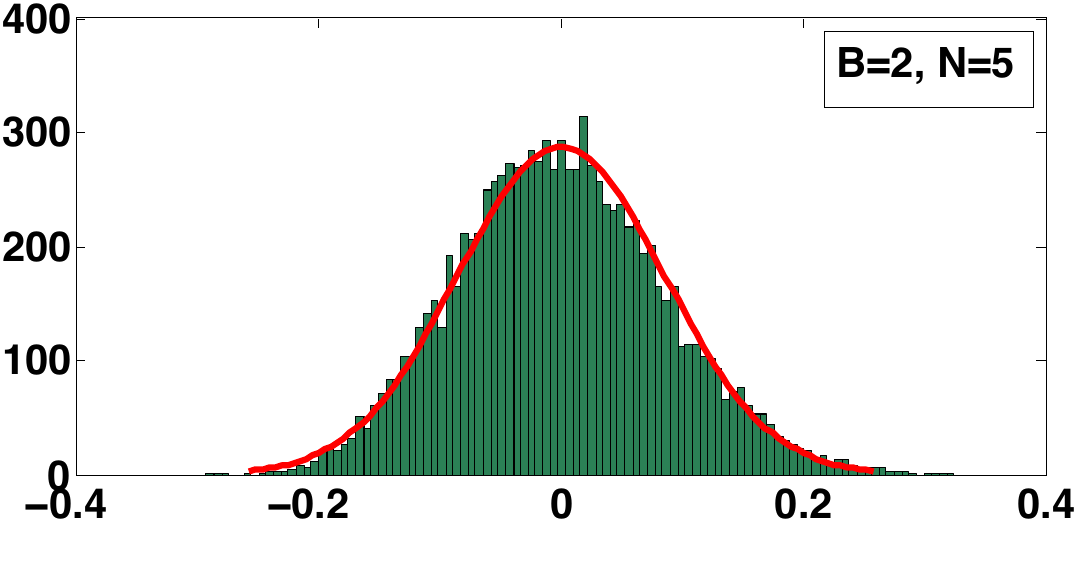} &
                 \includegraphics[width=6.2cm,height=3.8cm
                 ]{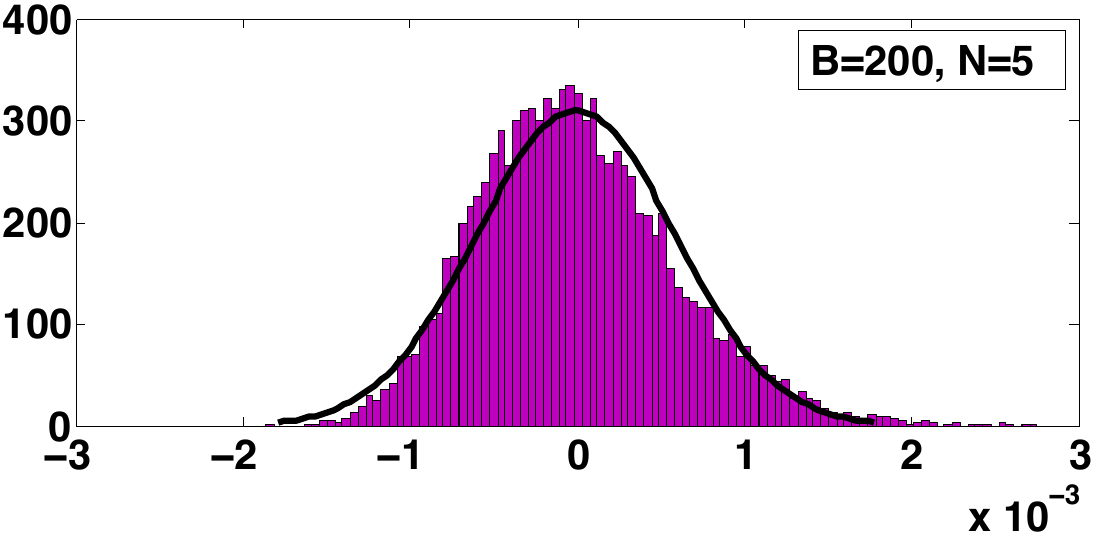} \\  (a): $B = 2$, $N = 5$, empirical distribution &
                 (b): $B = 200$, $N = 5$, empirical distribution \\
    \vspace{.05in}            \\
    
    \includegraphics[width=6.5cm,height=4cm
                ]{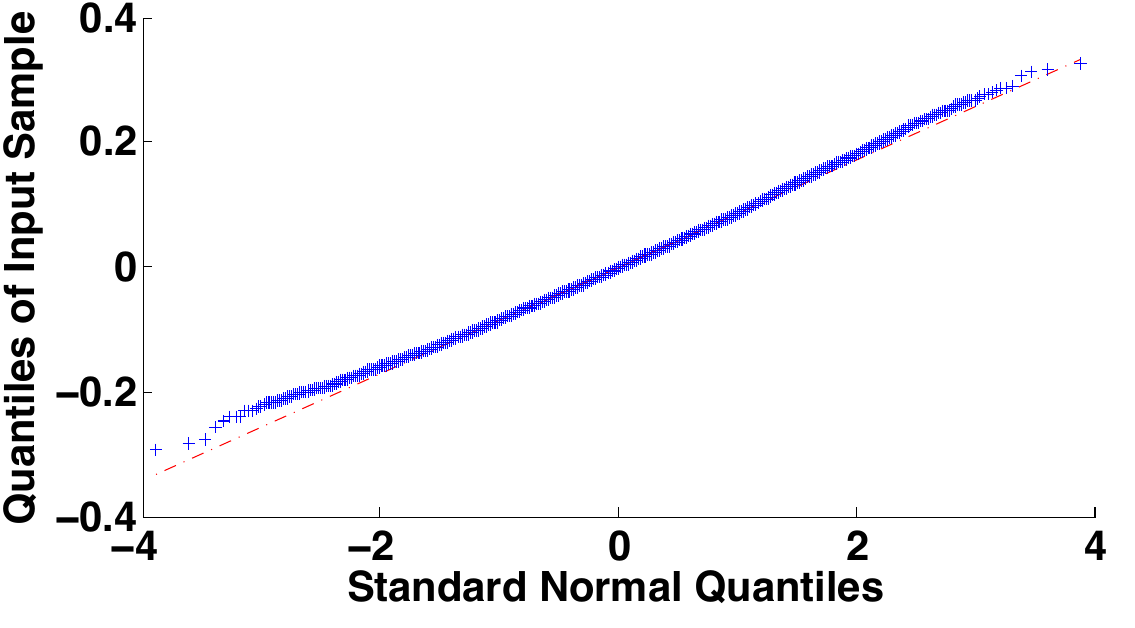} & 
                \includegraphics[width=6.5cm,height=4cm
                  ]{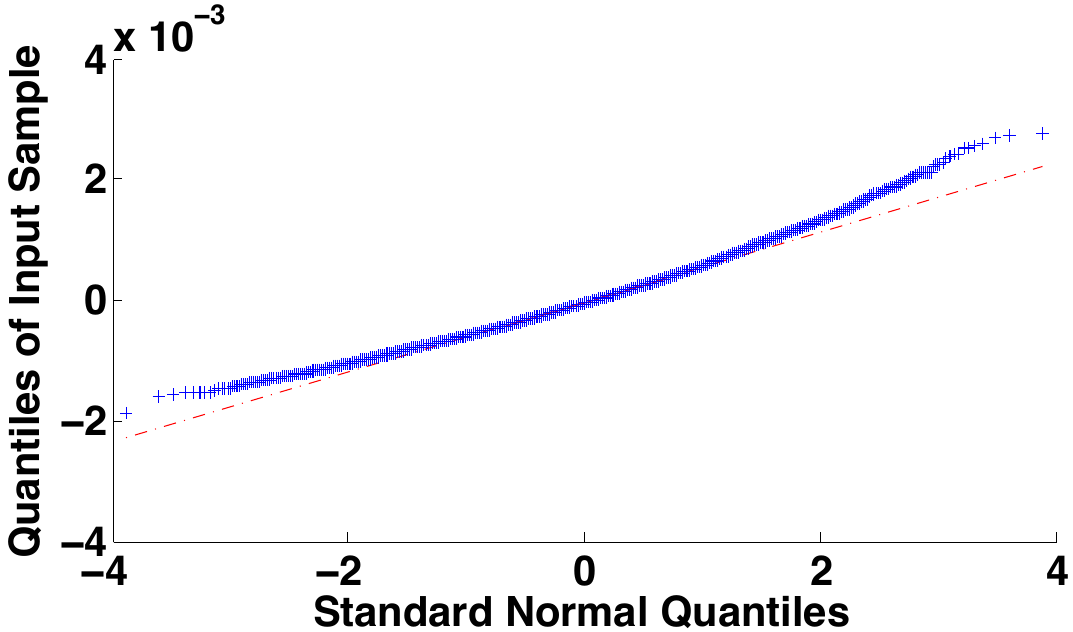} \\
                  (c): $B = 2$, $N = 5$, Q-Q plot &
                 (d): $B = 200$, $N = 5$, Q-Q plot  
                
          \end{tabular}
                \caption{Empirical distributions of $Z_B$ when $B=2$ and $B=200$, respectively. Note that although Gaussian distribution seems to be a reasonable fit to the statistic, the skewness becomes larger for larger values of $B$.}
                                \label{statisticdist}
                \end{center}
 \end{figure}

To incorporate the skewness of $Z_B$, one can improve the accuracy of the approximations for significance level in Theorem \ref{thm:tail_fixedsample} and for ARL in Theorem \ref{thm:ARL}. Note that the log moment generating function $\psi(\theta)$ defined in (\ref{phi}) corresponds to the cumulant generating function \citep{McCullagh4699} and it has an expansion for $\theta$  close to zero:
\[
\psi(\theta) = \kappa_1 \theta + \frac{\kappa_2}{2}\theta^2 + \frac{\kappa_3}{3!}\theta^3 + o(\theta^3). 
\]
Since $\mathbb{E}[Z_B'] = 0$, the cumulants take values
 $\kappa_1 = \mathbb{E}[Z_B'] = 0$, $\kappa_2 = {\rm Var}[Z_B'] = 1$, $\kappa_3 = \mathbb{E}[(Z_B')^3]- 3\mathbb{E}[(Z_B')^2]\mathbb{E}[Z_B'] + 2(\mathbb{E}[Z_B'])^3 = \mathbb{E}[(Z_B')^3]$. 
Recall that when deriving  approximations using change-of-measurement, we choose parameter $\theta$ such that $\dot{\psi}(\theta) = b$. If $Z_B'$ is a standard normal,  $\psi(\theta) = \theta^2/2$, and hence $\theta = b$. Now with skewness correction, we approximate $\psi(\theta)$ as $\theta^2/2 + \kappa_3\theta^3/6$ when solving for $\theta$. Hence, we solve for \[\dot{\psi}(\theta) \approx \theta + \mathbb{E}[(Z_B')^3]\theta^2/2 = b,\] 
and denote the solution to be $\theta_B$ (note that this time the solution depends on $B$). 
Moreover, with skewness correction, we will change the leading exponent term in (\ref{tail_offline}) and (\ref{tail-online}) from $e^ {-  b^2/2}$ to be $e^{\psi(\theta_B')-\theta_B' b}$. 

From numerical experiments, we find that the skewness correction is especially useful when the significance level is small (e.g., $\alpha = 0.01$) for the offline case, when block size $B_0$ is small (see Table \ref{Tablestat_simu} and Fig. \ref{fig:ARL}), and can be important for real data where the data are noisy and the null distribution is more difficult to characterize. 

For example, we consider real speech data from the CENSREC-1-C dataset (more details in Section \ref{sec:real_data}). Here, the null distribution $P$ corresponds to the unknown distribution of the background signal, and we are interested in detecting the onset of speech signals. This case is more challenging because the true distribution can be arbitrary. In the dataset, there are 3000 reference samples. We bootstrap these reference samples to generate 10000 re-samples to estimate the tail of the detection statistic. Table \ref{Tablestat_boot} demonstrates that the thresholds predicted by the expensive bootstrapping, by Theorem \ref{thm:tail_fixedsample}), and by theory with skewness correction, respectively, for various SL values $\alpha$. Note that in this case, the accuracy improves significantly by skewness correction. 

\begin{table}[h!]
\begin{center}
\caption{Thresholds for the offline scan $B$-statistics using {\bf real speech data}, obtained by simulation, theory (Theorem \ref{thm:tail_fixedsample}), and theory with skewness correction, respectively, for various significance levels $\alpha$. 
}
\begin{tabular}{|c|ccc|ccc|ccc|}
   \hline
    \multirow{2}{*}{$\alpha$} &
      \multicolumn{3}{c}{$B_{\rm max} = 50$} &
      \multicolumn{3}{|c}{$B_{\rm max}= 100$} &
      \multicolumn{3}{|c|}{$B_{\rm max} = 150$} \\
      &{$b$ (boot)}&{$b$ (theory)} &{$b$ (SC)} &{$b$ (boot)}&{$b$ (theory)} &{$b$ (SC)}&{$b$ (boot)} &{$b$ (theory)}&{$b$ (SC)}  \\
      \hline
  0.10 & 2.96 &2.38&{\bf3.23} &3.16&2.50 &\textbf{3.59}  &3.21&\textbf{2.56}&3.94\\ \hline
  0.05 & 3.62&2.67&{\bf 3.68}&3.82&2.78 &\textbf{4.06} &3.86& 2.83&\textbf{4.43} \\ \hline
  0.01 & 4.85&3.23&{\bf 4.61} &5.20&3.32 & \textbf{5.03}  &5.42&3.37& \textbf{5.45} \\ \hline
  \end{tabular}
 \vspace{-5 mm}
  \label{Tablestat_boot}
  \end{center}
\end{table}

The remaining task is to estimate the skewness of scan $B$-statistic. 
Since $Z_B$ is zero-mean, the skewness of $Z_B'$ is related to the variance and third moment of $Z_B$ via
\[
\kappa_3=\mathbb{E}[(Z_B')^3] = {\rm Var}[Z_B]^{-3/2}\mathbb{E}[Z_B^3]. 
\]
We already know how to estimate the variance of $Z_B$ from Lemma \ref{thm:variance}. 
The following lemma shows the third-order moment $\mathbb{E}[Z_B^3]$ in terms of the moments of the kernel $h$ defined in (\ref{kernel-to-kernel}): 
\begin{lemma}[Third-order moment of $Z_B$]\label{thm: skewness}
\begin{equation}
\begin{split}
\mathbb{E}[Z_B^3] &= \frac{8(B-2)}{B^2(B-1)^2} \left\{  \frac{1}{N^2} \mathbb{E} \left[ h(x,x', y,y') h(x',x'', y', y'') 
h(x'',x, y'', y) \right]  \right. \\
&  \qquad \left.  +\frac{3(N-1)}{N^2} \mathbb{E} \left[ h(x,x', y,y') h(x',x'', y', y'') 
h(x''',x'''', y'', y) \right] \right. \\
& \qquad  \left.       +\frac{(N-1)(N-2)}{N^2}    \mathbb{E} \left[ h(x,x', y,y') h(x'',x''', y', y'') 
h(x'''',x''''', y'', y) \right]         \right\}  \\
& ~~~~+ \frac{4}{B^2(B-1)^2} \left\{     \frac{1}{N^2}\mathbb{E} \left[ h(x,x',y,y')^3\right] \right. \\
& \qquad \left. +\frac{3(N-1)}{N^2}\mathbb{E} \left[ h(x,x',y,y')^2h(x'',x''',y,y')\right]  \right. \\
& \qquad \left. +   \frac{(N-1)(N-2)}{N^2}    \mathbb{E} \left[ h(x,x',y,y')h(x'',x''',y,y')h(x'''',x''''',y,y')\right]    \right\} .
\end{split}
\label{skewness_eqn}
\end{equation}
\end{lemma}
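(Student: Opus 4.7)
The plan is to expand $\mathbb{E}[Z_B^3]$ directly as a ninefold index sum and use the degeneracy of $h$ under $P=Q$ to collapse it down to exactly the six terms in (\ref{skewness_eqn}). By (\ref{def_MMD_stats}),
\[
\mathbb{E}[Z_B^3] = \frac{1}{[NB(B-1)]^3} \sum_{\mathbf{i}, \mathbf{j}, \mathbf{l}} \mathbb{E}[h_1 h_2 h_3], \qquad h_k := h(X_{i_k, j_k}, X_{i_k, l_k}, Y_{j_k}, Y_{l_k}).
\]
Expanding $h$ into its four $k(\cdot,\cdot)$ pieces and using that $X'$ and $Y'$ share the distribution $P$ under the null, one obtains two key vanishing identities: $\mathbb{E}[h \mid V] = 0$ for any single argument $V$, and $\mathbb{E}[h(X, X', Y, Y') \mid X, Y] = \mathbb{E}[h(X, X', Y, Y') \mid X', Y'] = 0$ (the ``paired'' conditionings, where the two conditioned arguments share the same within-block index).

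\emph{Identifying non-vanishing patterns.} The structural observation is that in the $M$-statistic indexing, the within-block index of $X_{i_k, j_k}$ equals the index of $Y_{j_k}$; hence whenever $X_{i_k, j_k}$ coincides with some argument of $h_{k'}$, necessarily $Y_{j_k}$ does too. Combining this coupling with the two identities above and running a short case analysis on each $h_k$'s shared-argument set, one finds that $\mathbb{E}[h_1 h_2 h_3] \neq 0$ forces every $Y$-index in $(j_1, l_1, j_2, l_2, j_3, l_3)$ to appear at least twice. Since there are six slots with each pair having distinct entries, the $Y$-multiset is either $\{2,2,2\}$ (three distinct indices, forced into the triangular cycle $\{a,b\}, \{b,c\}, \{c,a\}$) or $\{3,3\}$ (one unordered pair repeated three times, up to orientation).

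\emph{Enumeration and counting.} Within each of these two $Y$-patterns, split the block-index tuples $(i_1, i_2, i_3)$ into three subcases---all equal, exactly two equal, or all distinct. By the symmetry of $h$ (invariance under simultaneously swapping its first two and its last two arguments) together with relabeling of $\{a, b, c\}$, every tuple in a given $(Y, i)$-configuration yields the same canonical expectation, matching one of the six expressions on the right-hand side of (\ref{skewness_eqn}). The multiplicities are $8B(B-1)(B-2)$ for the cycle $Y$-pattern and $4B(B-1)$ for the all-same $Y$-pattern, paired with $N$, $3N(N-1)$, $N(N-1)(N-2)$ respectively for the three $i$-subcases. Multiplying and dividing by $[NB(B-1)]^3$ produces the claimed prefactors $8(B-2)/[B^2(B-1)^2]$ and $4/[B^2(B-1)^2]$ together with the $N$-weights $1/N^2$, $3(N-1)/N^2$, $(N-1)(N-2)/N^2$.

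The main obstacle is the degeneracy case analysis: the paired-conditioning identity must be combined carefully with the $X$--$Y$ index coupling to rule out every partial sharing pattern (including ones where a single $X$ appears shared without its paired $Y$) that might otherwise look non-vanishing, and to confirm that no $Y$-multiset other than $\{2,2,2\}$ and $\{3,3\}$ survives. Once this is done, the canonical relabeling within each configuration and the ensuing combinatorial count are routine.
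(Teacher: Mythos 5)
Your proposal is correct and follows essentially the same route as the paper's proof: expand $\mathbb{E}[Z_B^3]$, split on the block-index pattern (all equal, two equal, all distinct, with weights $N$, $3N(N-1)$, $N(N-1)(N-2)$), and observe that degeneracy of $h$ under the null kills every within-block index configuration except the triangular cycle and the thrice-repeated pair, whose counts reproduce the prefactors $8(B-2)/[B^2(B-1)^2]$ and $4/[B^2(B-1)^2]$. The only difference is organizational (you run one ninefold sum where the paper factors the argument through three auxiliary lemmas), and your explicit justification of the vanishing patterns via the single-argument and paired conditional-mean identities is actually more detailed than the paper's "by enumerating all the combinations, only two terms are nonzero."
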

The proof can be found in Appendix \ref{proof:skewness}. Lemma \ref{thm: skewness} enables us to estimate the skewness efficiently, by reducing it to evaluating simpler terms in (\ref{skewness_eqn}) that only requires estimating the statistic of the kernel function $h(\cdot, \cdot, \cdot, \cdot)$ with tuples of samples. 

Finally, although $Z_B'$ does not converge to Gaussian, the difference between its moment generating functions and that of the standard normal distribution can be bounded, as we show below. 
By applying an argument on Page 220 of \citet{YakirBook2013}, we obtain that
\[
\left|\mathbb{E}[e^{\theta Z_B'}] - (1 + \frac{\theta^2}{2}) \right| \leq \min\{\frac{|\theta|^3}{6} \mathbb{E}[|Z'_B|^3], \theta^2 \mathbb{E}[|Z'_B|^2]\}.
\]
If considering the skewness $\kappa_3$ of $Z_B'$, we have a better estimation
$$\left|\mathbb{E}[e^{\theta Z_B'}] - (1+ \frac{\theta^2}{2} + \frac{\theta^3 \kappa_3}{6}) \right|  \leq \min \{\frac{ \theta^4  }{24} \mathbb{E}[|Z'_B|^4], \frac{1}{3}|\theta|^3 \mathbb{E}[|Z'_B|^3]\}. $$

\section{REAL DATA} \label{sec:real_data}

We test the performance of the scan $B$-statistics for change-point detection on real data. Our datasets include: (1) CENSREC-1-C: a real-world speech data set in the Speech Resource Consortium (SRC) corpora provided by National Institute of Informatics (NII)\footnote{Available from http://research.nii.ac.jp/src/en/CENSREC-1-C.html}; (2) Human Activity Sensing Consortium (HASC) challenge 2011 data\footnote{Available from http://hasc.jp/hc2011}. We compare our proposed scan $B$-statistics with a baseline algorithm, the relative density-ratio (RDR) estimate \citep{density-ratio2013}. One limitation of the RDR algorithm, however, is that it is not suitable for high-dimensional data because estimating density ratio in the high-dimensional setting is an ill-posed problem.  
To achieve reasonable performance for the RDR algorithm, we adjust the bandwidth and the regularization parameter at each time step and, hence, the RDR algorithm is computationally more expensive than using the scan $B$-statistics. We adopt the standard Area Under Curve (AUC) as in \citet{density-ratio2013} for our performance metric. The larger the AUC, the better. 

Our scan $B$-statistics demonstrate competitive performance compared with the baseline RDR algorithm on the real data. Here we only report the main results and leave the details in Appendix \ref{app:real_data}. For speech data, our goal is to online detect the emergence of a speech signal from the background. The backgrounds are taken from real acoustic signals, such as noise recorded in highway, airport and subway stations. The overall AUC for the scan $B$-statistic is {\bf 0.8014} and for the baseline algorithm is {\bf 0.7578}. For human activity detection data, our goal is to detect a transition from one activity to another as quickly as possible. Each instance consists of six possible human activity signals collected by portable three-axis accelerometers. The overall AUC for the scan $B$-statistic is {\bf 0.8871} and for the baseline algorithm is {\bf 0.7161}. 

\section{DISCUSSIONS}

There are a few possible directions to extend our work. (1) Thus far, we have assumed that data are {\it i.i.d.}~from a null distribution $P$ and when the change happens, data are {\it i.i.d.}~from an alternative distribution $Q$. Under these assumptions, we have developed the offline and online change-point detection algorithms based on the two-sample nonparametric test statistic MMD. One may relax the temporal independence assumption and extend scan $B$-statistics for dependent data by incorporating ideas from \citet{chwialkowski2014kernel}. (2) We have demonstrated how the number of blocks and block size affect the performance of scan $B$-statistics. One can also explore how kernel bandwidth, as well as the dimensionality of the data, would affect the performance. An empirical observation is that the performance of MMD statistic degrades with the increasing dimensions of data. Some recent results for the kernel-based test can be found in \citet{ramdas2015decreasing}. 
We may adopt the idea of \citet{ramdas2015decreasing} to extend our scan $B$-statistics for detecting a change in high dimensions. (3) For an exceedingly high dimensional data set with large Gram matrix, one can perform random subsampling to reduce complexity similar to \citet{XieLiaSon15}. 

\section*{ACKNOWLEDGEMENTS}

This research was supported in part by NSF CMMI-1538746, NSF CCF-1442635, NSF CAREER CCF-1650913, DMS-1830210, a grant from Atlanta Police Foundation, gift donation from Adobe Research to Yao Xie; NSF/NIH BIGDATA
1R01GM108341, ONR N00014-15-1-2340, NSF IIS-1218749, NSF IIS-1639792, NSF CAREER IIS-1350983, grant from Intel and NVIDIA to Le Song.

\newpage


\clearpage
\appendix

\section{RECURSIVE IMPLEMENTATION OF ONLINE SCAN STATISTIC}\label{recursive}
The online scan $B$-statistic can be computed recursively via a simple update scheme. By its construction, when time elapses from $t$ to $(t+1)$, a new sample is added into the post-change block, and the oldest sample is moved to the reference pool. Each reference block is updated similarly by adding one sample randomly drawn from the pool of reference data, and the oldest sample is purged. Hence, only a limited number of entries in the Gram matrix due to the new sample will be updated. The update scheme is illustrated in Fig. \ref{kerelmatrix} and explained in more details therein. 
%
Similarly, the offline scan $B$-statistic can also be computed recursively by utilizing the fact that $Z_B$ for $B \in \{2, \dots, B_{\rm max}\}$ shares many common terms. 

\begin{figure}[h!]
        \begin{center}
        \begin{tabular}{cc}
                \includegraphics[width = .60\textwidth
                ]{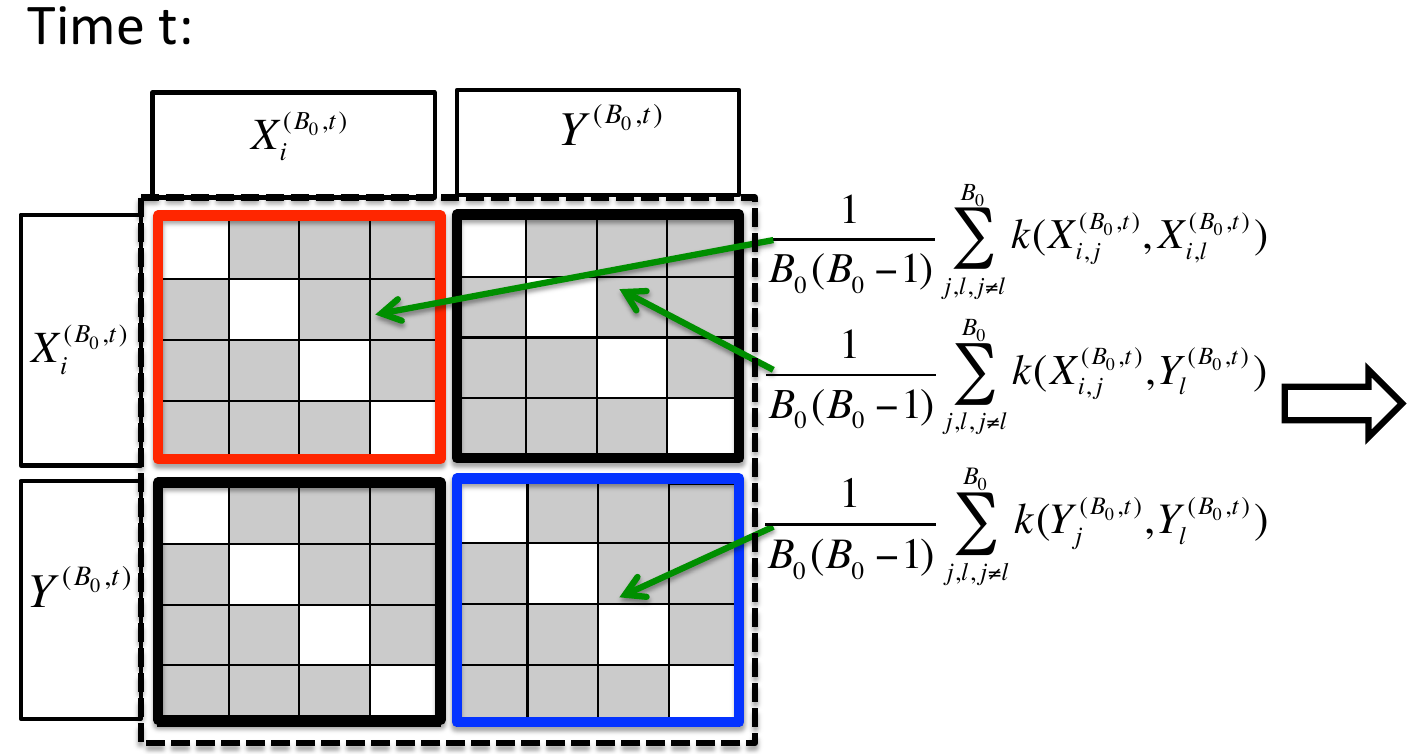} 
                 &
                    \includegraphics[width = .32\textwidth]{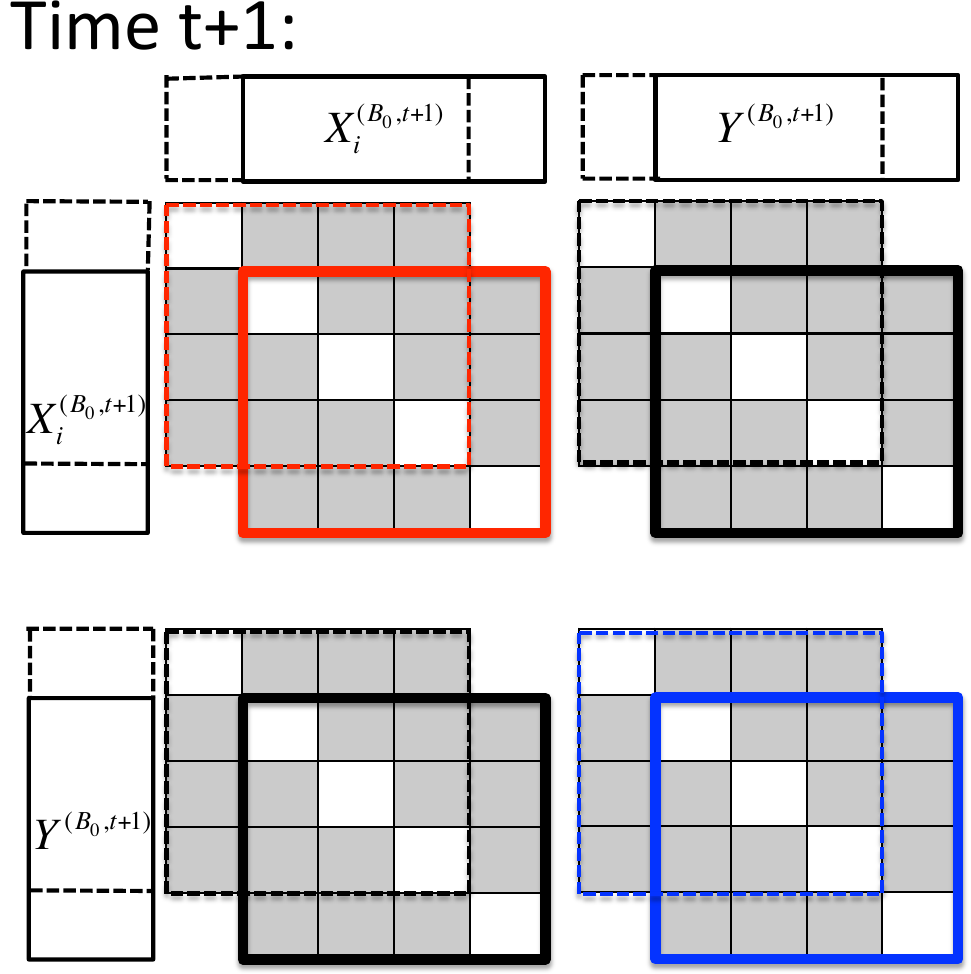}
                   \end{tabular}
\vspace{.1in}
                 \end{center}
                \caption{Recursive update scheme to compute the online scan $B$-statistics. The online $B$-statistic is formed with $N$ background blocks and one testing block and, hence, we keep track of $N$ Gram matrices. For illustration purposes, we partition the Gram matrix into four windows (in red, black and blue, as shown on the left panel). At time $t$, to obtain $\mbox{MMD}^2(X_i^{(B_0,t)}, Y^{(B_0, t)})$, we compute the shaded elements and take an average within each window. The diagonal entries in each window are removed to obtain an unbiased estimate. At time $t+1$, we update $X_i^{(B_0, t)}$ and $Y^{(B_0,t)}$ with the new data point and purge the oldest data point, and update the Gram matrix by moving the colored window as shown on the right panel. We compute the elements within the new windows, and take an average. Note that 
               we only need to compute the right-most column and the bottom row. 
}
               \label{kerelmatrix}
 \end{figure}

\section{VARIANCE AND COVARIANCE CALCULATION}\label{sec:var}

Below, $X_{i,j}^{(B)}$, where $i = 1, \dots, N$, and $j = 2, \dots, B_{\rm max}$, denotes the $j$-th sample in the $i$-th block $X_i^{(B)}$, and $Y_{j}^{(B)}$ denotes the $j$-th sample in $Y^{(B)}$. The superscript $B$ denotes the block size. 
We start with proving Lemma \ref{Lemma:var_b_test} and Lemma \ref{Lemma:cov}, which are useful in proving Lemma \ref{thm:variance}.
\begin{lemma}[Variance of MMD, under the null.]\label{Lemma:var_b_test}

Under the null hypothesis, 
\begin{equation}
{\rm Var} \left[ { \rm MMD}^2(X_i^{(B)}, Y^{(B)})  \right]={\binom{B}{2}}^{-1}\mathbb{E} [h^2(x,x',y,y')], \quad i = 1, \ldots, N.
\label{var_expr}
\end{equation}
 \end{lemma}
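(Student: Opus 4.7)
The plan is to recognize $\mathrm{MMD}_u^2(X_i^{(B)}, Y^{(B)})$ as a U-statistic of degree $2$ whose ``data units'' are the paired observations $(X_{i,j}^{(B)}, Y_j^{(B)})$ for $j=1,\dots,B$, with symmetric kernel $h(x_i, x_j, y_i, y_j) = k(x_i,x_j) + k(y_i,y_j) - k(x_i,y_j) - k(x_j,y_i)$, and then invoke Hoeffding's variance decomposition for U-statistics. The decomposition writes $\mathrm{Var}[U_B] = \binom{B}{2}^{-1}\sum_{c=1}^{2}\binom{2}{c}\binom{B-2}{2-c}\zeta_c$, where $\zeta_c$ is the variance of the $c$-th order conditional expectation of $h$.

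The first key step is to show that the statistic is a \emph{degenerate} U-statistic under the null, so that $\zeta_1 = 0$. Define $h_1(x,y) := \mathbb{E}[h(x,x',y,y')\mid x,y]$ where $(x',y')$ is an independent paired draw under $P=Q$. By linearity and independence of $x'$ and $y'$, one has
\[
h_1(x,y) = \mathbb{E}_{x'}[k(x,x')] + \mathbb{E}_{y'}[k(y,y')] - \mathbb{E}_{y'}[k(x,y')] - \mathbb{E}_{x'}[k(x',y)].
\]
Under $H_0$, $x'$ and $y'$ share the distribution $P$, so the first and third terms cancel and the second and fourth cancel, giving $h_1 \equiv 0$ and hence $\zeta_1 = 0$.

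The second key step is to identify the $\zeta_2$ term. Since $\mathbb{E}[h]=0$ under the null (an immediate consequence of $h_1 \equiv 0$), we have $\zeta_2 = \mathrm{Var}[h(x,x',y,y')] = \mathbb{E}[h^2(x,x',y,y')]$. Substituting $\zeta_1 = 0$ and $\zeta_2 = \mathbb{E}[h^2]$ into Hoeffding's decomposition, only the $c=2$ term survives, yielding
\[
\mathrm{Var}\bigl[\mathrm{MMD}^2(X_i^{(B)}, Y^{(B)})\bigr] = \binom{B}{2}^{-1}\binom{2}{2}\binom{B-2}{0}\,\mathbb{E}[h^2(x,x',y,y')] = \binom{B}{2}^{-1}\mathbb{E}[h^2(x,x',y,y')],
\]
which is exactly \eqref{var_expr}.

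There is no real obstacle; the only subtlety worth checking is the verification of $h_1\equiv 0$, which rests precisely on the antisymmetric structure of $h$ between ``within-sample'' kernels $k(x,x'), k(y,y')$ and ``cross-sample'' kernels $k(x,y'), k(x',y)$ collapsing when $P=Q$. The remainder is a direct appeal to the standard U-statistic variance formula (see \citep{Serfling80}, as already cited in the paper).
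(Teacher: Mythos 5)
Your proposal is correct and follows essentially the same route as the paper: both identify $\mathrm{MMD}_u^2(X_i^{(B)},Y^{(B)})$ as a degree-2 $U$-statistic over the paired units $(X_{i,j}^{(B)},Y_j^{(B)})$, apply the standard Hoeffding/Serfling variance decomposition, observe that the first-order projection vanishes under the null (degeneracy), and reduce $\zeta_2$ to $\mathbb{E}[h^2]$ using $\mathbb{E}[h]=0$. Your explicit verification that $h_1\equiv 0$ via cancellation of within-sample and cross-sample kernel expectations is a slightly more careful rendering of the step the paper simply asserts, but the argument is the same.
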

 \begin{proof}
For notational simplicity, below we drop the superscript $B$, which denotes the block size. Furthermore, we use $x$, $x'$, $y$ and $y'$ to denote generic samples, i.e., $X_{i,l} \stackrel{d}{=} x$, $X_{i,j} \stackrel{d}{=} x'$, $Y_l \stackrel{d}{=} y$, $Y_j \stackrel{d}{=} y'$ and they are mutually independent of each other. Here the notation $\stackrel{d}{=}$ means two random variables have the same distribution. Below, we follow the same convention. For any $i=1,2,\dots, n$, by definition of U-statistic, we have
\begin{equation}
\begin{split}
 &{\rm Var} \left[ {  \rm MMD }^2(X_i, Y)  \right]
 = {\rm Var} \left[    {\binom{B}{2} }^{-1}  \sum_{l < j}  h(X_{i,l}, X_{i,j}, Y_l, Y_j  )  \right]   \\
 &=   {\binom{B}{2} }^{-2}  \left[  \binom{B}{2} \binom{2}{1} \binom{B-2}{2-1}    {\rm Var} \left[    \mathbb{E}_{x,y} [h(x,x',y,y')]  \right] \right. \\
& \left. \quad +    \binom{B}{2} \binom{2}{2} \binom{B-2}{2-2}  {\rm Var}  \left[ h(x,x',y,y')   \right]
    \right].  
\end{split} \label{var_int_expr} 
\end{equation}
Under null distribution, $\mathbb{E}_{x, y} [h(x,x',y,y')]=0$. Thus,
${\rm Var} \left[    \mathbb{E}_{x_iy} [h(x,x',y,y')]  \right]=0, $
and
\[ {\rm Var}  \left[ h(x,x',y,y')   \right]=\mathbb{E} [h^2(x,x',y,y')]-  \mathbb{E} [h(x, x',y,y')]^2 
        =\mathbb{E} [h^2(x,x',y,y')].\]
Substitute these results into (\ref{var_int_expr}), and we obtain the desired result (\ref{var_expr}).

 \end{proof}
 
 \begin{lemma}[Covariance of MMD, under the null, different block index.]\label{Lemma:cov}
 For $s\neq 0$, under null hypothesis
 \begin{equation*}
 \begin{split}
&{ \rm Cov } \left[ {  \rm MMD }^2(X_i^{(B)}, Y^{(B)}), { \rm  MMD }^2 (X_{i+s}^{(B)}, Y^{(B)})    \right]\\
=
&{\binom{B}{2}}^{-1} {\rm Cov} \left[  h(x,x',y,y'),h(x'',x''', y, y')   \right].
\end{split}
\end{equation*}
 \end{lemma}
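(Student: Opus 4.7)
The plan is to mirror the Hoeffding-style decomposition used in Lemma \ref{Lemma:var_b_test}, but now tracking which ``test-block'' indices $Y_l, Y_j$ are shared between the two MMD kernels, exploiting the fact that the two reference blocks $X_i^{(B)}$ and $X_{i+s}^{(B)}$ are sampled independently from the reference pool. Dropping the superscript $B$, I would start from the U-statistic expansion
\[
\mathrm{MMD}^2(X_i, Y) = \binom{B}{2}^{-1} \sum_{l<j} h(X_{i,l}, X_{i,j}, Y_l, Y_j),
\]
and the analogous expression for $\mathrm{MMD}^2(X_{i+s}, Y)$, so that the covariance becomes the double sum
\[
\binom{B}{2}^{-2} \sum_{l<j}\sum_{l'<j'} \mathrm{Cov}\!\left[h(X_{i,l}, X_{i,j}, Y_l, Y_j),\ h(X_{i+s,l'}, X_{i+s,j'}, Y_{l'}, Y_{j'})\right].
\]
Under $H_0$ both kernels are mean-zero, so each summand reduces to an expected product, and I would classify terms by $c := |\{l,j\}\cap\{l',j'\}|\in\{0,1,2\}$, which counts how many $Y$-indices are shared.

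For $c=0$, all six underlying random variables (four $X$'s and two disjoint pairs of $Y$'s) are mutually independent, so the product factorizes into two zero-mean expectations and contributes nothing. For $c=1$, say $l=l'$, I would condition on the shared $Y_l$ and use independence of the two reference blocks to factor the conditional expectation into a product of two terms of the form $\mathbb{E}[h(X_{\cdot,l}, X_{\cdot,j}, Y_l, Y_j)\mid Y_l]$. Each such conditional expectation vanishes under the null: integrating out $X_{\cdot,l}, X_{\cdot,j}$ and $Y_j$ (all i.i.d.\ from $P$) one checks directly, using $\mathbb{E}_X[k(X,y)]=\mathbb{E}_{Y'}[k(y,Y')]$ and $\mathbb{E}[k(X,X')]=\mathbb{E}[k(X,Y')]$ under $H_0$, that all four terms in $h$ cancel in pairs. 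This is the familiar first-order degeneracy of the MMD U-statistic under the null, the same property invoked in the proof of Lemma \ref{Lemma:var_b_test}, so the $c=1$ block contributes zero as well.

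Only $c=2$ survives. When $\{l,j\}=\{l',j'\}$, the two kernels share the pair $(Y_l,Y_j)$ but use the independent reference blocks $X_i$ and $X_{i+s}$; hence by definition each such summand equals $\mathrm{Cov}[h(x,x',y,y'),\, h(x'',x''',y,y')]$, where $x,x'$ are drawn from $X_i$, $x'',x'''$ from $X_{i+s}$, and $y,y'$ are the shared $Y$-samples. A simple count shows there are exactly $\binom{B}{2}$ matching index pairs (one $(l'<j')$ for each $(l<j)$), so the double sum equals $\binom{B}{2}\cdot \mathrm{Cov}[h(x,x',y,y'),\, h(x'',x''',y,y')]$. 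Multiplying by the prefactor $\binom{B}{2}^{-2}$ yields the claimed identity.

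The main (and essentially only) obstacle is verifying cleanly that the $c=1$ case vanishes, since this is where the null assumption really bites; once that null-degeneracy identity is recorded explicitly, the rest is bookkeeping. I would state the vanishing as a short lemma, or simply inline the same short calculation used earlier to establish $\mathbb{E}[\mathbb{E}_{xy}h(x,x',y,y')]=0$ in the proof of Lemma \ref{Lemma:var_b_test}, and then quote it to dispatch both the $c=0$ and $c=1$ terms at once.
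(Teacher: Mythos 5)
Your proposal is correct and follows essentially the same route as the paper: the paper also expands both $\mathrm{MMD}_u^2$ statistics as U-statistics, classifies cross terms by the number of shared $Y$-indices via the standard combinatorial covariance formula, kills the one-shared-index terms by factoring the integral and invoking the first-order degeneracy of $h$ under the null, and retains only the $\binom{B}{2}$ fully matched pairs to obtain the $\binom{B}{2}^{-1}$ prefactor. The only cosmetic difference is that you count matching pairs explicitly in a double sum while the paper quotes the binomial-coefficient identity directly.
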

 \begin{proof}
For $i=1,2,\dots, N$, and $s=(1-i), (2-i), \dots, (N-i),   s \neq 0$,
\begin{equation*}
\begin{split}
   & {\rm Cov } \left[  { \rm MMD }^2(X_i, Y), { \rm  MMD}^2 (X_{i+s}, Y)    \right] \\
= &~ {\rm Cov}   \left[ {\binom{B}{2} }^{-1}  \sum_{l < j}  h(X_{i,l}, X_{i,j}, Y_l, Y_j),   {\binom{B}{2} }^{-1}  \sum_{p < q}  h(X_{i+s,p}, X_{i+s,q}, Y_p, Y_q  )           \right]  \\
=&  ~{\binom{B}{2} }^{-2}      \binom{B}{2} \binom{2}{1} \binom{B-2}{2-1} { \rm Cov} \left[  h(x,x',y,y'),h(x'',x''', y, y'')   \right]\\
&  +          {\binom{B}{2} }^{-2}                     \binom{B}{2} \binom{2}{2} \binom{B-2}{2-2} { \rm Cov} \left[  h(x,x',y,y'),h(x'',x''', y, y')   \right].
\end{split}
\end{equation*}
Under null distribution,
\begin{align*}
& { \rm Cov} \left[  h(x,x',y,y'),h(x'',x''', y, y'')   \right] \\
 =& \int h(x,x',y,y') h(x'',x''', y, y'') d \mathbb P (x,x',x'',x''',y,y',y'' )\\
 =& \int \left( \underbrace{ \int  h(x,x',y,y') d \mathbb P(x',y') }_{=0} \right) d \mathbb P(x) 
 \cdot \int  \left( \underbrace{ \int h(x'',x''', y, y'') d\mathbb{P} (x'', y'')}_{=0} \right) d \mathbb P(x''')
 = 0.
\end{align*}
Above, with a slight abuse of notation, we use $d \mathbb P(\cdot)$ to denote the probability measure of appropriate arguments. Finally, we have the desired results as shown in Lemma \ref{Lemma:cov}.


 \end{proof}
 
 \subsection{Variance of Scan $B$-Statistics.}
 \begin{proof}[Proof for Lemma \ref{thm:variance}] 
 Using results in Lemma \ref{Lemma:var_b_test} and Lemma \ref{Lemma:cov}, we have
 \begin{align*}
{\rm Var}[Z_B] &={\rm Var} \left[ \frac{1}{N} \sum_{i=1}^N {\rm MMD}^2(X_i, Y) \right] \\
  &=\frac{1}{N^2}\left[ N {\rm Var}[ { \rm MMD}^2(X_i, Y)] +\sum_{i \neq j} { \rm Cov} \left[  {\rm MMD}^2(X_i, Y; B), {\rm MMD}^2(X_j, Y)   \right]  \right] \\
 & = {\binom{B}{2}}^{-1}\left[   \frac{1}{N} \mathbb{E} [h^2(x, x',y,y')]  +\frac{N-1}{N} {\rm Cov} \left[  h(x,x',y,y'),h(x'',x''', y, y')   \right]  \right].
  \end{align*}
\end{proof}
Next, we introduce Lemma \ref{lemma:cov2} and Lemma \ref{lemma:cov_mmd}, which are useful in proving Lemma \ref{lemma:cov_stats}. 

\begin{lemma}[Covariance of MMD, different block sizes, same block index.]
 For blocks with the same index $i$ but with distinct block sizes, under the null hypothesis we have
 \begin{align}
 { \rm Cov} \left[ { \rm MMD}^2(X_i^{(B)}, Y^{(B)}), { \rm MMD}^2(X^{(B+v)}_i, Y^{(B+v)}) \right]={ \binom {B \vee (B+v)} {2} }^{-1} \mathbb{E} [h^2(x, x',y,y')].
 \end{align}
 \label{lemma:cov2}
 \end{lemma}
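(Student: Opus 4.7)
The plan is to exploit the structural fact that when the block index is fixed and only the block size differs, the smaller block is a sub-block of the larger one, so both $\text{MMD}_u^2$ statistics are sums of kernel evaluations drawn from the same underlying sample, just over different ranges of indices. Assuming without loss of generality $v\geq 0$, so that $B\vee(B+v)=B+v$, I would write
\[
\text{MMD}_u^2(X_i,Y;B) = \binom{B}{2}^{-1}\sum_{1\leq l<j\leq B} h(X_{i,l},X_{i,j},Y_l,Y_j),
\]
and analogously for $B+v$, and then expand the covariance as a double sum over index pairs $(l,j)$ with $l<j\leq B$ and $(p,q)$ with $p<q\leq B+v$.

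The main work is then showing that under $H_0$ the only non-vanishing covariance terms are those with $\{l,j\}=\{p,q\}$. I would separate the sum into three cases according to $|\{l,j\}\cap\{p,q\}|\in\{0,1,2\}$. The $|\cdot|=0$ case is trivial because the two $h$-arguments are independent and each $h$ has mean zero under the null (as already used in Lemma~\ref{Lemma:var_b_test}). The key step is the $|\cdot|=1$ case: by conditioning on the shared pair $(X_{i,j},Y_j)$, conditional independence gives
\[
\mathbb{E}[h_1 h_2] = \mathbb{E}\bigl[\mathbb{E}[h_1\mid X_{i,j},Y_j]\,\mathbb{E}[h_2\mid X_{i,j},Y_j]\bigr],
\]
so it suffices to verify that $\mathbb{E}[h(x,x',y,y')\mid x',y']=0$ under $P=Q$. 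This follows directly from $h(x,x',y,y')=k(x,x')+k(y,y')-k(x,y')-k(x',y)$ together with the null identities $\int k(x,x')P(dx)=\int k(x,y')P(dx)$ and $\int k(y,y')P(dy)=\int k(x',y)P(dy)$. I expect this degeneracy argument to be the main (though routine) obstacle—it is the same mechanism that underlies the vanishing of the cross-term variance in Lemma~\ref{Lemma:var_b_test} and of the cross-block covariance in Lemma~\ref{Lemma:cov}, so both earlier results can be invoked as templates.

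For the $|\cdot|=2$ case, the symmetry $h(x',x,y',y)=h(x,x',y,y')$ means $\{l,j\}=\{p,q\}$ combined with the $l<j$, $p<q$ ordering forces $(l,j)=(p,q)$, and the covariance reduces to $\mathbb{E}[h^2(x,x',y,y')]$ under the null. Counting these pairs yields exactly $\binom{B}{2}$ surviving terms, all of which lie inside the $B\times B$ sub-block. Multiplying by the prefactor $\binom{B}{2}^{-1}\binom{B+v}{2}^{-1}$ and canceling gives
\[
\text{Cov}\bigl[\text{MMD}^2(X_i,Y;B),\text{MMD}^2(X_i,Y;B+v)\bigr] = \binom{B+v}{2}^{-1}\mathbb{E}[h^2(x,x',y,y')],
\]
which is precisely $\binom{B\vee(B+v)}{2}^{-1}\mathbb{E}[h^2]$. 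The case $v<0$ is handled by the same argument with the roles of $B$ and $B+v$ swapped, recovering the $\max$ in the denominator.
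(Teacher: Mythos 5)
Your proof is correct and follows essentially the same route as the paper: expand the covariance of the two double sums, note that under $H_0$ only the $\binom{B\wedge(B+v)}{2}$ fully matching index pairs survive (via the first-order degeneracy of $h$), and cancel the binomial prefactors to get $\binom{B\vee(B+v)}{2}^{-1}\mathbb{E}[h^2]$. One small slip in your justification of $\mathbb{E}[h(x,x',y,y')\mid x',y']=0$: writing $\mu(z)=\int k(x,z)\,P(dx)$, the four conditional expectations are $\mu(x')+\mu(y')-\mu(y')-\mu(x')$, so the cancellation pairs $k(x,x')$ with $k(x',y)$ and $k(y,y')$ with $k(x,y')$ rather than the pairing you stated (which would require $\mu(x')=\mu(y')$ pointwise); the conclusion is unaffected.
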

 \begin{proof}
 Note that
 \begin{align*}
   & { \rm Cov} \left[  { \rm MMD}^2(X_i^{(B)}, Y^{(B)}), {\rm MMD}^2(X_i^{(B+v)}, Y^{(B+v)}) \right]\\
   =~&  {\rm Cov}   \left[    {\binom{B}{2} }^{-1}  \sum_{l < j}^{B}  h(X_{i,l}, X_{i,j}, Y_l, Y_j  ),   {\binom{B+v}{2} }^{-1}  \sum_{p < q}^{B+v } h(X_{i,p}, X_{i,q}, Y_p, Y_q  )           \right]  \\
   =~&  {\binom{B}{2} }^{-1}   {\binom{B+v}{2} }^{-1}     {\rm Cov}   \left[  \sum_{l < j}^{B}  h(X_{i,l}, X_{i,j}, Y_l, Y_j  ),     \sum_{p < q}^{B+v}  h(X_{i,p}, X_{i,q}, Y_p, Y_q  )           \right]  \\
    =~&  {\binom{B}{2} }^{-1}   {\binom{B+v}{2} }^{-1}    \binom{B \wedge (B+v)}{2}   {\rm Var} [h(x,x',y,y')]  \\
   =~ & {\binom{B \vee (B+v)}{2} }^{-1}  \mathbb{E} [h^2(x,x',y,y')],
 \end{align*}
 where the second last equality is due to a similar argument as before to drop block indices as they are {\it i.i.d.}~under the null.
 \end{proof}

 \begin{lemma}[Covariance of MMD, different block sizes, different block indices.]\label{lemma:cov_mmd}
Under the null we have
\begin{align*}
{\rm Cov} \left[   {\rm MMD}^2(X_i^{(B)}, Y^{(B)}), {\rm MMD}^2 (X_{i+s}^{(B+v)}, Y^{(B+v)})    \right] 
= & {\binom{B \vee (B+v)}{2} }^{-1}  \cdot \\ \nonumber 
   & \quad \quad {\rm Cov} \left[  h(x,x',y,y'),h(x'',x''', y, y')   \right].
\end{align*}
 \end{lemma}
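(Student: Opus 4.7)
The plan is to combine the two arguments used separately in Lemma \ref{Lemma:cov} (different block indices, same size) and Lemma \ref{lemma:cov2} (same index, different sizes), since this statement simultaneously varies both the block index ($s\neq 0$) and the block size (by $v$). First, I would write the covariance as a double sum over the $U$-statistic pairs:
\[
\mathrm{Cov}\!\left[\mathrm{MMD}^2(X_i^{(B)},Y^{(B)}),\mathrm{MMD}^2(X_{i+s}^{(B+v)},Y^{(B+v)})\right]
= \binom{B}{2}^{-1}\binom{B+v}{2}^{-1}\!\!\sum_{\substack{l<j\le B\\ p<q\le B+v}}\!\!\mathrm{Cov}\!\left[h(X_{i,l},X_{i,j},Y_l,Y_j),\,h(X_{i+s,p},X_{i+s,q},Y_p,Y_q)\right].
\]
Since the expectation of each $h$ vanishes under the null, I can replace $\mathrm{Cov}$ by $\mathbb{E}$ on the right-hand side, reducing the problem to counting which pairs $((l,j),(p,q))$ yield a nonzero expectation.

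Next, I would classify the summands by $|\{l,j\}\cap\{p,q\}|\in\{0,1,2\}$. For $s\neq 0$ the reference samples $X_{i,\cdot}$ and $X_{i+s,\cdot}$ are independent, so conditioning on $Y$ factors the inner expectation into $\mathbb{E}_X[h\mid Y_l,Y_j]\cdot\mathbb{E}_X[h\mid Y_p,Y_q]$. A short calculation (using only that $P=Q$ under the null, so that $\mathbb{E}_{X,X'}[k(X,X')]=\mathbb{E}_{X,Y}[k(X,Y)]$) shows that $\mathbb{E}_X[h(X,X',y,y')\mid y,y'] = k(y,y')-\mu_P(y)-\mu_P(y')+\mathbb{E}[k(X,X')]$ integrates to zero in any single remaining argument; this is the MMD-style degeneracy. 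Therefore terms with $0$ or $1$ common $Y$-index contribute zero: zero overlap gives $\mathbb{E}[A]\mathbb{E}[B]=0$, and one overlap gives $\mathbb{E}_{Y_l}[(\mathbb{E}_{Y_j}[\cdots\mid Y_l])^2]$ with the inner conditional expectation identically zero.

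Only the two-overlap pairs remain, and since both $Y^{(B)}$ and $Y^{(B+v)}$ are contiguous sub-blocks of the same test block sharing the $B\wedge(B+v)$ common indices, the number of such matched pairs is exactly $\binom{B\wedge(B+v)}{2}$. Each such term equals
\[
\mathbb{E}[h(X_{i,l},X_{i,j},Y_l,Y_j)\,h(X_{i+s,l},X_{i+s,j},Y_l,Y_j)]
= \mathrm{Cov}[h(x,x',y,y'),h(x'',x''',y,y')],
\]
after relabelling and using independence of $X$'s across distinct blocks. Assembling the pieces and applying the elementary identity $\binom{B}{2}\binom{B+v}{2}=\binom{B\wedge(B+v)}{2}\binom{B\vee(B+v)}{2}$ delivers the claimed $\binom{B\vee(B+v)}{2}^{-1}$ factor.

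The only nontrivial step is the one-overlap vanishing: one must verify that the residual constant from $\mathbb{E}_{Y_j}[\mathbb{E}_X[h\mid Y_l,Y_j]\mid Y_l]$ is identically zero, which rests on the characteristic cancellation $\mathbb{E}[k(X,X')]=\mathbb{E}[k(X,Y)]$ available only under the null. Everything else is bookkeeping parallel to Lemmas \ref{Lemma:cov} and \ref{lemma:cov2}.
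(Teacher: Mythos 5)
Your proposal is correct and follows essentially the same route as the paper: expand the covariance as a double sum over index pairs, observe that only the $\binom{B\wedge(B+v)}{2}$ fully matched pairs survive (the zero- and one-overlap terms vanishing by the degeneracy of the $U$-statistic kernel under the null, which the paper establishes in its Lemma on same-size blocks and invokes here as ``a similar argument as before''), and finish with the identity $\binom{B}{2}\binom{B+v}{2}=\binom{B\wedge(B+v)}{2}\binom{B\vee(B+v)}{2}$. Your explicit verification that $\mathbb{E}_X[h\mid y,y']$ integrates to zero in each remaining argument is a more detailed write-up of the same cancellation the paper carries out by factoring the joint law.
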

 \begin{proof}
 Note that
 \begin{align*}
   &  {\rm Cov} \left[   {\rm MMD}^2(X_i^{(B)}, Y^{(B)}), {\rm MMD}^2 (X_{i+s}^{(B+v)}, Y^{(B+v)})    \right]  \\
  =~&  {\rm Cov}   \left[   {\binom{B}{2} }^{-1}  \sum_{l < j}^{B}  h(X_{i,l}^{(B)}, X_{i,j}^{(B)}, Y_l^{(B)}, Y_j^{(B)}),   {\binom{B+v}{2} }^{-1}  \sum_{p < q}^{B+v}  h(X_{i+s,p}^{(B+v)}, X_{i+s,q}^{(B+v)}, Y_p^{(B+v)}, Y_q^{(B+v)})           \right] \\
   =~&  {\binom{B}{2} }^{-1}   {\binom{B+v}{2} }^{-1}     {\rm Cov}   \left[  \sum_{l < j}^{B}  h(X_{i,l}^{(B)}, X_{i,j}^{(B)}, Y_l^{(B)}, Y_j^{(B)}),     \sum_{p < q}^{B+v}  h(X_{i+s,p}^{(B+v)}, X_{i+s,q}^{(B+v)}, Y_p^{(B+v)}, Y_q^{(B+v)})           \right]  \\
      =~&  {\binom{B}{2} }^{-1}   {\binom{B+v}{2} }^{-1}   \binom{B \wedge (B+v)}{2}    {\rm Cov} \left[  h(x,x',y,y'),h(x'',x''', y, y')   \right]    \\
   =~&  {\binom{B \vee (B+v)}{2} }^{-1}  {\rm Cov} \left[  h(x,x',y,y'),h(x'',x''', y, y')   \right],
  \end{align*}
   where the second last equality is due to a similar argument as before to drop block indices as they are {\it i.i.d.}~under the null.

 \end{proof}

 \subsection{Covariance of Offline Scan $B$-Statistics.}\label{App:covoffline}
 \begin{proof}[Proof of Lemma \ref{lemma:cov_stats}]
For the offline case, we have that the correlation 
 \begin{align*}
r_{B, B+v}:=\frac{1}{\sqrt{ {\rm Var}[Z_B]}} \frac{1}{\sqrt{ {\rm Var}[Z_{B+v}]}} {\rm Cov} \left[  Z_B, Z_{B+v}  \right],
 \end{align*}
 where \begin{align*}
 {\rm Cov} \left(  Z_B, Z_{B+v}  \right) & = {\rm Cov} \left[  \frac{1}{N} \sum_{i=1}^N {\rm MMD}^2(X_i^{(B)}, Y^{(B)}), \frac{1}{N} \sum_{j=1}^n {\rm MMD}^2( X_j^{(B+v)}, Y^{(B+v)})   \right] \\
&=\frac{1}{N}  {\rm Cov} \left[  \mbox{MMD}^2(X_i^{(B)}, Y^{(B)}), \mbox{MMD}^2(X_i^{(B+v)}, Y^{(B+v)})      \right]   \\
& \,\,\,\,  +  \frac{1}{N^2}\sum_{i \neq j}  {\rm Cov} \left[   \mbox{MMD}^2(X_i^{(B)}, Y^{(B)}), \mbox{MMD}^2(X_j^{(B+v)}, Y^{(B+v)})    \right]   .
\end{align*}
Using results from  Lemma \ref{lemma:cov2} and Lemma \ref{lemma:cov_mmd}, we have:
\begin{align*}
 {\rm Cov} \left( Z_B, Z_{B+v}  \right) 
&={\binom{B \vee (B+v)}{2} }^{-1} \left[ \frac{1}{N} \mathbb{E} [h^2(x,x',y,y')]\right.\\
 &\quad\left.+\frac{N-1}{N}   {\rm Cov} \left[  h(x,x',y,y'),h(x'',x''', y, y')   \right] \right].
\end{align*}
Finally, plugging in the expressions for ${\rm Var} [Z_B]$ and ${\rm Var}[Z_{B+v}]$, we have (\ref{r_def}) for the offline case.

\subsection{Covariance of Online Scan $B$-Statistics} \label{App:covonline}
Similarly, for the online case we need to analyze
$
\rho_{t, t+s} :=\mbox{Cov} \left(Z_{B_0,t}', Z_{B_0,t+s}' \right).   
$
We adopt the same strategy as the above for a fixed block size $B_0$ to obtain
\begin{align}
& \mbox{Cov} \left( \mbox{MMD}^2(X_i^{(B_0, t)}, Y^{(B_0, t)}), \mbox{MMD}^2 (X_i^{(B_0, t+s)}, Y^{(B_0, t+s)}) \right)  \nonumber\\
 =~&  {\rm Cov}   \left[    {\binom{B_0}{2} }^{-1}  \sum_{l < j}^{B_0}  h(X^{(t)}_{i,l}, X^{(t)}_{i,j}, Y^{(t)}_l, Y^{(t)}_j  ),   {\binom{B_0}{2} }^{-1}  \sum_{p < q}^{B_0 } h(X^{(t+s)}_{i,p}, X^{(t+s)}_{i,q}, Y^{(t+s)}_p, Y^{(t+s)}_q  )           \right]  \nonumber \\
   =~& {\binom{B_0}{2} }^{-2}   \binom{(B_0-s) \vee 0}{2}  \mbox{\rm Var} [h(x,x',y,y')].   \label{covonline1}  
\end{align}
Figure \ref{demoonline1} (a) demonstrates how $\mbox{MMD}^2(X_i^{(B_0, t)}, Y^{(B_0, t)})$ and $\mbox{MMD}^2 (X_i^{(B_0, t+s)}, Y^{(B_0, t+s)})$ are constructed. The shaded areas represent the overlapping data.

\begin{figure}[H]
\vspace{-3mm}
        \begin{center}
        \begin{tabular}{cc}
                \includegraphics[width=.38\textwidth]{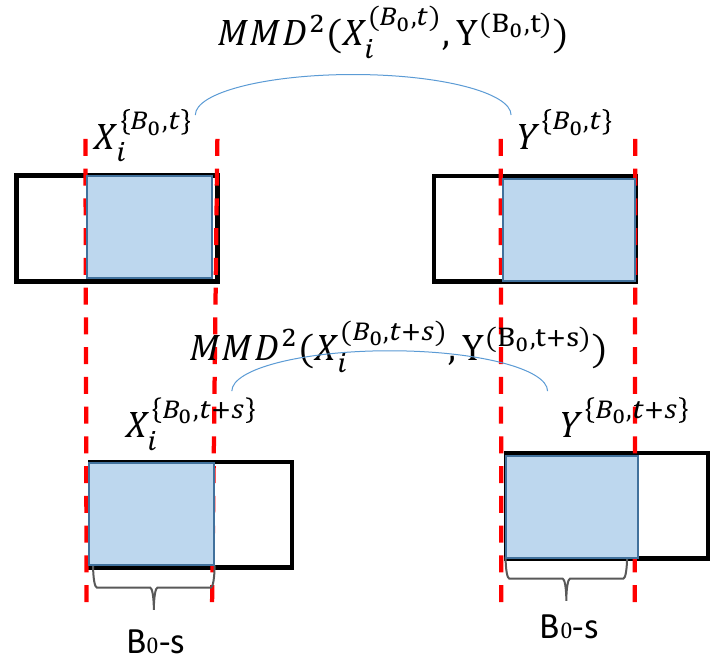} &
                \includegraphics[width=.55\textwidth]{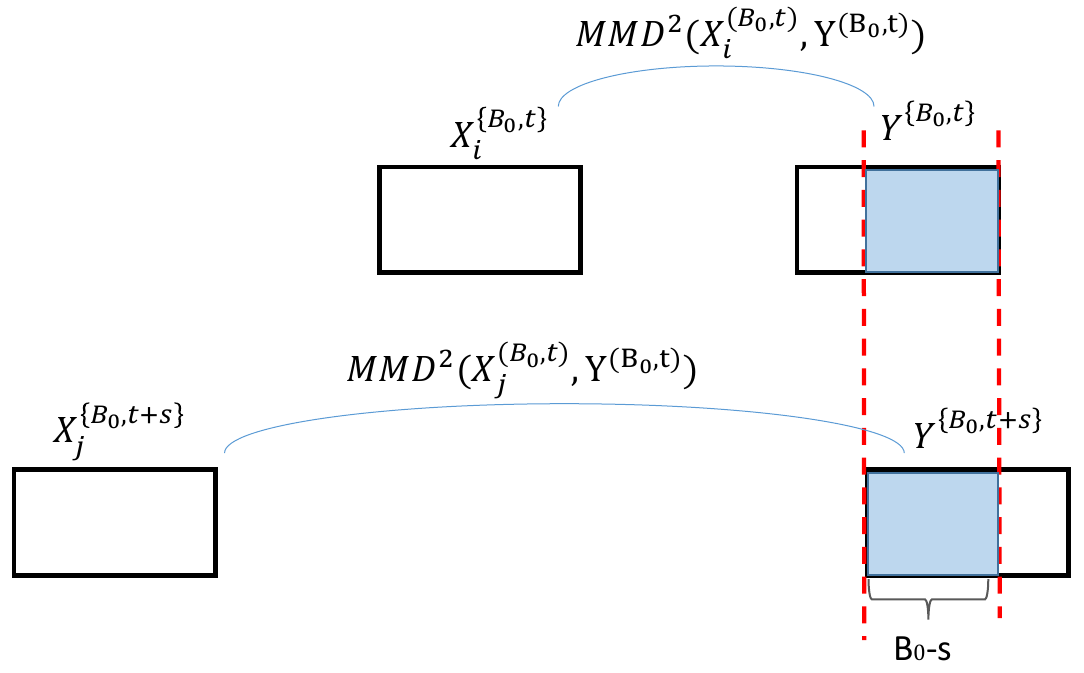}\\
(a) & (b)
\end{tabular}
        \end{center}
        \caption{\small (a): Illustration of how $\mbox{MMD}^2(X_i^{(B_0, t)}, Y^{(B_0, t)})$ and $\mbox{MMD}^2 (X_i^{(B_0, t+s)}, Y^{(B_0, t+s)})$ are constructed in the online change-point detection, where the shaded areas represent the overlapping data. (b): Illustration of how $\mbox{MMD}^2(X_i^{(B_0, t)}, Y^{(B_0, t)})$ and $\mbox{MMD}^2 (X_j^{(B_0, t+s)}, Y^{(B_0, t+s)})$, $j \neq i$ are constructed in the online change-point detection, where the shaded areas represent the overlapping data.}
        \label{demoonline1}                                
\end{figure}
Similarly, we have
\begin{align}
& \mbox{Cov} \left( \mbox{MMD}^2(X_i^{(B_0, t)}, Y^{(B_0, t)}), \mbox{MMD}^2 (X_j^{(B_0, t+s)}, Y^{(B_0, t+s)}) \right) \nonumber  \\
=~&  {\rm Cov}   \left[    {\binom{B_0}{2} }^{-1}  \sum_{l < k}^{B_0}  h(X^{(t)}_{i,l}, X^{(t)}_{i,k}, Y^{(t)}_l, Y^{(t)}_k  ),   {\binom{B_0}{2} }^{-1}  \sum_{p < q}^{B_0 } h(X^{(t+s)}_{j,p}, X^{(t+s)}_{j,q}, Y^{(t+s)}_p, Y^{(t+s)}_q  )           \right]  \nonumber \\
=~&  {\binom{B_0}{2} }^{-2} \binom{(B_0-s) \vee 0}{2}    \mbox{Cov} (h(x,x',y,y'), h(x'',x''',y,y')), \label{covonline2}
\end{align}
Figure \ref{demoonline1} (b) demonstrates how $\mbox{MMD}^2(X_i^{(B_0, t)}, Y^{(B_0, t)})$ and $\mbox{MMD}^2 (X_j^{(B_0, t+s)}, Y^{(B_0, t+s)})$, $j \neq i$ are constructed. The shaded areas represent the overlapping data.
Thus, 
\begin{equation*}
\begin{split}
&\mbox{Cov} \left( Z_{B_0, t}, Z_{B_0, k+s}  \right) \\
= ~& \mbox{Cov} \left(   \frac{1}{N} \sum_{i=1}^N \mbox{MMD}^2 (X_i^{(B_0, t)}, Y^{(B_0, t)}) , \frac{1}{N} \sum_{j=1}^N \mbox{MMD}^2 (X_j^{(B_0, t+s)}, Y^{(B_0, t+s)})   \right) \\
= ~&   {\binom{B_0}{2} }^{-2} \binom{(B_0-s) \vee 0}{2}   \Big[  \frac{1}{N} \mbox{\rm Var}(h(x,x',y,y'))\\
&~~+\frac{N-1}{N} \mbox{Cov} (h(x,x',y,y'),h(x'',x''',y,y'))  \Big] .
\end{split}
\end{equation*}
Finally, plugging in the expressions for ${\rm Var} [Z_{B_0, t}]$ and ${\rm Var}[Z_{B_0, t+s}]$, we have (\ref{eq:covonline}) for the online case.
\end{proof}

\section{PROOF OF THEOREM \ref{thm:tail_fixedsample}}
\label{proof_main_theorem}

Below, we present the main steps in proving Theorem \ref{thm:tail_fixedsample}, including (1) exponential tilting; (2) change-of-measure by the likelihood identity; (3) establish properties of the local field and the global term; and (4) perform asymptotic approximation using the localization theorem (Theorem 5.1 in \cite{Siegmund10} and Sec. 3.4 in \cite{YakirBook2013}) by showing that the ``global'' log likelihood and the ``local process'' are asymptotically independent. Finally, we collect terms together to obtain the result. 

\subsection{Step One: Exponential Tilting}

We first introduce exponential tilting, which creates a family of distributions that is related to the original distribution of $Z_B'$. Let the log moment generating function of $Z_B'$ be 
\begin{equation}
\psi(\theta) = \log \mathbb{E}[e^{\theta Z_B'}]. \label{phi}
\end{equation} Define a family of new measures
\begin{equation}
d \mathbb{P}_B= \exp\left\{ \theta Z_B' - \psi(\theta) \right\} d \mathbb{P}, \label{new_meas}
\end{equation}
where $\mathbb{P}$ represents the original probability measure of $Z_B'$ under the null distribution $P$, $\mathbb{P}_B$ is the new measure after the transformation, and $\theta$ parameterizes the family of the new measures.  Note that the new measures take the form of exponential family, with $\theta$ being the parameter. 

Recall that, under the null distribution, $Z_B'$ has zero mean and unit variance. Given the assumption that $Z_B'$ is a standard Gaussian random variable, the corresponding log moment generating function is given by $\psi(\theta) = \theta^2/2$. One has the freedom to select the value of $\theta$ to determine the new measure. We will set $\theta$  such that the mean under the tilted measure is equal to a given threshold $b$. This means that the new measure peaks at the threshold $b$, which enables us to use the local central limit theorem later on. This can be done by choosing $\theta$ such that $\dot{\psi}(\theta) = b$, and therefore $\theta = b$. Note that the solution $\theta$ does not depend on $B$. Hence, we can set the mean under the transformed measure to $b$, by uniformly choosing $\theta = b$ for any $B$. Given such a choice, the transformed measure is given by
$
d\mathbb{P}_B =  \exp\left\{ b Z_B'(x) - b^2/2 \right\} d\mathbb{P}. 
$
We also define, for each $B$, the log-likelihood ratio $\log (d\mathbb{P}_B/d \mathbb{P})$ of the form 
 \begin{equation}\label{likelihood_ratio}
 \ell_B =  b Z_B' - b^2/2. 
 \end{equation}
This way, we have associated the detection statistic $Z_B'$ with a likelihood ratio, even if $Z_B'$ itself does not come out of a likelihood ratio.

The following lemma shows that $Z_B'$ under the new measure has the same unit variance and its mean has been shifted to $b$. This key fact will lead to the desired exponential tail. 
 \begin{lemma}[Mean and variance under tilted measure]\label{lemma_mean_var_transformed}
 Define $\mathbb{E}_B$ and $\mbox{Var}_B$ as the expectation and variance under the transformed measures
 \begin{align}
 \mathbb{E}_B[U] & = \mathbb{E}[Ue^{\ell_B}], \label{def:exp_change} \\
  \mbox{\rm Var}_B[U] & = \mathbb{E}[U^2e^{\ell_B}] -  \mathbb{E}_B^2[U]. 
 \end{align}
 We have
$
 \mathbb{E}_B[Z_B'] = b$, and ${\rm Var}_B [Z_B'] = 1.$
 \end{lemma}
 \begin{proof}
First, $ \mathbb{E}_B[Z_B'] = \dot{\psi}(b)= b$ by construction. To show ${\rm Var}_B [Z_B']=1$, note that $\log \mathbb{E}[e^{bZ_B'}] = b^2/2$. Taking the derivative  of $\psi(\theta)$ with respect to $b$ twice gives $\mathbb{E}[(Z_B')^2 e^{bZ_B'}] = e^{b^2/2} + b^2 e^{b^2/2}$. Hence, $\mathbb{E}_B[(Z_B')^2] = \mathbb{E}[(Z_B')^2 e^{\theta Z_B'-\psi(b)}] = 1+ b^2$, and ${\rm Var}_B [Z_B'] = \mathbb{E}_B[(Z_B')^2] - b^2 = 1.$
\end{proof}

The following lemma shows that $Z_B'$ under the new measure has the same unit variance with the mean  shifted to $b$. This key fact will lead to the desired exponential tail. 
 \begin{lemma}[Mean and variance under tilted measure]\label{lemma_mean_var_transformed}
 Define $\mathbb{E}_B$ and $\mbox{Var}_B$ as the expectation and variance under the transformed measures
 \begin{align}
 \mathbb{E}_B[U] & = \mathbb{E}[Ue^{\ell_B}], \label{def:exp_change} \\
  \mbox{\rm Var}_B[U] & = \mathbb{E}[U^2e^{\ell_B}] -  \mathbb{E}_B^2[U]. 
 \end{align}
 We have
$
 \mathbb{E}_B[Z_B'] = b$, and ${\rm Var}_B [Z_B'] = 1.$
 \end{lemma}
 \begin{proof}
First, $ \mathbb{E}_B[Z_B'] = \dot{\psi}(b)= b$ by construction. To show ${\rm Var}_B [Z_B']=1$, note that $\log \mathbb{E}[e^{bZ_B'}] = b^2/2$. Taking the derivative  of $\psi(\theta)$ with respect to $b$ twice gives $\mathbb{E}[(Z_B')^2 e^{bZ_B'}] = e^{b^2/2} + b^2 e^{b^2/2}$. Hence, $\mathbb{E}_B[(Z_B')^2] = \mathbb{E}[(Z_B')^2 e^{\theta Z_B'-\psi(b)}] = 1+ b^2$, and ${\rm Var}_B [Z_B'] = \mathbb{E}_B[(Z_B')^2] - b^2 = 1.$
\end{proof}

\subsection{Step Two: Change-of-Measure}

Now we are ready to analyze the tail probability $
\mathbb{P} \left \{  \max_{2 \leq B \leq B_{\rm max}} Z_B'   > b    \right\}
$. The basic idea is to convert the original problem of finding the small probability that the maximum of a random field exceeds a large threshold to another problem: finding an alternative measure under which the event happens with a much higher probability. 

Here, the alternative measure will be a mixture of simple exponential tilted measures. Define the maximum and the sum for likelihood ratio differences relative to a particular parameter value $B$: 
\begin{equation}\label{def:MS}
M_B = \max_{s \in \{2,\dots, B_{\rm max}\}} e^{\ell_s - \ell_B}, \qquad
S_B =\sum_{s \in \{2,\dots, B_{\rm max}\}} e^{\ell_s - \ell_B}.
\end{equation}
Also define a re-centered likelihood ratio, which we call the {\it global term}
\[\tilde{\ell}_B =  b(Z_B' - b).\]
With the definitions above and the log likelihood ratios $\ell_B$ in (\ref{likelihood_ratio}), we have the following
\begin{equation}
\begin{split}
&  \mathbb{P}\left\{ \max_{2\leq B \leq B_{\rm max}}  Z_B' >b       \right\}  
= \mathbb{E} \left[ 1;\max_{2\leq B \leq B_{\rm max}}  Z_B' >b   \right]
= \mathbb{E} \Bigg[ \underbrace{\frac{\sum_{B=2}^{B_{\rm max}}  e^{\ell_B}   }{\sum_{s=2}^{B_{\rm max} } e^{\ell_s}}}_{=1}   ; \max_{2\leq u \leq B_{\rm max}} Z_u'  >b    \Bigg]\\
=& \sum_{B=2} ^{B_{\rm max}}  \mathbb{E}  \left[     \frac{e^{\ell_B}   }{\sum_s   e^{\ell_s}}   ; \max_{2\leq u \leq B_{\rm max}}Z_u' >b       \right]
\stackrel{(\ref{def:exp_change})}{=} \sum_{B=2}^{B_{\rm max}} \mathbb{E}_B \left[    \frac{1  }{\sum_s   e^{\ell_s}}   ; \max_{2\leq u \leq B_{\rm max}} Z_u' >b       \right] \\
= &~ e^{-b^2/2} \sum_{B=2}^{B_{\rm max}}     
     \mathbb{E}_B \left[ \frac{M_B}{S_B}    e^{- (\tilde{\ell}_B + \log M_B )}; \tilde{\ell}_B + \log M_B \geq 0       \right]   
\end{split}
\label{measure_transform}
\end{equation}
where an intermediate step is done by changing the measure to $\mathbb P_B$, and the last equality can be verified by simple algebra. Recall our notation $\mathbb E_B[\mathcal A; \mathcal B] = \mathbb E_B [\mathcal A \textbf{1}\{\mathcal B\}]$ for a random quantity $\mathcal A$ and event $\mathcal B$; $\textbf{1}$ denotes an indicator function. 

In a nutshell, the last equation in (\ref{measure_transform}) converts the tail probability to a product of two terms: a deterministic term $e^{-b^2/2}$ associated with the large deviation rate, and a sum of conditional expectations under the transformed measures. A close examination of the conditional expectations of the form $\mathbb{E}_B[\cdots; [\cdots] \geq 0]$ reveals that it involves a product of the ratio $M_B/S_B$, and an exponential function that depends on $\tilde{\ell}_B$, which plays the role of weight. Under the new measure $\mathbb P_B$, $\tilde{\ell}_B$ has zero mean and variance equal to $b^2$ (shown below in Lemma \ref{lm: global_field}) and it dominates the other term $\log M_B$ and, hence, the probability of exceeding zero will happen with much higher probability. Next, we characterize the limiting ratio and the other factors precisely, by the localization theorem.

\subsection{Step Three: Establish Properties of Local and Global Terms}\label{sec:property}

In (\ref{measure_transform}), our target probability has been decomposed into terms that only depend on (i) the {\it local field} $\{\ell_s -\ell_B\}$, $2\leq s \leq B_{\rm max}$, which are the differences between the log-likelihood ratio with parameter $B$ and with other parameter values $s$, $2\leq s \leq B_{\rm max}$, and (ii) the \textit{global term} $\tilde{\ell}_B$,  which is the centered and scaled likelihood ratio with parameter $B$. We need to first establish some useful properties of the local field and the global term under the tilted measure. We will eventually show that the local field and the global term are asymptotically independent.

The following property for the global term can be derived from Lemma \ref{lemma_mean_var_transformed}. The result shows that under the tilted measure, the global term $\tilde{\ell}_B$ has zero mean for any $B$, with variance diverging with $b$.  
\begin{lemma}[Global term for offline scan $B$-statistic]\label{lm: global_field}
The mean and variance of the global term $\tilde{\ell}_B = b(Z_B'-b)$, for $2\leq B \leq B_{\rm max}$, are given by
\begin{align} \label{eq:globalterm}
\mathbb{E}_B[ \tilde{\ell}_B] = 0, \quad {\rm Var}_B [\tilde{\ell}_B] = b^2.
\end{align}
\end{lemma}
Assuming $Z_B'$ is approximately normal, the local field $\ell_{s}-\ell_B$ (or equivalently $b(Z_{s}'-Z_B')$) and the global term $\tilde{\ell}_B$ (or equivalently $b(Z_B'-b)$) are also approximately normally distributed. 
\begin{lemma}[Local field for offline scan $B$-statistic]\label{lm: local_field}
The mean and variance of the local field $\{\ell_{s}-\ell_B\}$, for $|s-B| = 0, 1, 2, \ldots$, are given by
\[
\mathbb{E}_B [\ell_{s} - \ell_B] =  -b^2(1-r_{ s, B}), \quad  {\rm Var}_B [\ell_{s} - \ell_B] = 2b^2(1-r_{s,B}),\]
with $r_{s,B} $ defined in (\ref{r_def}). For any $s_1$ and $s_2$, the covariance between two local field terms is given by
\[{\rm Cov}_B\left(\ell_{s_1} - \ell_B , \ell_{s_2} - \ell_B  \right) =  b^2\left( 1+ r_{s_1, s_2} - r_{s_1, B} -r_{s_2, B}\right). 
\]
\end{lemma}

\begin{proof}
Note that $\ell_s - \ell_B = b(Z_s'-Z_B')$, $\mathbb{E}_B[Z_B']=b$, ${\rm Var}_B [Z_B']=1$. Moreover, due to the normal assumption of $Z_B'$, we have the following decomposition $\mathbb E_B [\ell_{s} - \ell_B] = \mathbb E_B[b(Z_s' - Z_B')] = \mathbb E_B[b(r_{s, B}Z_B' + (1-r_{s, B}^2)^{1/2}W - Z_B')] = -b^2(1-r_{s, B})$, where $W$ is a zero-mean random variable and independent of $Z_B'$, representing residual of regression. The variance and covariance can be found using similar decompositions. 
\end{proof}

\begin{remark}[Consequence of Lemma \ref{lm: local_field}]
\label{remark}
From the expression of the covariance in (\ref{r_def}), we have that for $s-B>0$, 
\[
r_{s, B}=  \left[1+(s-B)/B  \right]^{-1/2}  \left[1+ (s-B)/(B-1) \right)]^{-1/2},    
\]
and for  $s-B<0$,
\[
r_{s, B}=\left[1+(s-B)/B\right]^{1/2} \left[1+(s-B)/(B-1)\right]^{1/2}. 
\]
Consequently,
\begin{enumerate}
\item When $|s-B|\to \infty$, $r_{s, B}\to 0$. Therefore, when $|s-B| \to \infty$, $\mathbb{E}_B [\ell_{s} - \ell_B] $ converges to $-b^2$ and ${\rm Var}_B [\ell_{s} - \ell_B]$ converges $2b^2$.

\item When $|s-B|$ is small, assume $s = B+j$, $j = 0, \pm 1, \pm 2, \ldots$.
Perform the Taylor expansion of $r_{B+j, B}$ around 0, we have that
\begin{equation} \label{eq:appror}
r_{B+j,B} = 1-\frac{1}{2} \frac{2B-1}{B(B-1)} |j| +o(|j|).
\end{equation}

Define 
\begin{equation}
\mu = b\{(2B-1)/[B(B-1)]\}^{1/2}.\label{mudef}
\end{equation} Note that $\mu$ depends on the threshold as well as $B$, the block size parameter. Using (\ref{eq:appror}), we have
\begin{align*}
\lim_{|j|\to 0}\mathbb{E}_B [\ell_{B+j} - \ell_B] &=  -\frac{\mu^2}{2}|j|, \nonumber \\
  \lim_{|j|\to 0 }{\rm Var}_B [\ell_{B+j} - \ell_B] & = \mu^2|j| ,  \nonumber \\
\lim_{|j_1|\to 0, |j_2|\to 0 }{\rm Cov}_B\left(\ell_{B+j_1} - \ell_B , \ell_{B+j_2} - \ell_B  \right) & = 
\mu^2 (|j_1| \wedge |j_2|).
 \end{align*}
Therefore, when $|j|$ is small (i.e., in the neighborhood of zero), we can approximate the local field using a two-sided Gaussian random walk with drift $\mu^2/2$ and the variance of the increment being $\mu^2$:
\begin{equation}
\ell_{B+j}-\ell_B \stackrel{d}{=} \mu \sum_{i=1}^{|j|} \vartheta_{i} - \mu^2 j/2,\quad j = \pm 1, \pm2, \ldots
\end{equation}
where $\vartheta_{i}$ are {\it i.i.d.} ~standard normal random variables. 
\end{enumerate}
\end{remark}

\subsection{Step Four: Approximation Using Localization Theorem}

The remaining work is to compute the conditional expectations $\mathbb{E}_B[\cdots; (\cdots) \geq 0]$ for each $B$ in (\ref{measure_transform}). 
In the following, we drop the subscript $B$ in $\mathbb E_B$ for simplicity, and the approximation results hold for each $B$. 
We assume $b \to \infty$, $B_{\rm max}\to \infty$, and $b^2/B_{\rm max}$ is held to a fixed positive constant. Introduce an abstract index $\kappa$ and let $\kappa= b^2$; this choice is because $\kappa^{1/2}$ is the multiplicative factor that balances the rate of convergence of the global term under the transformed measure. Typically, $\kappa$ is equal to the variance of the global term $\tilde{\ell}_B = b(Z_B'-b)$, which is $b^2$ as shown in Lemma \ref{lm: global_field}; $\kappa$ is also associated with the drift and the variance of the incremental of the local field $\{\ell_{s}-\ell_B\}$ for $|s-B| = 0, 1, 2, \ldots$, as shown in Lemma \ref{lm: local_field}.

Using a powerful localization theorem (see Theorem 3.1 in \citep{Siegmund10} or Theorem 5.2 in \citep{YakirBook2013}), we can obtain the limit for each term in the summand of (\ref{measure_transform}), rewritten as (by changing the index to $\kappa$)
\begin{equation}\label{def:conditionalexp} 
\mathbb{E}\left[ \frac{M_{\kappa}}{S_{\kappa}} e^{- (   \tilde{\ell}_{\kappa} + \log M_{\kappa} ) } ; \tilde{\ell}_{\kappa} + \log M_{\kappa} \geq 0       \right],
\end{equation}
when $\kappa \to \infty$. Basically, the localization theorem states that (\ref{def:conditionalexp}) scaled by $\kappa^{\frac{1}{2}}$ converges under mild conditions when $\kappa \to \infty$.

The statement of the theorem involves a local $\sigma$-algebra denoted as $\widehat{\mathcal{F}}_{\kappa}$:\begin{equation}
\widehat{\mathcal{F}}_{\kappa}= \sigma\{ \ell_s-\ell_B : |s-B | \leq g(\kappa) \}, \label{Fk}
\end{equation}
where a function $g(\kappa)$ specifies the size of the local region. The choice of $g(\kappa)$ is critical and it guarantees subsequent convergence. Following the analysis of scan statistics in \citep{YakirBook2013}, we choose $g(\kappa) = c b^{-2}$ for some large constant $c$. This local $\sigma$-field is asymptotically independent of $\tilde{\ell}_{\kappa}$, and it carries all information needed to construct the local field.

Define $\widehat{M}_{\kappa}$ and $\widehat{S}_{\kappa}$ as the maximization and summation restricted to a smaller subset of parameter values $\{s: |s-B | \leq g(\kappa) \}$, and they are measurable with respect to $\widehat{\mathcal{F}}_{\kappa}$. Note that $\widehat{M}_{\kappa}$ and $\widehat{S}_{\kappa}$ serve as approximations to $M_{\kappa}$ and $S_{\kappa}$. In the limit, the local random field converges to a Gaussian random field, and the ratio $\mathbb E[\widehat{M}_\kappa/\widehat{S}_\kappa]$ converges to a limit that can be determined with the parameters of the Gaussian random field. 
 
The localization theorem (Theorem 5.1 in \cite{Siegmund10} and Sec. 3.4 in \cite{YakirBook2013}) consists of the five conditions as follows. 
\begin{theorem}[Localization theorem]\label{the:aymp}
Given $\epsilon > 0$, if for all large $\kappa$, all following conditions hold
\begin{enumerate}
\item[I.] Both $0 < M_{\kappa} \leq S_{\kappa} < \infty$ and $0 < \widehat{M}_{\kappa} \leq \widehat{S}_{\kappa} < \infty$ hold in probability one.
\item[II.] 
Denote $A^c = \{| \log M_{\kappa} - \log \widehat{M}_{\kappa}| > \epsilon \} \cup \{|\widehat{S}_{\kappa}/S_{\kappa} -1 | > \epsilon\} $. For some $0 < \delta$ that does not depend on $\epsilon$:
\begin{align*}\label{eq:newcondition}
\max_{|x| \leq 3g(\kappa)} \mathbb{P} \left[ A^c \cap \{ \tilde{\ell}_{\kappa} + \log \widehat{M}_{\kappa} \in x + (0, \delta]\} \cap \{ |\hat{m}| \leq g(\kappa)\} \right] \leq \epsilon \kappa^{-1/2},
\end{align*} 
where $\hat{m}_{\kappa}=\min\{ \log \widehat{M}_{\kappa}, g(\kappa)\}-\log (1-\epsilon)$.
\item[III.] $\mathbb{E}[ \widehat{M}_{\kappa} / \widehat{S}_{\kappa}]$ converges to a finite and positive limit denoted by $\mathbb{E}[M/S]$.
\item[IV.] There exist $\mu_{\kappa} \in \mathbb{R}$ and $\sigma_{\kappa} \in \mathbb{R}^+$ such that for every $0 < \epsilon'$, $\delta$, for any event $E \in \widehat{\mathcal F}_\kappa$ and for all large enough $\kappa$
\[
\sup_{|x|\leq \epsilon \kappa^{1/2}} \left|\kappa^{1/2}\mathbb{P}(\tilde{\ell}_\kappa \in x + (0, \delta], E) - \frac{\delta}{\sigma}\phi\left(\frac{\mu}{\sigma}\right)\mathbb{P}(E)\right|\leq \epsilon' .
\]
\item[V.] $\mathbb{P}(|\log M_\kappa|>\epsilon \kappa^{1/2})$, $\mathbb{P}(|\log \widehat{M}_\kappa| > \epsilon \kappa^{1/2})$ and $\mathbb{P}(\log M_\kappa - \log \widehat{M}_\kappa < -\epsilon)$ are all $o(\kappa^{-1/2})$. 
\end{enumerate}
Then
\begin{align}
\lim_{\kappa \to \infty} \kappa^{1/2} \mathbb{E}\left[ \frac{M_{\kappa}}{S_{\kappa}}   e^{- \left[\tilde{\ell}_{\kappa} + \log M_{\kappa} \right]} ; \tilde{\ell}_{\kappa} + \log M_{\kappa} \geq 0       \right]  = \sigma^{-1} \phi\left(\frac{\mu}{\sigma}\right) \mathbb{E}[M/S],
\end{align}
where $\phi(\cdot)$ is the density of the standard normal distribution.
\end{theorem}

Intuitively, the localization theorem says the following. To find the desired limit of (\ref{def:conditionalexp}) as $\kappa \to \infty$, one first approximates $M_{\kappa}$ and $S_{\kappa}$ by their localized versions, which are obtained by restricting the maximization and summation in a neighborhood of parameter values. Then one can show that the localized ratio $M_{\kappa}/S_{\kappa}$ is asymptotically independent of the global term $\tilde{\ell}_{\kappa}$ as $\kappa \to \infty$. The asymptotic analysis is then performed on the local field and the global term separately. The expected value of the localized ratio $\mathbb{E}[M_{\kappa}/S_{\kappa}]$ converges to a constant independent of $\kappa$, and the limiting conditional distribution of $\tilde{\ell}_{\kappa}$ can be found using the local central limit theorem. Thus, one can calculate the remaining conditional expectation involving $\tilde{\ell}_{\kappa}$. 

\vspace{0.1in}
\noindent\textbf{Checking conditions.}
Let us now verify the validity of the conditions in our setting. First, {\it Condition I} is met since for Gaussian random variables, $M_\kappa >0$, $S_\kappa >0$ with probability 1, and the maximization of a collection of non-negative numbers is smaller or equal to the summation. Similar arguments hold for their counterparts $\widehat{M}_\kappa >0$ and $\widehat{S}_\kappa >0$ when the maximization and summation are over a smaller set.

{\it Condition II} describes that the localized versions $\widehat{M}_{\kappa}$ and $\widehat{S}_{\kappa}$ are good approximations of $M_{\kappa}$ and $S_{\kappa}$ when $\kappa$ is sufficiently large, for properly defined $\widehat{\mathcal F}_\kappa$. 
In Section 3.4.4 of \cite{YakirBook2013}, the corresponding Condition II has been rigorously checked, assuming a local region defined in the same form of our local region and assuming Gaussian random field. Thus, checking Condition II for our case will follow the same steps, using the properties established in Section \ref{sec:property}. We omit the details here.

{\it Condition III} is checked by applying the distributional approximations to the localized version of $M_{\kappa}/S_{\kappa}$. We can show that the expectation of the ratio $\mathbb{E}[ \widehat{M}_{\kappa} / \widehat{S}_{\kappa}]$ converges to a finite and positive limit denoted by $\mathbb{E}[M/S]$,  which does not depend on $\kappa$. Since the increment $\ell_{B+j}-\ell_B$ has negative mean as shown in Lemma \ref{lm: local_field}, the values of $M_{\kappa}$ and $S_{\kappa}$ will be determined by values $j$ close to 0, so is the ratio $M_{\kappa}/S_{\kappa}$. This implies, a relatively small local region centered on $B$ is sufficient. 

From Remark \ref{remark}, the local field when the index is close to the shifted measure parameter $B$ can be approximated as a two-sided Gaussian random walk with drift $-\mu^2/2$ and variance $\mu^2$ (with $\mu$ defined in (\ref{mudef})), which is denoted as
 $W(\mu^2 j) $ below. Therefore, we have that with high probability, 
\begin{align*}
\mathbb{E}[ \widehat{M}_{\kappa} / \widehat{S}_{\kappa}] = \mathbb{E}\left[ \frac{\max_{|j| \leq cb^{-2}} e^{W(\mu^2 j)}}{\sum_{|j| \leq c b^{-2}} e^{W(\mu^2 j)}} \right] .
\end{align*}
When $c \to \infty$, it approaches to a limit known as the {\it Mill's ratio}
\begin{align*}
\mathbb{E}[ M / S] = \mathbb{E}\left[ \frac{\max_{|j| } e^{W(\mu^2 j)}}{\sum_{|j| } e^{W(\mu^2 j)}} \right], 
\end{align*}
with maximization and summation extending to the entire collection of negative and positive integers. The Mill's ratio is related to the Laplace transform of the overshoot of the maxima of Gaussian random field over a threshold $b$, and an expression has been obtained based on nonlinear renewal theory (see, \citep{SiegmundBook1985} and Chapter 2.2 of the book \citep{YakirBook2013}): $\mathbb{E}[ M / S]=\exp (  -2 \sum_{j=1}^{\infty} \Phi (-j^{1/2} \mu/2)).$ An easier numerical evaluation is given by
$
 \mathbb{E}[ M / S] \approx (\mu^2/2) \nu(\mu)
$
for a special function $\nu(\mu)$ defined in (\ref{eq:fucnu}).

{\it Condition IV} can be checked via a local multivariate central limit theorem that is local in one component and non-local in others (Theorem 5.3 in \cite{YakirBook2013}). The theorem says the following: assuming $\xi_i$ are independent, identically distributed random vector of dimension $d+1$. Assume the mean of each vector is zero, and variance of the first component converges to a finite $\sigma$, the covariance matrix of the last $d$ components converges a finite matrix $\Sigma$, and the correlation between these components and the first one converges to zero (hence, the overall covariance matrix is block-diagonal). Define $S_\gamma = \sum_{i=1}^\gamma \xi_{i, 1}$ and a $d$ dimensional vector with element $h_{\gamma,j} = \gamma^{-1/2} \xi_{i, j}$, for $1\leq j \leq d$. Then under mild conditions, 
\begin{equation}
\lim_{\gamma\rightarrow \infty} \gamma^{1/2} \mathbb{P}(S_\gamma \in [l, u], h_\gamma \in \mathcal A) = \frac{l-u}{(2\pi)^{1/2}\sigma} \mathbb P(h \in \mathcal A) 
\label{local_CLT}
\end{equation}
for any interval $[l, u]$ and an arbitrary set $\mathcal A$. 

Our setting matches exactly to the above distribution when we set the global term as the first component and the local field as the remaining components. Using the properties in Section \ref{sec:property}, we have shown the finite mean and variance (covariance) of the global and local field terms. We only need to show the global term, and the local fields are independent of each other asymptotically.  It suffices to prove that the conditional covariance of  $\{\ell_{B+j}-\ell_B\}$ given $\tilde{\ell}_B$ converges to the unconditional covariance, and the conditional means converges to the unconditional one. 
With a slight abuse of notation, $r_1 = r_{B+j_1, B}$ and $r_2 = r_{B+j_2, B}$ and using the linear regression decomposition, when conditioning on $Z_B'$ (which is proportional to $\tilde{\ell}_{B}$), the two local field terms are independent of each other:
\begin{align*}
&{\rm Cov}(b(Z_{B+j_1}'-Z_B'), b(Z_{B+j_2}'-Z_B') | Z_B')\\
&= {\rm Cov}(b( r_1 Z_B'+ (1-r_1^2)^{1/2} W_1-Z_B'), b(r_2 Z_B'+ (1-r_2^2)^{1/2} W_2-Z_B') | Z_B') = 0.
\end{align*}
where $W_1$ and $W_2$ are two mutually independent zero-mean random variables that represent the regression residuals (they are also independent of $Z_B'$).

On the other hand, using the same decomposition, we can show that without conditioning, the covariance is given by
\[
{\rm Cov}(b(Z_{B+j_1}'-Z_B'), b(Z_{B+j_2}'-Z_B'))
= b^2(1-r_1)(1-r_2).
\]
Hence, when $b \to \infty$, due to the property of local field in equation (\ref{eq:appror}), for $|j_1|\leq cb^{-2}$, $|j_2|\leq cb^{-2}$, the unconditioned covariance converges to zero given (\ref{eq:appror}), which is equal to the conditioned covariance. Similarly, we can show that the conditional means of $\{Z_{B+j}' - Z_B'\}$ conditioning on $Z_B'$ converges to the unconditional ones.  

Now we invoke the local central limit theorem. Since the density of the global term $\tilde{\ell}_B$ is approximately normal, we can calculate a desired form of the probability. From (\ref{eq:globalterm}), the variance of the global term increases with $b$. The density of $\tilde{\ell}_B$ can be uniformly approximated by $1/(2\pi b^2)^{1/2}$ within a small region around the origin $|x|\leq 3(4/+1+\epsilon)\log b$ \citep{YakirBook2013}. Such an approximation also holds for $ \tilde{\ell}_B-x$ given any value $x$ that is not too large. Furthermore, notice that $\log \hat{M}_{\kappa}$ is very close to 0 and therefore is negligible; this is because $e^{\ell_s-\ell_B}$ should attain its maximal value when $|s-B|$ close to 0 as analyzed before. 
Let $\mu_{\kappa} = \mathbb{E}_B[\tilde{\ell}_{\kappa} /b ] =0$ and $\sigma_{\kappa}^2  = {\rm Var}_B[ \tilde{\ell}_{\kappa} /b]=1$. When $\kappa = b^2 \to \infty$, using local central limit theorem  (\ref{local_CLT}), we have that
\begin{align}
\kappa^{1/2} \mathbb{P} \left( \tilde{\ell}_{\kappa} \in x - \log \hat{M}_{\kappa} + (0, \delta] \right) \to \frac{\delta}{\sigma_{\kappa}}\phi\left(\frac{\mu_{\kappa}}{\sigma_{\kappa}} \right).
\end{align}

{\it Condition V} is checked as follows. Note that the terms inside the $M_{\kappa}$ are likelihood ratios with unit expectation since $\mathbb{E}_B[ \exp(\ell_B)] = 1$. Thus, $\exp(\ell_s - \ell_B)$ is a martingale and by a standard martingale inequality, $\mathbb P(\log M_\kappa > \epsilon \kappa^{1/2}) \leq \exp(-\epsilon \kappa^{1/2})$. Then using a similar argument as in \cite{Siegmund10}, one can show the other two inequalities, since $\widehat{M}_\kappa$ is an approximation to $M_\kappa$.

Finally, since all conditions are met, we can now apply the localization theorem for $b \to \infty$ and put things together to obtain
\begin{equation}\label{eq:asyapprox}
\mathbb{E}_B \left[ \frac{M_B}{S_B}    e^{-[\tilde{\ell}_B+\log M_B]}   ;   \tilde{\ell}_B + \log M_B \geq 0       \right]    = \frac{\mu^2}{2} \nu(\mu) \frac{1}{\sqrt{2\pi b^2}} (1+o(1)).
\end{equation} 
Substitute (\ref{eq:asyapprox}) back to the likelihood ratio identity (\ref{measure_transform}), and we arrive at the approximation in Theorem \ref{thm:tail_fixedsample}.

\section{PROOF OF THEOREM \ref{thm:ARL}}
\label{theorem_ARL}

The method for approximating the ARL is related to that used to analyze the offline scan $B$-statistic. In addition, we need the following lemma.
\begin{lemma}[Asymptotic null distribution of $T$] \label{lem:stoppingtime}
Under the null, when $b\rightarrow\infty$, the stopping time $T$ defined in (\ref{stopping-time}) is uniformly integrable and asymptotically exponentially distributed, i.e.,  
\[
|\mathbb{P}\{T\geq m\} - \exp(-\lambda_0 m)|\rightarrow 0,\] in the range where $m\lambda_0$ is bounded away from 0.
\end{lemma} 

\begin{proof}
The proof is based on adapting arguments in \cite{GLR-changepoint1995, SiegmundYakirImage,Yakir09}. The main idea is to show that  the number of boundary cross events for detection statistic over disjoint intervals converges to Poisson random variable in the total variation norm, resulted from the Poisson limit theorem (Theorem 1 in \cite{Arratia89}) for dependent samples. 
First, we show that the stopping time $T$ is asymptotically exponentially distributed. The analysis of the distribution of the stopping time is based on Poisson approximation. 
Define an indicator of the event $\textbf{1}_j$ such that the event $\textbf{1}\{ \max_{(j-1)m \leq t \leq jm} Z'_{B_0, t} > b\}$. Consider the time interval $[0, x]$. Note that the stopping time is not activated in the interval $[0, x]$, if and only if, all the relevant indicators are zero. For simplicity, we assume $x$ is divisible by $m$. Define the random variable $\widehat{W} = \sum_{j=1}^{x/m} \textbf{1}_j$. Hence, 
$
\{\widehat{W} = 0\}
= \{T_b > x\}. 
$
Thus, to characterize the tail probability of the stopping time $\mathbb{P}\{T_b >  x\}$, we show that the sum of the indicator functions converge to a Poisson distribution. 
\end{proof}

Using Lemma \ref{lem:stoppingtime}, we know for large $m$, $\mathbb{P}\{T\leq m\}$ is approximately $1-\exp(-\lambda_0 m) \approx \lambda_0 m$, and $\mathbb{E}\{T\}$ is equal to $ \lambda_0^{-1}$ asymptotically when $b \to \infty$. So the remaining question is to find the probability and the corresponding $\lambda_0$.
Consider
$
\mathbb{P}\{T \leq m\}  = \mathbb{P}\{\max_{2\leq t\leq m} Z_{B_0,t}' > b \}.  
$
Suppose $m>B_0$ and $\log b \ll m \ll b^{-1} e^{\frac{1}{2}b^2}$.
We will adopt a similar strategy to approximate this probability using the change-of-measure technique. 

Note that the covariance structures for online and offline scan $B$-statistics are different, so there will be different drift parameters when we invoke the localization theorem.
Using exponential tilting, we introduce a likelihood ratio   
\[\zeta_t =  b Z_{B_0, t}' - b^2/2.\] 
Again using the change-of-measure by likelihood ratio identity, we obtain
\begin{equation}
\begin{split}
 \mathbb{P}\left\{ \max_{2 \leq t\leq m} Z_{B_0,t}' > b   \right\}  
=  e^{-b^2/2} \sum_{t=2}^{m}     
     \mathbb{E}_t \left[ \frac{M_t'}{S_t'}    e^{- \left[\tilde{\zeta}_t + \log M_t \right]}; \tilde{\zeta}_t + \log M_t' \geq 0       \right],
\end{split}
\label{measure_transform_online}
\end{equation}
where 
\begin{equation*}
M_t' = \max_{2 \leq s \leq m} e^{\zeta_s - \zeta_t}, \quad 
S_t' =\sum_{2\leq s \leq m} e^{\zeta_s - \zeta_t}, \quad  \mbox{and} \quad
\tilde{\zeta}_t =  b(Z_{B_0, t}' - b).
\end{equation*}
Hence, one can again apply the localization theorem to find the approximation when $b\to \infty$, and the only differences are in the definition and characterization of global and local field terms.

\begin{lemma}[Local field of online scan $B$-statistic]\label{lm: local_field_online}
The mean, variance, and covariance of the local field $\{\zeta_{s}-\zeta_t\}$ are given by
\begin{align*}
&\mathbb{E}_t [\zeta_{s} - \zeta_t] =  -b^2(1-\rho_{s, t}), \quad  {\rm Var}_t [\zeta_{s} - \zeta_t] = 2b^2(1-\rho_{s,t}), \nonumber \\
&{\rm Cov}_t\left(\zeta_{s_1} - \zeta_t , \zeta_{s_2} - \zeta_t  \right) =  b^2\left( 1+ \rho_{s_1, s_2} - \rho_{s_1, t} -\rho_{s_2, t}\right),
 \end{align*}
where
\begin{equation} \label{eq:covonline}
\rho_{s, t} = {\rm Cov}(Z_{B_0, s}', Z_{B_0, t}' ) =\frac{   \binom{(B_0-|t-s|) \vee 0}{2} } { \binom{B_0}{2} }  .
\end{equation}
\end{lemma}
The proof can be found in Appendix \ref{App:covonline}.
Note that when $|t-s|$ is close to 0, $\mathbb{E}_t [\zeta_{s} - \zeta_t] $ is close to 0. With an increasing $|t-s|$, $\mathbb{E}_t [\zeta_{s} - \zeta_t] $ decreases until $|t-s|>B_0$ (when there are no overlapping test data in the sliding block), then $ \mathbb{E}_t[ \zeta_s-\zeta_t] $ becomes $-b^2$.  The values of $M_{\kappa}$ and $S_{\kappa}$ as in localization theorem will be determined by the values of $|j|$ close to 0. 

Now, again, we will use an argument based on Taylor expansion to find the drift term of the local field. When $|s-t|$ is close to 0, we can approximate $\{\zeta_{s}-\zeta_t\}$ as a two-sided random walk. 
Using Taylor expansion, we have 
\begin{equation}
\rho_{t+j, t} = 1-\frac{2B_0-1}{B_0(B_0-1)} |j| + o(|j|).
\end{equation} Let $\lambda = b[2(2B_0-1)]/[B_0(B_0-1)]^{1/2}$.  Hence, we can show that the mean, variance, and covariance of the local field are approximately
\begin{align*}
\lim_{|j|\to 0}\mathbb{E}_t [\zeta_{t+j} - \zeta_t] &=  -\frac{\lambda^2}{2}|j|, \nonumber \\
  \lim_{|j|\to 0 }{\rm Var}_t [\zeta_{t+j} - \zeta_t] &= \lambda^2|j| ,  \nonumber \\
\lim_{|j_1|\to 0, |j_2|\to 0 }{\rm Cov}_t\left(\zeta_{t+j_1} - \zeta_t , \zeta_{t+j_2} - \zeta_t  \right) & = \lambda^2 (|j_1| \wedge |j_2|).
 \end{align*}
As a result, by invoking the localization theorem through a similar set of steps, we obtain
\begin{equation}
\mathbb{P}  \left\{ T \leq m \right\}  
 =  m  \cdot  \frac{be^ {-\frac{1}{2}  b^2 }}{\sqrt{2 \pi}}   
\frac{ (2B_0-1)}{B_0(B_0-1)} \cdot \nu \left(  b \sqrt{\frac{2(2B_0-1)}{B_0(B_0-1)}} \right) (1+o(1)),
\label{tail-online}
\end{equation}
Matching this to above, we know $\lambda_0$ is the factor that multiplies $m$ and this leads to the desired result.

For online scan $B$-statistics, the standard Poisson limit cannot be directly applied, since the events $\{\textbf{1}_j \}$, $j = 1, \dots, x/m$, are not independent, and we need the generalized Poisson limit theorem  \citep{Arratia89}, which allows for dependence between the variables. The setup for the theorem is as follows. Let $I$ be an arbitrary index set, and for $\alpha \in I$, let $X_\alpha$ be a Bernoulli random variable with $p_\alpha = \mathbb P(X_\alpha = 1) > 0$. Let $W = \sum_{\alpha \in I} X_\alpha$. For each $\alpha \in I$, suppose we choose $B_\alpha \subset I$ with $\alpha \in B_\alpha$. Think of $B_\alpha$ as a ``neighborhood of dependence'' for each $\alpha$, such that $X_\alpha$ is independent or nearly independent of all of the $X_\beta$ for $\beta \notin B_\alpha$. Define $p_1 = \sum_{\alpha \in I} \sum_{\beta \in B_\alpha} p_\alpha p_\beta$, $p_2 = \sum_{\alpha_I}\sum_{\alpha\neq \beta \in B_\alpha} \mathbb{E}(X_\alpha X_\beta)$, $p_3 = \sum_{\alpha \in I} \mathbb{E}|\mathbb E(X_\alpha - p_\alpha | \sigma(X_\beta: \beta \in I - B_\alpha))|$, where $\sigma(\cdot)$ represents the $\sigma$-field generated by the corresponding random field. Loosely speaking, $p_1$ measures the neighborhood size, $p_2$ measures the expected number of neighbors of a given occurrence and $p_3$ measures the dependence between an event and the number of occurrences outside its neighborhood. Then, we have the following theorem.
\begin{theorem}[Poisson approximation, Theorem 1 in \citep{Arratia89}]
Let $W$ be the number of occurrences of dependent events, and let $Z$ be a Poisson random variable with $\mathbb E Z = \mathbb E W = \lambda > 0$. Then the total variation distance between the distributions of $W$ and $Z$ is bounded by
\[
\sup_{\|h\|=1} |\mathbb{E} h(W) - \mathbb E h(Z)|
\leq p_1 + p_2 + p_3.
\]
where $h: \mathbb Z^+ \rightarrow \mathbb R$, $\|h\| = \sup_{k\geq 0}|h(k)|$. 

\end{theorem} 
The theorem is a consequence of the powerful Chen-Stein method.

Invoking the above theorem in our online scan $B$-statistics setting, we can bound the total variation distance between the random variable, defined as the number of boundary cross events for the statistic over disjoint intervals, and a Poisson random variable with the same rate. In our setting, let $I = \{1, 2, \dots, x/m\}$ and $\mathcal{N}(j)=\{j-1, j, j+1\}$ where $j =2, \dots (x/m-1)$ (with obvious modifications for $j = 1$ and $j = x/m$). Then we can specify: 
\begin{align}
&p_1 = \sum_{j \in I} \sum_{i \in \mathcal N (j) \setminus \{j\}}\mathbb{P}\{\textbf{1}_j=1 \} \mathbb{P}\{\textbf{1}_i=1 \} = 2(x/m-2)\mathbb{P}\{ \textbf{1}_1=1\}^2 +2 \mathbb{P}\{ \textbf{1}_1=1\} \label{p1} ,\\
& p_2 = \sum_{j \in I} \sum_{i \in \mathcal N (j) \setminus \{j\}} \mathbb{P}\{ \textbf{1}_j=1,\textbf{1}_i=1\} = 2(x/m-1)\mathbb{P}\{\textbf{1}_1=1,\textbf{1}_2=1  \} \label{p2} ,\\
& p_3 = \sum_{j \in I}\mathbb{E}\left\{ |\mathbb{E}\{\textbf{1}_j | \sigma\{\textbf{1}_i: i \not\in \mathcal N (j) \} \} -\mathbb{E}\{\textbf{1}_j \}|\right\} \label{p3}.
\end{align}

We will show that $p_1$, $p_2$, and $p_3$ converge to 0 as $b \to \infty$. For $p_1$, the last summand in (\ref{p1}) is associated with the two edge elements. It follows that $p_1$ is asymptotically to $(2C+2) \mathbb{P}\{ \textbf{1}_1=1\}$, which will converge to zero as $b\to \infty$ since $\mathbb{P}\{ \textbf{1}_1=1\}$ converges to zero when $m$ is sub-exponential, i.e., $\log b \ll m \ll b^{-1} e^{\frac{1}{2}b^2}$.
Next, let us examine $p_2$ in (\ref{p2}). Redefine parameter sub-region
\begin{align*}
S_1 = [0, m-B_0/2], \quad
S_2 = [m-B_0/2,m+B_0/2], \quad
S_3 = [m+B_0/2,2m],
\end{align*}
and denote $Y_i$, $i = 1, 2, 3$ as $\{Y_i=1\}=\{\max_{t \in S_i} Z'_{B_0,t} >b\}$, which are the indicator functions of crossings of the threshold in the approximate sub-regions. Notice that the indicator functions $Y_1$ and $Y_3$ are independent of each other and they share the same distribution. We use the fact that unless  the crossing occurs in a shared sub-region, it must simultaneously occur in two disjoint sub-regions in order to have double crossing. As a consequence, we obtain the upper bound $ \textbf{1}_1\cdot  \textbf{1}_2 \leq Y_2+ Y_1 \cdot Y_3$, and  
\begin{align*}
\mathbb{P}\{  \textbf{1}_1=1,  \textbf{1}_2=1\} \leq \mathbb{P}\{Y_2=1\}+\mathbb{P}\{Y_1=1\}^2 \leq \mathbb{P}\{Y_2=1\} + \mathbb{P}\{\textbf{1}_1=1\}^2.
\end{align*}
The probability $ \mathbb{P}\{Y_2=1\}$ is proportional to $B_0 \cdot b e^{-\frac{1}{2}b^2}$. Consequently, $p_2$ is asymptotically bounded by $2 C (B_0/m +\mathbb{P}\{\textbf{1}_1=1\} )$. Hence, $p_2$ converges to zero if $\log b \ll m \ll b^{-1} e^{\frac{1}{2}b^2}$ whenever $b \to \infty$. For $p_3$ in (\ref{p3}), $\textbf{1}_j$ and $\textbf{1}_i$ are computed over non-overlapping observations and are therefore independent. Thus, the term $p_3$ vanishes.

Next prove that the collection of stopping times $\{T_b\}$ indexed by $b$ is {\it uniformly integrable}. Again consider the sequence of indicators $\{\textbf{1}_j\}$, $j = 2k$ and $k=1, 2, \dots$. Define the random variable $\tau$ that identifies the index of the first indicator in the sequence that obtains the value one:
$
\tau = \inf\{k: \textbf{1}_{2k}=1\}.
$
Note that $\tau$ has a geometric distribution. 
Moreover, since $T_b \leq 2m\tau$ we obtain that
\[
\mathbb{P}\{ T_b > x\} \leq \mathbb{P}\{\tau > x/ (2m)\} = (1-\mathbb{P}(\textbf{1}_2=1))^{\lfloor x/(2m)\rfloor}.
\]
The conclusion then follows from that $1/m\cdot \mathbb{P}(\textbf{1}_2=1)$ converges to 0.

\section{SKEWNESS CALCULATION}\label{proof:skewness}
In the following, Lemma \ref{lemma: mmdiii}, Lemma \ref{lemma: mmdiij}, and Lemma \ref{lemma: mmdijr} are used to derive the final expression for the skewness of the scan $B$-statistic:

\begin{lemma} \label{lemma: mmdiii}
Under null hypothesis, 
\begin{equation*}
\begin{split}
&\mathbb{E}\left[ \left(\mbox{MMD}^2 (X_i,Y) \right)^3 \right]\\
=& \frac{8(B-2)}{B^2(B-1)^2} \mathbb{E} \left[ h(x,x', y,y') h(x',x'', y', y'') 
h(x'',x, y'', y) \right]
+\frac{4}{B^2(B-1)^2}\mathbb{E} \left[ h(x,x',y,y')^3\right].
\end{split}
\end{equation*}
\end{lemma}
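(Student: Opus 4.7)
The plan is to expand $(\mbox{MMD}^2(X_i, Y))^3$ as a triple sum indexed by unordered pairs $\{l_1, j_1\}, \{l_2, j_2\}, \{l_3, j_3\} \subset \{1, \ldots, B\}$, then classify the terms by which indices appear and in which pairs, and finally use degeneracy of $h$ under the null to discard all but two \emph{shapes} of configurations. Explicitly,
\[
(\mbox{MMD}^2(X_i, Y))^3 = \binom{B}{2}^{-3} \sum_{\{l_1, j_1\}, \{l_2, j_2\}, \{l_3, j_3\}} \prod_{k=1}^3 h(X_{i,l_k}, X_{i,j_k}, Y_{l_k}, Y_{j_k}),
\]
so the job reduces to evaluating the expectation of each summand and counting configurations.

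First I would establish the key degeneracy fact: under the null, each $(X_{i,k}, Y_k)$ is a pair of independent $P$-samples, and $\mathbb{E}[h(X_l, X_j, Y_l, Y_j) \mid X_l, Y_l] = 0$. A short direct computation using $\mu_k(x) := \mathbb{E}_{x' \sim P}[k(x, x')]$ shows all four terms in the definition of $h$ cancel when $P = Q$. The consequence I would exploit is this: whenever an index $k$ appears in exactly one of the three pairs, one may integrate out $(X_{i,k}, Y_k)$ first, making the whole product vanish in expectation.

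Next I would enumerate the surviving configurations. With three pairs accounting for six index-slots, and each distinct index required to appear at least twice for nonvanishing expectation, at most three distinct indices can appear. A brief case check then leaves only two shapes: (i) all three pairs equal to a common $\{a, b\}$, contributing $\mathbb{E}[h(x, x', y, y')^3]$; and (ii) a triangle on three distinct indices $\{a, b, c\}$ whose three pairs are $\{a,b\}, \{b,c\}, \{a,c\}$, contributing the cyclic expectation $\mathbb{E}[h(x, x', y, y') h(x', x'', y', y'') h(x'', x, y'', y)]$ (using $h(u, v, s, t) = h(v, u, t, s)$ so that the value is independent of which triangle-edge occupies which product slot).

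The last step is counting: there are $\binom{B}{2}$ all-equal configurations and $6\binom{B}{3} = B(B-1)(B-2)$ triangle configurations (choose the three indices, then assign the three triangle-edges to the three product slots in $3!$ ways). Multiplying by $\binom{B}{2}^{-3} = 8/[B^3(B-1)^3]$ yields the coefficients $4/[B^2(B-1)^2]$ and $8(B-2)/[B^2(B-1)^2]$ respectively, matching the claim. The main obstacle I anticipate is the case analysis that pins down exactly which multi-index patterns survive; once that is cleanly stated via the degeneracy lemma (in particular ruling out the ``star'' pattern $\{a,b\},\{a,c\},\{a,d\}$ where the leaf indices are killed by conditioning), the combinatorial counting and the final algebra are essentially mechanical.
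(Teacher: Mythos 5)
Your proposal is correct and follows essentially the same route as the paper's proof: expand the cube as a triple sum over pairs, use the degeneracy of $h$ under the null to kill every configuration in which some index appears only once, and count the two surviving shapes (all three pairs equal, and the triangle), whose counts $\binom{B}{2}$ and $B(B-1)(B-2)$ yield exactly the stated coefficients. The only difference is one of explicitness --- you spell out the conditional-expectation degeneracy argument and the exclusion of the star/path patterns, which the paper compresses into ``by enumerating all the combinations, only two terms are nonzero.''
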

\begin{proof}
Note that
\begin{align*}
\mathbb{E}\left[ \left(\mbox{MMD}^2 (X_i,Y) \right)^3 \right]
=& {B \choose 2}^{-3}  \mathbb{E}\left[ \left(    \sum_{a < b} h(X_{i,a},X_{i,b}, Y_a, Y_b)  \right)^3 \right]\\
=&{B \choose 2}^{-3}  \sum_k C_k \mathbb{E}\left[ h_{ab}h_{cd} h_{ef} \right],
\end{align*}
where for simplicity we write $h_{ab}=h(X_{i,a},X_{i,b}, Y_a, Y_b)$ and define $C_k$ the corresponding number of combination under specific structure. Most of the terms in $\mathbb{E}\left[ h_{ab}h_{cd} h_{ef} \right]$ vanish under the null. By enumerating all the combinations, only two terms are nonzero:
$ \mathbb{E}\left[ h_{ab}h_{bc} h_{ca} \right]$ and $\mathbb{E}\left[ h_{ab}h_{ab} h_{ab} \right]$. Then, 
\begin{align*}
& \mathbb{E}\left[ \left(\mbox{MMD}^2 (X_i,Y) \right)^3 \right]
= {B \choose 2}^{-3} {B \choose 2} 2 (B-2)\mathbb{E} \left[ h_{ab}h_{bc} h_{ca} \right] +{B \choose 2}^{-3} {B \choose 2} \mathbb{E}\left[ h_{ab}h_{ab} h_{ab} \right] \\
= & \frac{8(B-2)}{B^2(B-1)^2} \mathbb{E} \left[ h(X_{i,a},X_{i,b}, Y_a, Y_b) h(X_{i,b},X_{i,c}, Y_b, Y_c) 
h(X_{i,c},X_{i,a}, Y_c, Y_a) \right] \\
& +\frac{4}{B^2(B-1)^2}\mathbb{E} \left[ h(X_{i,a},X_{i,b}, Y_a, Y_b)^3\right] \\
= & \frac{8(B-2)}{B^2(B-1)^2} \mathbb{E} \left[ h(x,x', y,y') h(x',x'', y', y'') 
h(x'',x, y'', y) \right]
+\frac{4}{B^2(B-1)^2}\mathbb{E} \left[ h(x,x',y,y')^3\right].
\end{align*}
\end{proof}

\begin{lemma} \label{lemma: mmdiij}
Under null hypothesis,
\begin{align*}
&\mathbb{E}\left[ \left(\mbox{MMD}^2 (X_i,Y) \right)^2 \mbox{MMD}^2(X_j, Y) \right]_{i \neq j} \\
=&  \frac{8(B-2)}{B^2(B-1)^2} \mathbb{E} \left[ h(x,x', y,y') h(x',x'', y', y'') 
h(x''',x'''', y'', y) \right]
\nonumber \\
 & \quad \quad +\frac{4}{B^2(B-1)^2}\mathbb{E} \left[ h(x,x',y,y')^2h(x'',x''',y,y')\right].
\end{align*}
\end{lemma}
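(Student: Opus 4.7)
The plan mirrors the proof of Lemma~\ref{lemma: mmdiii}. Writing $h^{(i)}_{ab} := h(X_{i,a},X_{i,b},Y_a,Y_b)$, I expand
\begin{equation*}
\mathbb{E}\!\left[\left(\mbox{MMD}^2(X_i,Y)\right)^2\mbox{MMD}^2(X_j,Y)\right] = \binom{B}{2}^{-3}\sum_{a<b}\sum_{c<d}\sum_{e<f}\mathbb{E}\!\left[h^{(i)}_{ab}h^{(i)}_{cd}h^{(j)}_{ef}\right],
\end{equation*}
and reduce the task to classifying the ordered triples $(\{a,b\},\{c,d\},\{e,f\})$ that yield a nonzero expectation and counting them.

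The central tool is the degeneracy identity for $h$ under the null, obtained by a direct calculation using $h(x,x',y,y')=k(x,x')+k(y,y')-k(x,y')-k(x',y)$:
\begin{equation*}
\mathbb{E}_{x,y}\,h(x,x',y,y')=0=\mathbb{E}_{x',y'}\,h(x,x',y,y').
\end{equation*}
Its consequence: if any factor of the triple product contains a \emph{matched pair} $(X_{\cdot,s},Y_s)$ (same subscript $s$) whose two variables do not appear in any of the other factors, then averaging over this pair forces the whole expectation to vanish. Since the $X_j$-samples appear only in the third factor, applying this principle to $(X_{j,e},Y_e)$ and $(X_{j,f},Y_f)$ immediately yields $\{e,f\}\subseteq\{a,b,c,d\}$.

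I then split on $|\{a,b\}\cap\{c,d\}|$. When the overlap is $0$, any choice of $\{e,f\}\subseteq\{a,b,c,d\}$ leaves one of $\{a,b\},\{c,d\}$ harbouring an isolated matched pair, so the contribution vanishes. When the overlap is $2$ (i.e.\ $\{a,b\}=\{c,d\}$), $\{e,f\}$ is forced to equal $\{a,b\}$, producing $\binom{B}{2}$ triples, each contributing $\mathbb{E}[h(x,x',y,y')^2 h(x'',x''',y,y')]$. When the overlap is $1$, writing $\{a,b,c,d\}=\{p,q,r\}$, the unique configuration avoiding an isolated matched pair is $\{e,f\}=\{a,b\}\triangle\{c,d\}$, which gives a ``triangular'' $Y$-coupling; counting yields $6\binom{B}{3}=B(B-1)(B-2)$ triples (three ways to pick which pair of $\{p,q,r\}$ plays the role of $\{e,f\}$, times two orderings of the remaining two pairs), each contributing $\mathbb{E}[h(x,x',y,y')h(x',x'',y',y'')h(x''',x'''',y'',y)]$ once relabeling identifies the iid dummies. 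Multiplying by $\binom{B}{2}^{-3}$ produces the two prefactors $\tfrac{8(B-2)}{B^2(B-1)^2}$ and $\tfrac{4}{B^2(B-1)^2}$.

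The main obstacle is the single-overlap case: one has to verify that in the triangular configuration no matched pair is isolated, so that the degeneracy argument does \emph{not} force vanishing, and check that the resulting expectation, after renaming iid dummy variables, matches the exact arrangement $h(x,x',y,y')h(x',x'',y',y'')h(x''',x'''',y'',y)$ stated in the lemma. This last identification uses the symmetry $h(a,b,c,d)=h(b,a,d,c)$ combined with the freedom to permute iid labels under the expectation.
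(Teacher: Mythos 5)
Your proposal is correct and follows essentially the same route as the paper's proof: expand the product over pairs, identify the only two non-vanishing configurations (the ``triangular'' one and the fully-coincident one), count them as $B(B-1)(B-2)$ and $\binom{B}{2}$ respectively, and normalize by $\binom{B}{2}^{-3}$. Your isolated-matched-pair degeneracy argument is a more explicit justification of the step the paper compresses into ``by enumerating all the combinations, only two terms are nonzero,'' and your count $6\binom{B}{3}$ agrees with the paper's $\binom{B}{2}\cdot 2(B-2)$.
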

\begin{proof}
Note that
\begin{align*}
& \mathbb{E}\left[ \left(\mbox{MMD}^2 (X_i,Y) \right)^2 \mbox{MMD}^2(X_j, Y) \right]_{i \neq j}\\
= &{B \choose 2}^{-3}  \mathbb{E}\left[ \left(    \sum_{a < b} h(X_{i,a},X_{i,b}, Y_a, Y_b)  \right)^2 \left(    \sum_{a < b} h(X_{j,a},X_{j,b}, Y_a, Y_b)  \right) \right] \\
=& {B \choose 2}^{-3}  \sum_k C_k \mathbb{E}\left[ h_{i,ab}h_{i,cd} h_{j,ef} \right],
\end{align*}
where for simplicity we write $h_{i,ab}=h(X_{i,a},X_{i,b}, Y_a, Y_b)$ and define $C_k$ the corresponding number of combination under specific structure. Similarly, most of the terms in $\mathbb{E}\left[ h_{i,ab}h_{i,cd} h_{j,ef} \right]$ vanish under the null. By enumerating all the combinations, only two terms are nonzero:
$ \mathbb{E}\left[ h_{i,ab}h_{i,bc} h_{j,ca} \right]$ and $\mathbb{E}\left[ h_{i,ab}h_{i,ab} h_{j,ab} \right]$. Then, 
\begin{align*}
& \mathbb{E}\left[ \left(\mbox{MMD}^2 (X_i,Y) \right)^2 \mbox{MMD}^2(X_j, Y) \right]_{i \neq j} \\
= & {B \choose 2}^{-3} {B \choose 2} 2 (B-2)\mathbb{E} \left[ h_{i,ab}h_{i,bc} h_{j,ca} \right] +{B \choose 2}^{-3} {B \choose 2} \mathbb{E}\left[ h_{i,ab}h_{i,ab} h_{j,ab} \right] \\
= & \frac{8(B-2)}{B^2(B-1)^2} \mathbb{E} \left[ h(X_{i,a},X_{i,b}, Y_a, Y_b) h(X_{i,b},X_{i,c}, Y_b, Y_c) 
h(X_{j,c},X_{j,a}, Y_c, Y_a) \right] \\
&+\frac{4}{B^2(B-1)^2}\mathbb{E} \left[ h(X_{i,a},X_{i,b}, Y_a, Y_b)^2h(X_{j,a},X_{j,b}, Y_a, Y_b)\right] \\
= & \frac{8(B-2)}{B^2(B-1)^2} \mathbb{E} \left[ h(x,x', y,y') h(x',x'', y', y'') 
h(x''',x'''', y'', y) \right] \nonumber \\
& \quad +\frac{4}{B^2(B-1)^2}\mathbb{E} \left[ h(x,x',y,y')^2h(x'',x''',y,y')\right].
\end{align*}
\end{proof}

\begin{lemma} \label{lemma: mmdijr}
Under null hypothesis,
\begin{align*}
& \mathbb{E} \left[ \mbox{MMD}^2(X_i, Y) \mbox{MMD}^2(X_j, Y) \mbox{MMD}^2(X_r, Y) \right]_{i \neq j \neq r} \\
=&\frac{8(B-2)}{B^2(B-1)^2} \mathbb{E} \left[ h(x,x', y,y') h(x'',x''', y', y'') 
h(x'''',x''''', y'', y) \right] \\
&+\frac{4}{B^2(B-1)^2}\mathbb{E} \left[ h(x,x',y,y')h(x'',x''',y,y')h(x'''',x''''',y,y')\right].
\end{align*}
\end{lemma}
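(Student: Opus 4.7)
The plan is to follow the same expansion-and-counting template used for Lemmas \ref{lemma: mmdiii} and \ref{lemma: mmdiij}. First, I would write $h_{i,ab} = h(X_{i,a}, X_{i,b}, Y_a, Y_b)$ and expand each factor as $\mbox{MMD}^2(X_i, Y) = \binom{B}{2}^{-1} \sum_{a<b} h_{i,ab}$. The triple product then becomes $\binom{B}{2}^{-3} \sum_k C_k \, \mathbb{E}[h_{i,ab} h_{j,cd} h_{r,ef}]$, a sum indexed by the configurations of the three $Y$-index pairs $\{a,b\}, \{c,d\}, \{e,f\}$. Because $i,j,r$ are distinct, the $X$-samples in the three factors are automatically independent across factors, so only the $Y$-index sharing pattern determines whether a term survives.

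Next, I would invoke the $U$-statistic degeneracy under $H_0$: since $\mathbb{E}_y[h(x,x',y,y')]=0$ by the null property of MMD, any configuration in which some $Y$-index appears in exactly one of the three factors allows that sample's expectation to be pulled inside and forces the term to vanish. Enumerating the remaining patterns leaves exactly two: (i) the \emph{triangle} pattern, where the three $Y$-pairs are $\{a,b\},\{b,c\},\{c,a\}$ on three distinct indices, and (ii) the \emph{coincidence} pattern, where $\{a,b\}=\{c,d\}=\{e,f\}$. Counting as in the previous two lemmas, the triangle pattern arises $\binom{B}{2}\cdot 2 \cdot (B-2) = B(B-1)(B-2)$ times and the coincidence pattern $\binom{B}{2}$ times. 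Using iid-ness to relabel the (all distinct) $X$-samples across the three blocks as $x,x',x'',x''',x'''',x'''''$ while preserving the shared $Y$-labels gives the two expectations on the right-hand side. Multiplying by $\binom{B}{2}^{-3}$ produces exactly the stated coefficients $\frac{8(B-2)}{B^2(B-1)^2}$ and $\frac{4}{B^2(B-1)^2}$.

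The only real work is the vanishing argument in step two, but this is the identical degeneracy-of-kernel argument already used (implicitly) in Lemmas \ref{lemma: mmdiii} and \ref{lemma: mmdiij}; the substantive simplification here is that $i,j,r$ pairwise distinct removes all bookkeeping on the $X$-indices, so the combinatorics reduces purely to the $Y$-pattern enumeration and matches the $(B-2)$-versus-constant structure of the previous two lemmas. No new obstacle arises beyond what was already handled in Lemmas \ref{lemma: mmdiii}--\ref{lemma: mmdiij}.
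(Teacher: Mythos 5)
Your proposal is correct and follows essentially the same route as the paper's proof: expand each $\mbox{MMD}^2$ factor, use the degeneracy $\mathbb{E}_y[h(x,x',y,y')]=0$ under the null to kill every $Y$-index configuration except the triangle pattern $\{a,b\},\{b,c\},\{c,a\}$ and the coincidence pattern $\{a,b\}=\{c,d\}=\{e,f\}$, count these as $\binom{B}{2}\cdot 2(B-2)$ and $\binom{B}{2}$ respectively, and relabel the independent $X$-samples. The counts and resulting coefficients match the paper exactly.
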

\begin{proof}
Note that
\begin{align*}
& \mathbb{E} \left[ \mbox{MMD}^2(X_i, Y) \mbox{MMD}^2(X_j, Y) \mbox{MMD}^2(X_r, Y) \right]_{i \neq j \neq r} \\
=& {B \choose 2}^{-3}  \mathbb{E}\left[ \left(    \sum_{a < b} h(X_{i,a},X_{i,b}, Y_a, Y_b)  \right) \left(    \sum_{c < d} h(X_{j,c},X_{j,d}, Y_c, Y_d)  \right) \left(    \sum_{e < f} h(X_{r,e},X_{r,f}, Y_e, Y_f)  \right)\right] \\
=& {B \choose 2}^{-3}  \sum_k C_k \mathbb{E}\left[ h_{i,ab}h_{j,cd} h_{r,ef} \right].
\end{align*}
Similarly, most of the terms in $\mathbb{E}\left[ h_{i,ab}h_{j,cd} h_{r,ef} \right]$ vanish under the null. By enumerating all the combinations, only two terms are nonzero:
$ \mathbb{E}\left[ h_{i,ab}h_{j,bc} h_{r,ca} \right]$ and $\mathbb{E}\left[ h_{i,ab}h_{j,ab} h_{r,ab} \right]$. Then, 
\begin{align*}
& \mathbb{E} \left[ \mbox{MMD}^2(X_i, Y) \mbox{MMD}^2(X_j, Y) \mbox{MMD}^2(X_r, Y) \right]_{i \neq j \neq r}  \\
= & {B \choose 2}^{-3} {B \choose 2} 2 (B-2)\mathbb{E} \left[ h_{i,ab}h_{j,bc} h_{r,ca} \right] +{B \choose 2}^{-3} {B \choose 2} \mathbb{E}\left[ h_{i,ab}h_{j,ab} h_{r,ab} \right] \\
= & \frac{8(B-2)}{B^2(B-1)^2} \mathbb{E} \left[ h(X_{i,a},X_{i,b}, Y_a, Y_b) h(X_{j,b},X_{j,c}, Y_b, Y_c) 
h(X_{r,c},X_{r,a}, Y_c, Y_a) \right] \\
&+\frac{4}{B^2(B-1)^2}\mathbb{E} \left[ h(X_{i,a},X_{i,b}, Y_a, Y_b) h(X_{j,a},X_{j,b}, Y_a, Y_b)h(X_{r,a},X_{r,b}, Y_a, Y_b) \right] \\
= & \frac{8(B-2)}{B^2(B-1)^2} \mathbb{E} \left[ h(x,x', y,y') h(x'',x''', y', y'') 
h(x'''',x''''', y'', y) \right] \\
&+\frac{4}{B^2(B-1)^2}\mathbb{E} \left[ h(x,x',y,y')h(x'',x''',y,y')h(x'''',x''''',y,y')\right].
\end{align*}
\end{proof}
Using results from  Lemma \ref{lemma: mmdiii}, Lemma \ref{lemma: mmdiij}, and Lemma \ref{lemma: mmdijr}, and we can derive the final expression for the skewness of the scan $B$-statistic, as summarized in Lemma \ref{thm: skewness}.
\begin{proof}
We can write the raw third-order moment as
\begin{align*}
\mathbb{E}[Z_B^3]
=&\mathbb{E}\left[ \left(  \frac{1}{N} \sum_{i=1}^N \mbox{MMD}^2(X_i,Y)  \right)^3 \right] \\
=&\frac{1}{N^3} \mathbb{E} \left[  \left(\sum_{i=1}^N   \mbox{MMD}^2(X_i,Y)    \right)      \left(\sum_{j=1}^N   \mbox{MMD}^2(X_j,Y)    \right)     \left(\sum_{r=1}^N   \mbox{MMD}^2(X_r,Y)    \right)     \right]\\
=&\frac{1}{N^3} N   \mathbb{E}\left[   \left( \mbox{MMD}^2 (X_i,Y) \right)^3   \right]      
+  \frac{1}{N^3}  {3 \choose 2} {N \choose 1} {N-1 \choose 1}\mathbb{E} \left[  \left(  \mbox{MMD}^2 (X_i,Y)\right)^2  \mbox{MMD}^2(X_j,Y) \right]_{i\neq j}\\
&+\frac{1}{N^3}  {N \choose 1} {N-1  \choose 1} {N-2 \choose 1} \mathbb{E} \left[ \mbox{MMD}^2(X_i, Y) \mbox{MMD}^2(X_j, Y) \mbox{MMD}^2(X_r, Y) \right]_{i \neq j \neq r} \\
=& \frac{1}{N^2} \left\{ \frac{8(B-2)}{B^2(B-1)^2} \mathbb{E} \left[ h(x,x', y,y') h(x',x'', y', y'') 
h(x'',x, y'', y) \right]
+\frac{4}{B^2(B-1)^2}\mathbb{E} \left[ h(x,x',y,y')^3\right] \right\} \\
&+\frac{3(N-1)}{N^2}  \left\{ \frac{8(B-2)}{B^2(B-1)^2} \mathbb{E} \left[ h(x,x', y,y') h(x',x'', y', y'') 
h(x''',x'''', y'', y) \right] \right.\\
&\qquad \left.+\frac{4}{B^2(B-1)^2}\mathbb{E} \left[ h(x,x',y,y')^2h(x'',x''',y,y')\right] \right\} \\
&+\frac{(N-1)(N-2)}{N^2} \left\{ \frac{8(B-2)}{B^2(B-1)^2} \mathbb{E} \left[ h(x,x', y,y') h(x'',x''', y', y'') 
h(x'''',x''''', y'', y) \right]\right. \\
&\qquad \left.+\frac{4}{B^2(B-1)^2}\mathbb{E} \left[ h(x,x',y,y')h(x'',x''',y,y')h(x'''',x''''',y,y')\right]   \right\} 
\end{align*}
\end{proof}

\section{$Z_B$ DOES NOT CONVERGE TO GAUSSIAN}\label{app:scewness}

Note that the third-order moment of $Z_B$ scales as $\mathcal{O}(B^{-3})$ (due to (\ref{skewness_eqn})), but when dividing by its variance which scales as $\mathcal{O}(B^{-2})$, the skewness becomes a constant with respect to $B$. Furthermore, examining the Taylor expansion of moment generating function at $\theta = 0$, we have
\[
\mathbb{E}[e^{\theta Z_B'}] = 1 + \underbrace{\mathbb{E}[Z_B']}_0\theta + \frac{\theta^2}{2} \underbrace{\mathbb{E}[(Z_B')^2]}_{1}
+ \frac{\theta^3}{6} \mathbb{E}[(Z_B')^3 e^{\theta Z_B'}]
+ o(\theta^3).
\]
Recall that the moment generating function of a standard normal $Z$ is given by
$
\mathbb{E}[e^{\theta Z}] = 1 + \theta^2/2 + o(\theta^3). 
$
The difference between the two moment generating functions is given by
\begin{equation}
\left|\mathbb{E}[e^{\theta Z_B'}] - \mathbb{E}[e^{\theta Z}]\right|
= \frac{|\theta|^3}{6} |\mathbb{E}[(Z_B')^3 e^{\theta' Z_B'}]| + o(\theta^3)
> \frac{|\theta|^3}{6} c |\mathbb{E}[(Z_B')^3]| + o(\theta^3),
\label{diff2}
\end{equation}
where the inequality is due to the fact that $e^{\theta' Z_B'} > 0$ and we may assume it is larger than an absolute constant $c$. Note that the first term on the right hand side of (\ref{diff2}) is given by
$
(c\theta^3/6)\rm{Var}[Z_B]^{-3/2}|\mathbb{E}[{Z_B}^3]|, 
$
which is clearly bounded away from zero. Hence, 
\[
\left|\mathbb{E}[e^{\theta Z_B'}] - (1  + \frac{\theta^2}{2})\right| > \frac{|\theta|^3}{6} \gamma + o(\theta^3)
\]
for some constant $\gamma > 0$. This shows that the difference between the moment generating functions of $Z_B'$ and a standard normal is always non-zero and, hence, $Z_B'$ does not converge to a standard normal in any sense. This explains why incorporating the skewness of $Z_B$ can improve the accuracy of the approximations for SL in Theorem \ref{thm:tail_fixedsample} and for ARL in Theorem \ref{thm:ARL}.

\section{MORE DETAILS FOR REAL-DATA EXPERIMENTS}\label{app:real_data}

\subsection{CENSREC-1-C Speech Dataset} CENSREC-1-C is a real-world speech dataset in the Speech Resource Consortium (SRC) corpora provided by National Institute of Informatics (NII)\footnote{Available from http://research.nii.ac.jp/src/en/CENSREC-1-C.html}. 
	This dataset contains two categories of data: (1) Simulated data. The simulated speech data are constructed by concatenating several utterances spoken by one speaker. Each concatenated sequence is then added with 7 different levels of noise from 8 different environments. So there are totally 56 different types of noise. Each noise setting contains 104 sequences from 52 males and 52 females speakers. (2) Recording data. The recording data is from two real-noisy environments (in university restaurant and in the vicinity of highway), and with two Signal Noise Ratio (SNR) settings (lower and higher). Ten subjects were employed for recording, and each one has four speech sequence data.  
	
	{\it Experiment Settings.}
	We will compare our algorithm with the baseline algorithm from \citep{density-ratio2013}. \citep{density-ratio2013} only utilized 10 sequences from ``STREET\_SNR\_HIGH'' setting in recording data. Here we will use all the settings in recording data, the SNR level 20 dB and clean signals from simulated data. See Figure~\ref{speech_example} for some examples of the testing data, as well as the statistics computed by our algorithm. For each sequence, we decompose it into several segments. Each segment consists of two types of signals (noise vs speech). Given the reference data from noise, we want to detect the point where the signal changes from noise to speech. 
	
	{\it Evaluation Metrics.}
	We use Area Under Curve (AUC) to evaluate the computed statistics, like in \citep{density-ratio2013}. Specifically, for each test sequence that consists of two signal distributions, we will mark the points as change-points whose statistics exceed the given threshold. If the distance between the detected point and true change-point is within the size of detection window, then we consider it as True Alarm (True Positive). Otherwise it is a False Alarm (False Positive). 
	
	We use 10\% of the sequences to tune the parameters of both algorithms, and use the rest 90\% for reporting AUC. The kernel bandwidth is tuned in $\{0.1d_{\rm med}, 0.5d_{\rm med}, d_{\rm med}, 2d_{\rm med}, 5d_{\rm med}\}$, where $d_{\rm med}$ is the median of pairwise distances of reference data. Block size is fixed to be 50, and the number of blocks is simply tuned in $\{10, 20, 30\}$. 
	
	{\it Results.}
	Table~\ref{tabauc} shows the AUC of two algorithms on different background settings. Our algorithm outperforms the baseline on most cases. Both algorithms are performing quite well on the simulated clean data, since the difference between speech signals and background is more significant than the noisy ones. The averaged AUC of our algorithm on all these settings is \textbf{.8014}, compared to \textbf{.7578} achieved by the baseline algorithm. See the ROC curves in Figure~\ref{auc_speech_fig} for a complete comparison. 

\begin{figure}[h!]
	\centering
	\begin{tabular}{ccc}
		\includegraphics[scale=0.30, trim=0 0 0 0]{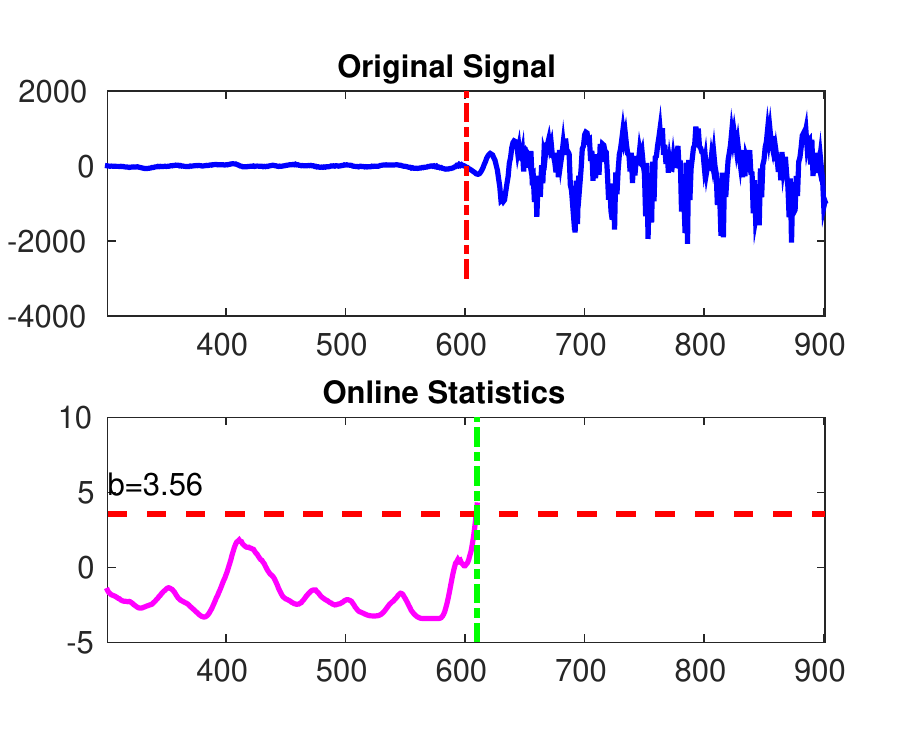} &
		\includegraphics[scale=0.30, trim=0 0 0 0]{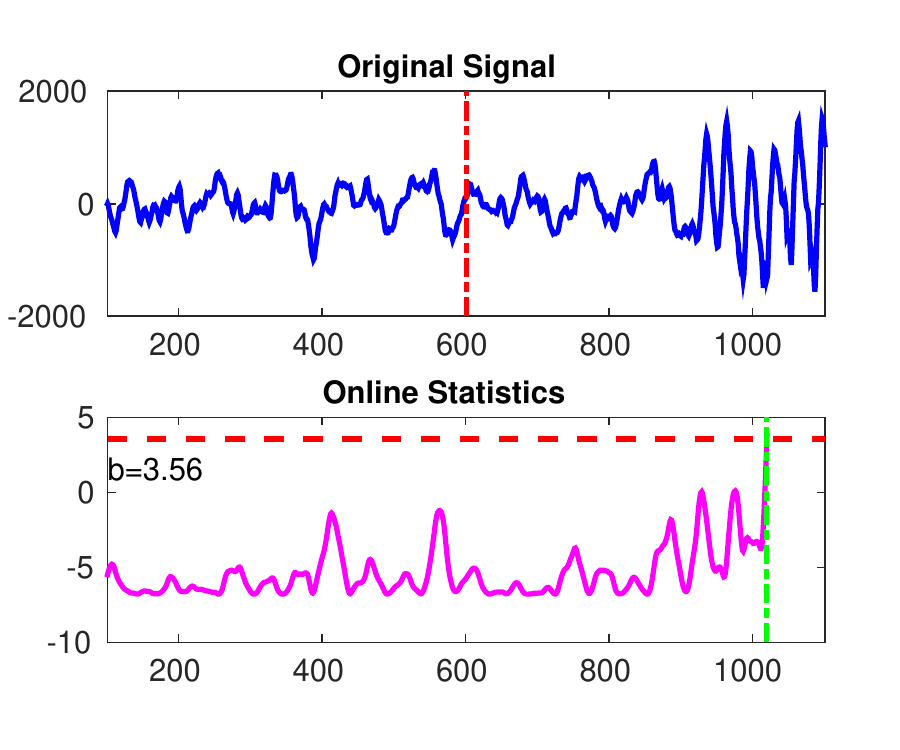} &
		\includegraphics[scale=0.30, trim=0 0 0 0]{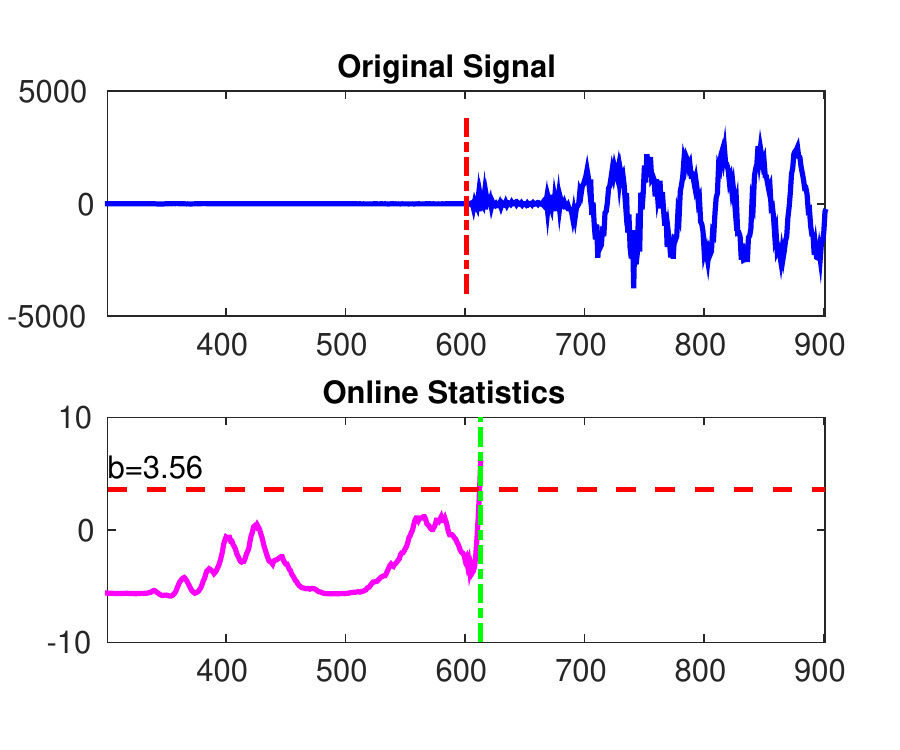} \\
	{RESTAURANT SNR HIGH}
	&
	{STREET SNR HIGH}
	&
	{Clean-1}
	\\
		\includegraphics[scale=0.30, trim=0 0 0 0]{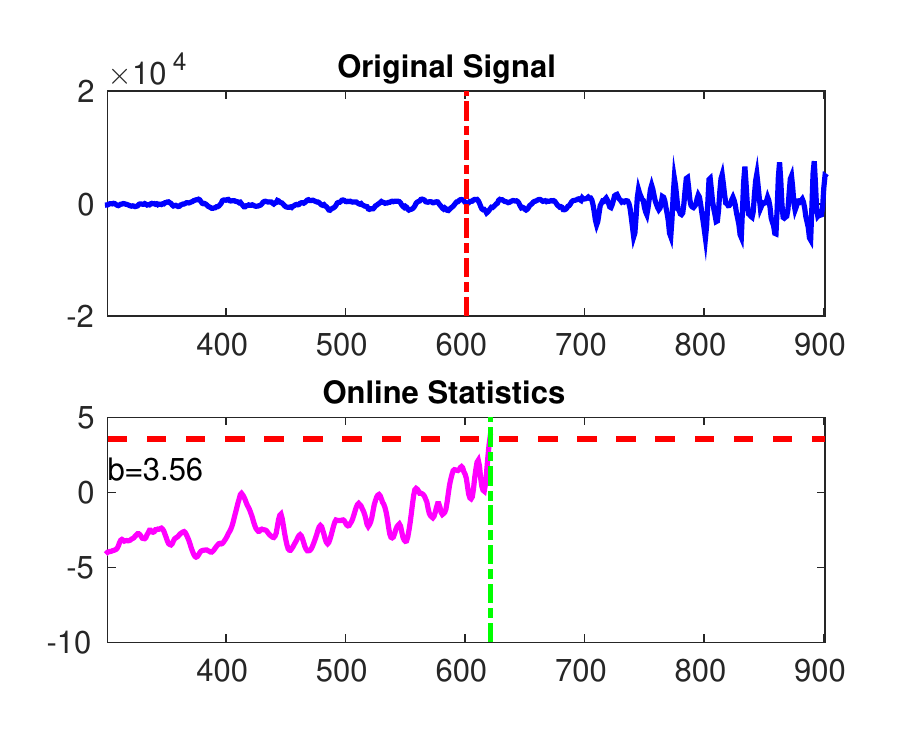}
		&\includegraphics[scale=0.30, trim=0 0 0 0]{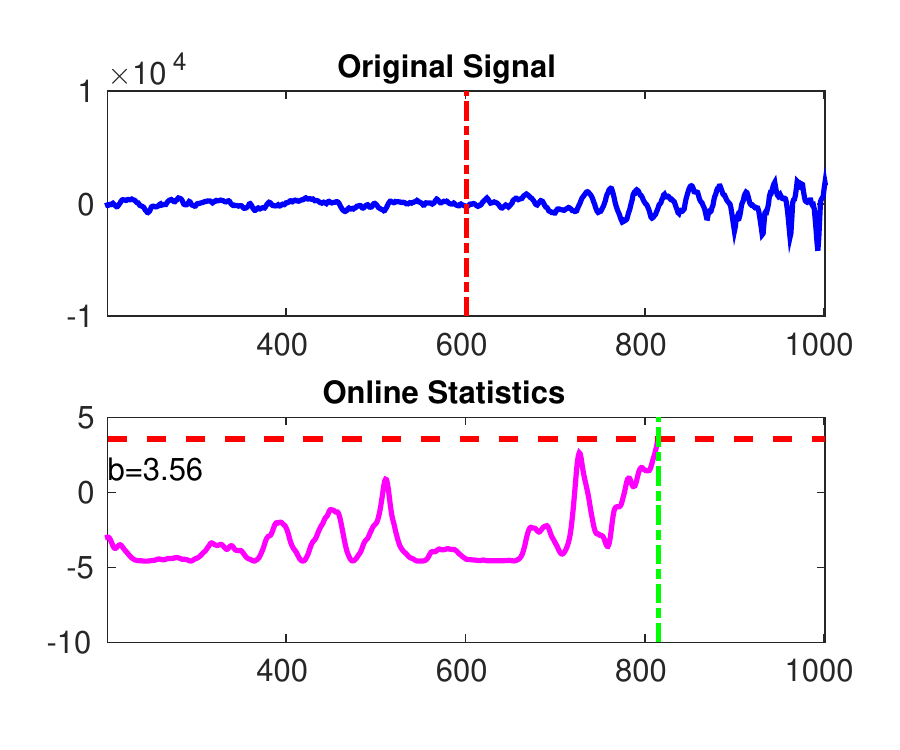}
		&
		\includegraphics[scale=0.30, trim=0 0 0 0]{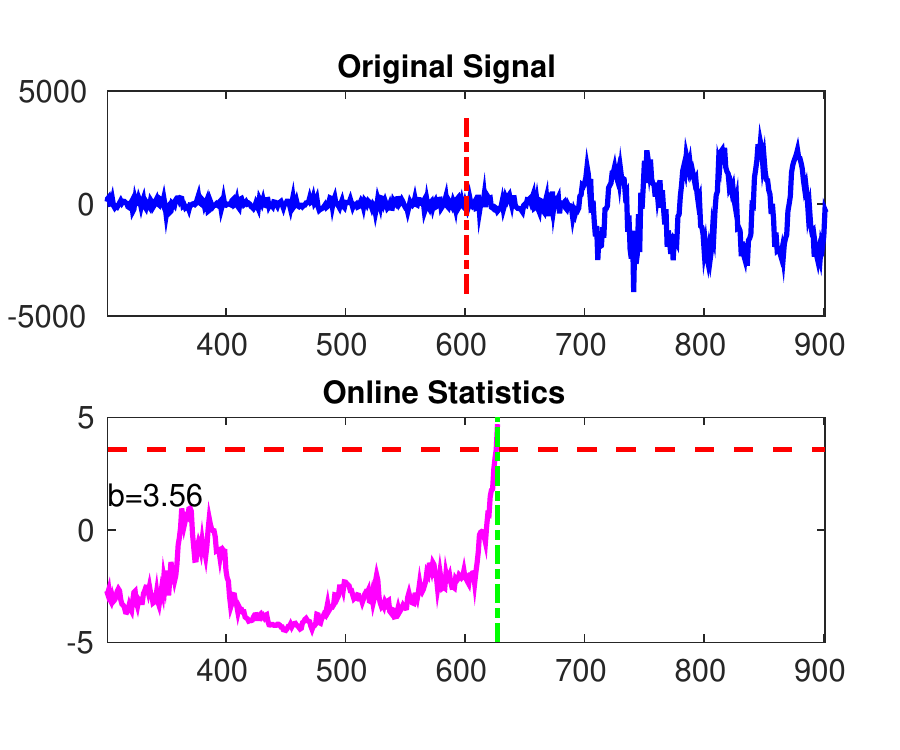} \\
		{RESTAURANT SNR LOW}
		& {STREET SNR LOW}
		& {SNR 20dB}
\end{tabular}
	\caption{Examples of speech dataset. The red vertical bar shown in the upper part of each figure is the ground truth of change-point; The green vertical bar shown in the lower part is the change-point detected by our algorithm (the point where the statistic exceeds the threshold). We also plot the threshold as a red dashed horizontal line in each figure. Once the statistics touch the threshold, we will stop the detection. }
	\label{speech_example}
\end{figure}

\begin{table}[h!] \label{tabauc}
	\begin{center}
					\caption{AUC results in CENSREC-1-C speech dataset. 
				Recording data are from RESTAURANT\_SNR\_HIGH (RH), RESTAURANT\_SNR\_LOW (RL), STREET\_SNR\_HIGH (SH) and STREET\_SNR\_LOW (SL).}
			\begin{tabular}{|c|c|c|c|c|}
			\hline
			& RH & RL & SH & SL \\
			\hline
			Ours & \textbf{0.7800} & \textbf{0.7282} & \textbf{0.6507} & \textbf{0.6865} \\
			\hline
			Baseline & 0.7503 & 0.6835 & 0.4329 & 0.6432 \\
			\hline
			\end{tabular}				\end{center}
		\end{table}

		\begin{table}[h!]
					\caption{Simulate data with low SNR, with noise from different environment.}
		\begin{center}
			\begin{tabular}{|c|c|c|c|c|c|c|c|c|}
			\hline
			& C1 & C2 & C3 & C4 & C5 & C6 & C7 & C8 \\
			\hline
			Ours & \textbf{0.9413} & \textbf{0.9446} & \textbf{0.9236} & \textbf{0.9251} & \textbf{0.9413} & \textbf{0.9446} & \textbf{0.9236} & \textbf{0.9251} \\
			\hline
			Baseline & 0.9138 & 0.9262 & 0.8691 & 0.9128 & 0.9138 & 0.9216 & 0.8691 & 0.9128 \\
			\hline
			\end{tabular}
			\end{center}
		\end{table}
\begin{table}[h!]	
					\caption{Simulated data with SNR = 20dB, with noise from different environment.}
		\begin{center}
			\begin{tabular}{|c|c|c|c|c|c|c|c|c|}
			\hline
			& S1 & S2 & S3 & S4 & S5 & S6 & S7 & S8 \\
			\hline
			Ours & 0.7048 & \textbf{0.7160} & \textbf{0.7126} & \textbf{0.7129} & \textbf{0.7094} & \textbf{0.7633} & \textbf{0.6796} & \textbf{0.7145}\\
			\hline
			Baseline & \textbf{0.7083} & 0.6681 & 0.6490 & 0.7119 & 0.6994 & 0.6815 & 0.6487 &  0.6541\\
			\hline
			\end{tabular}
	\end{center}
\end{table}
\begin{figure}
\begin{center}
\begin{tabular}{cccc}
		\includegraphics[scale=0.27, trim=30 50 0 0]{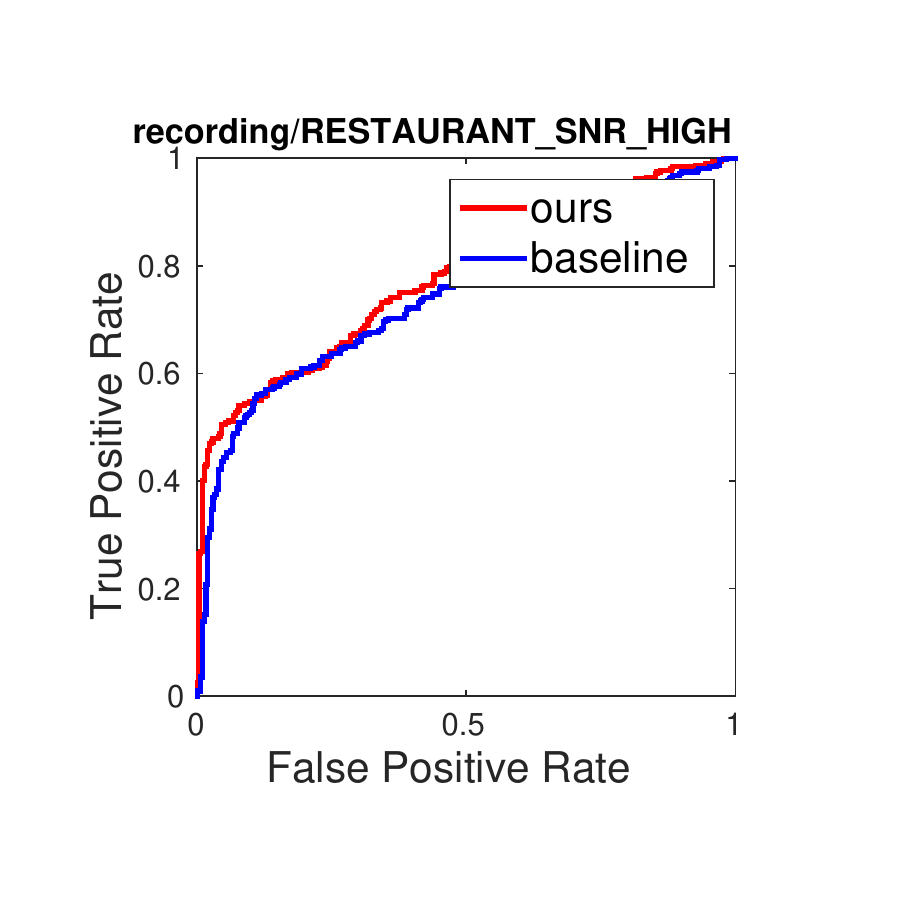}
&
		\includegraphics[scale=0.27, trim=30 50 0 0]{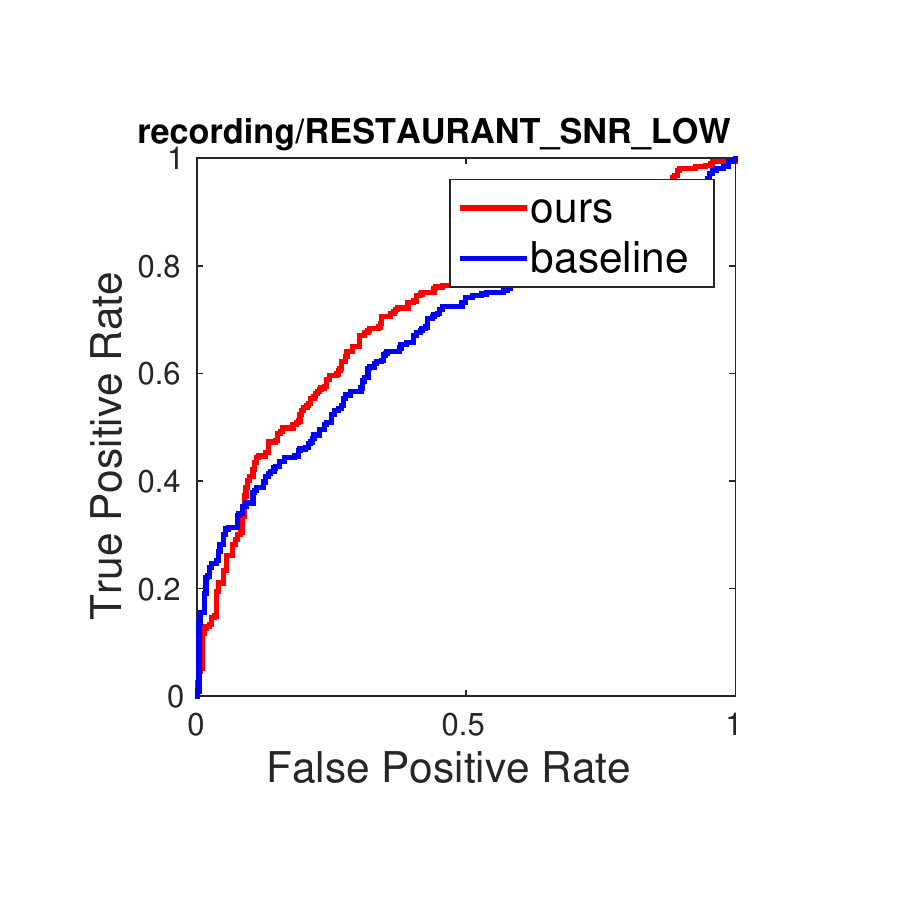}
		&
		\includegraphics[scale=0.27, trim=30 50 0 0]{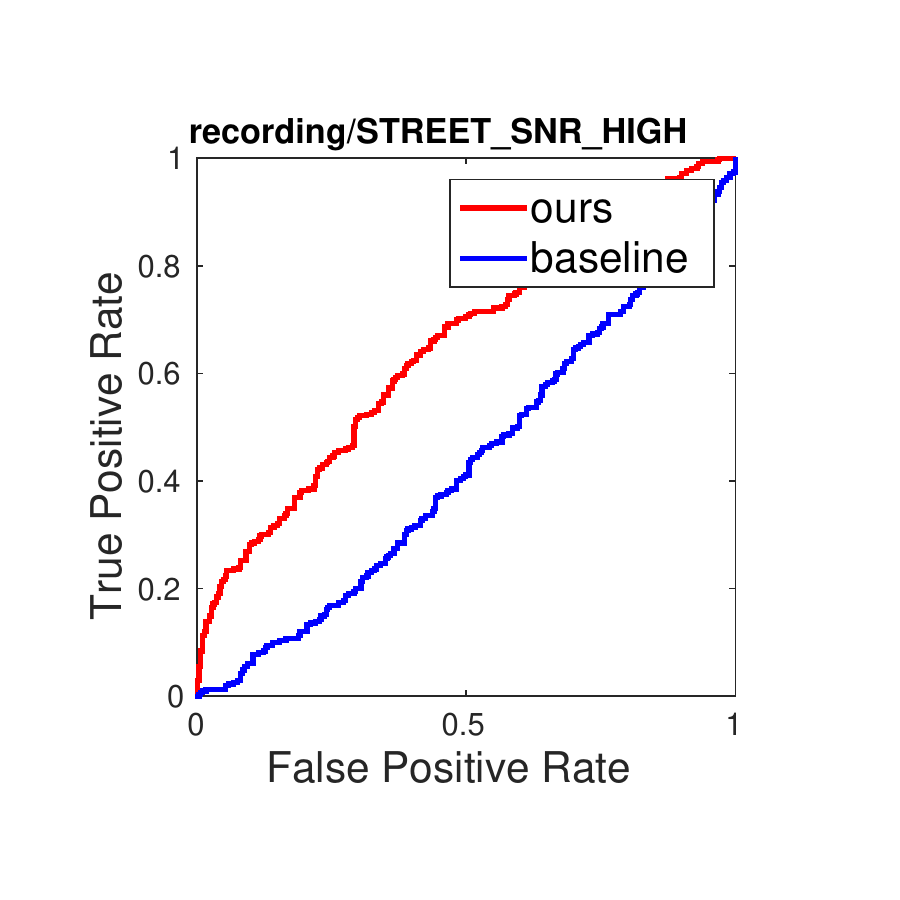}
		&
		\includegraphics[scale=0.27, trim=30 50 0 0]{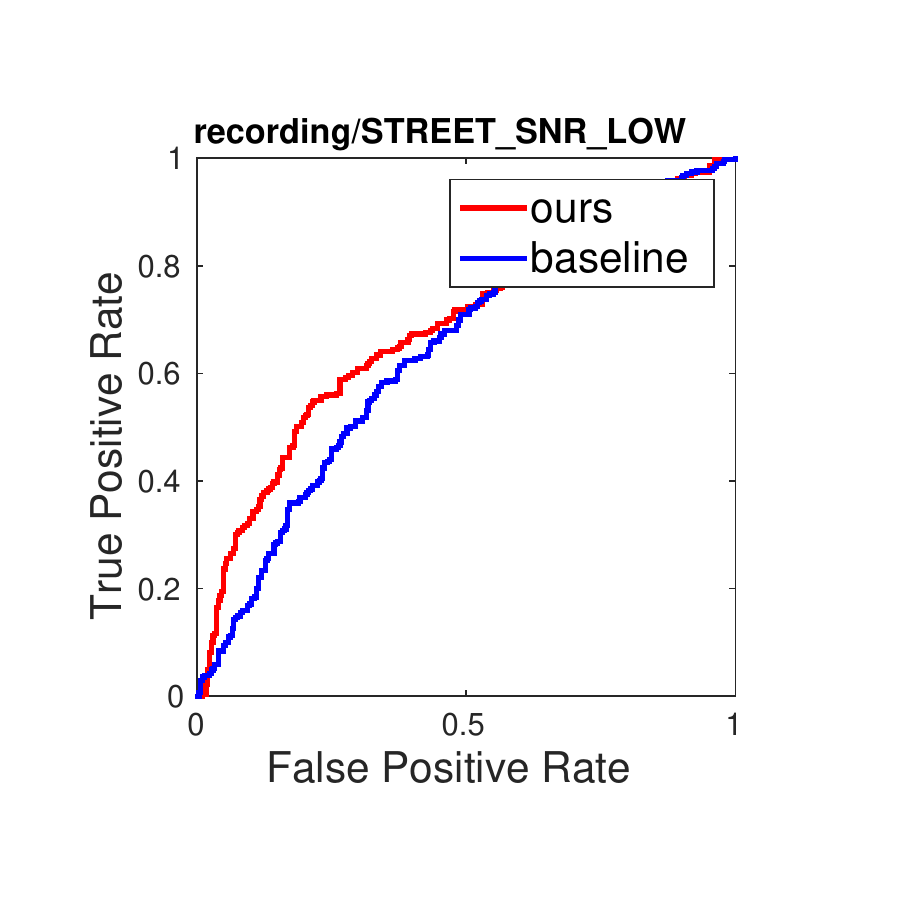}\\
		{RH} & {RL} & {SH} & {SL}\\
		\includegraphics[scale=0.27, trim=30 50 0 0]{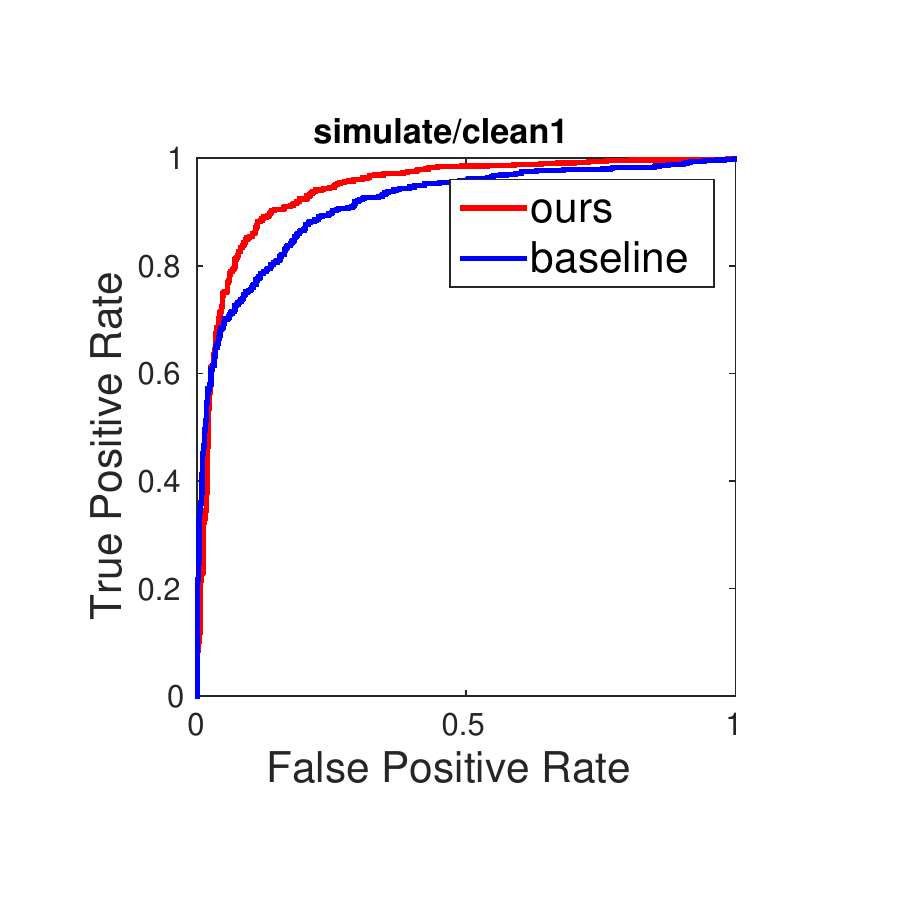}
		&
		\includegraphics[scale=0.27, trim=30 50 0 0]{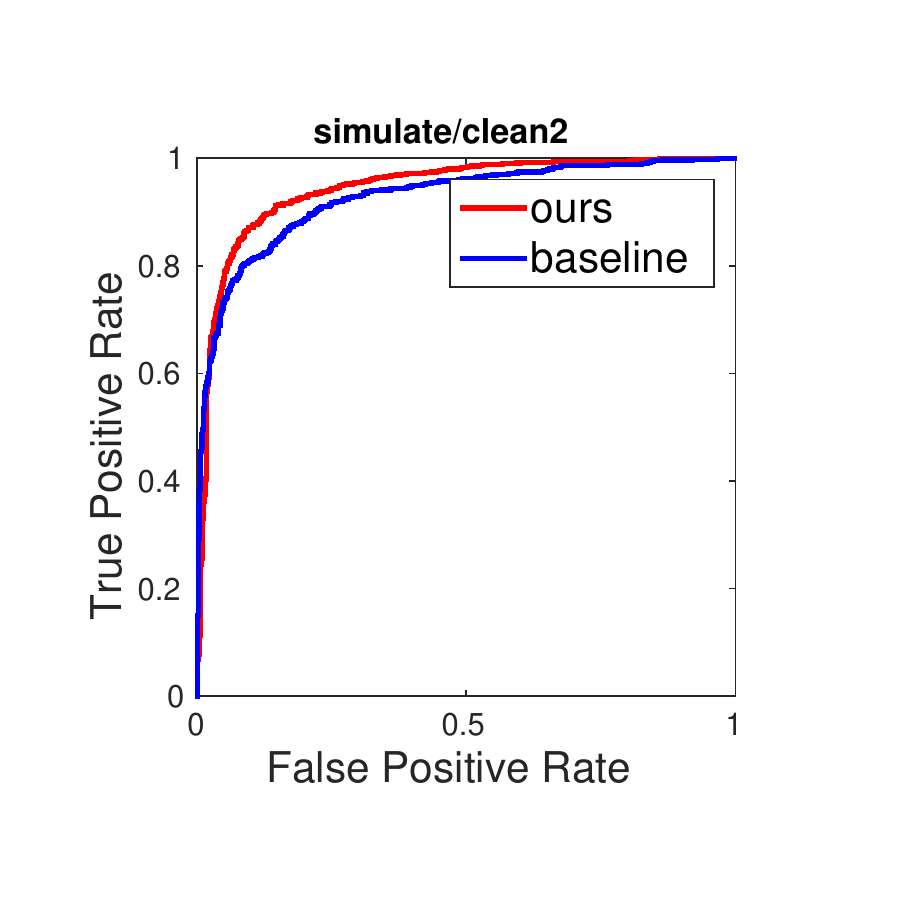}
		&
		\includegraphics[scale=0.27, trim=30 50 0 0]{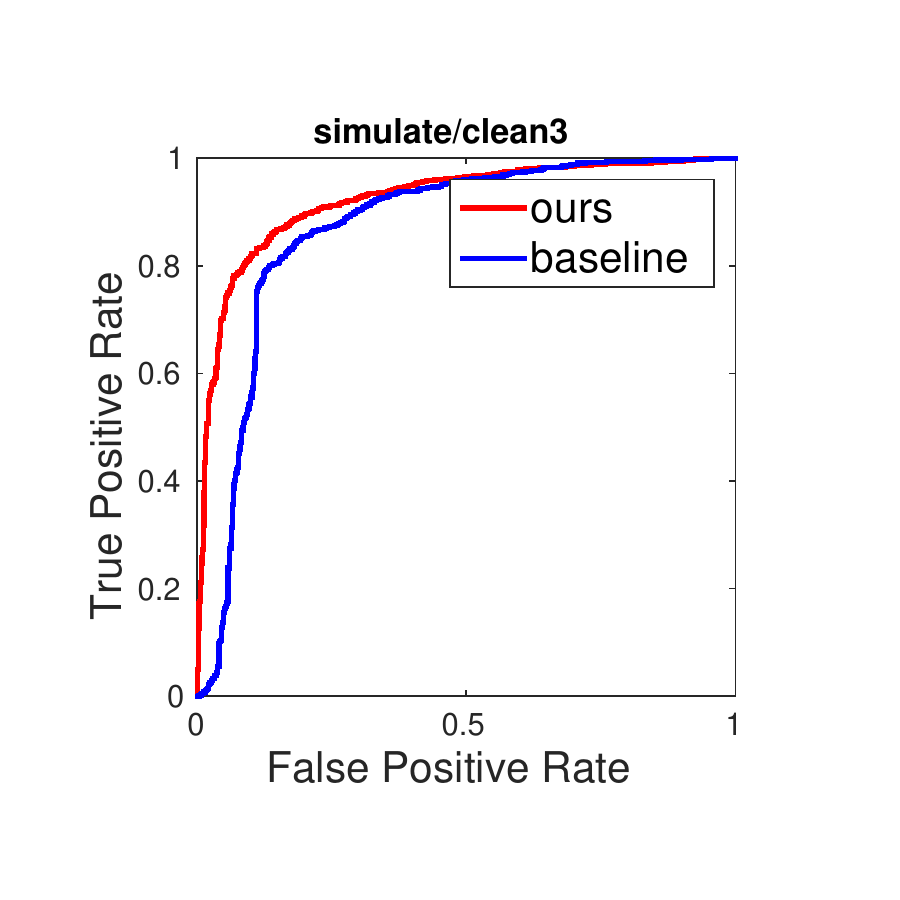}
		&
		\includegraphics[scale=0.27, trim=30 50 0 0]{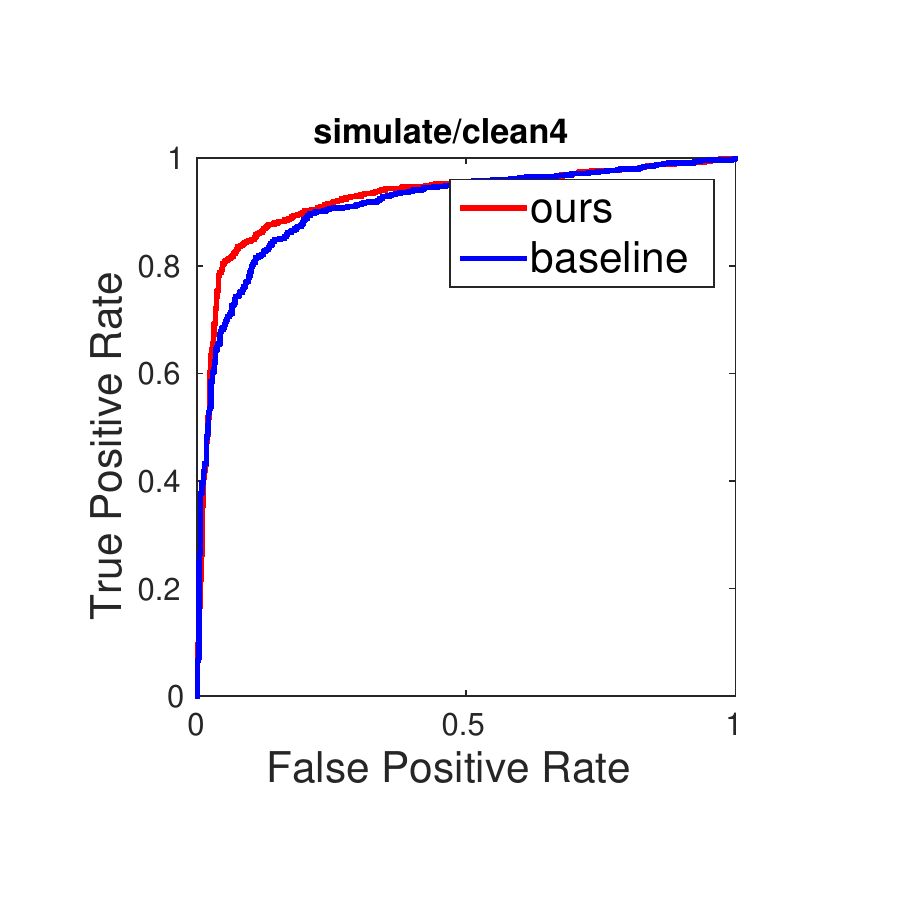} \\
		{C1} & {C2} & {C3} & {C4} \\
		\includegraphics[scale=0.27, trim=30 50 0 0]{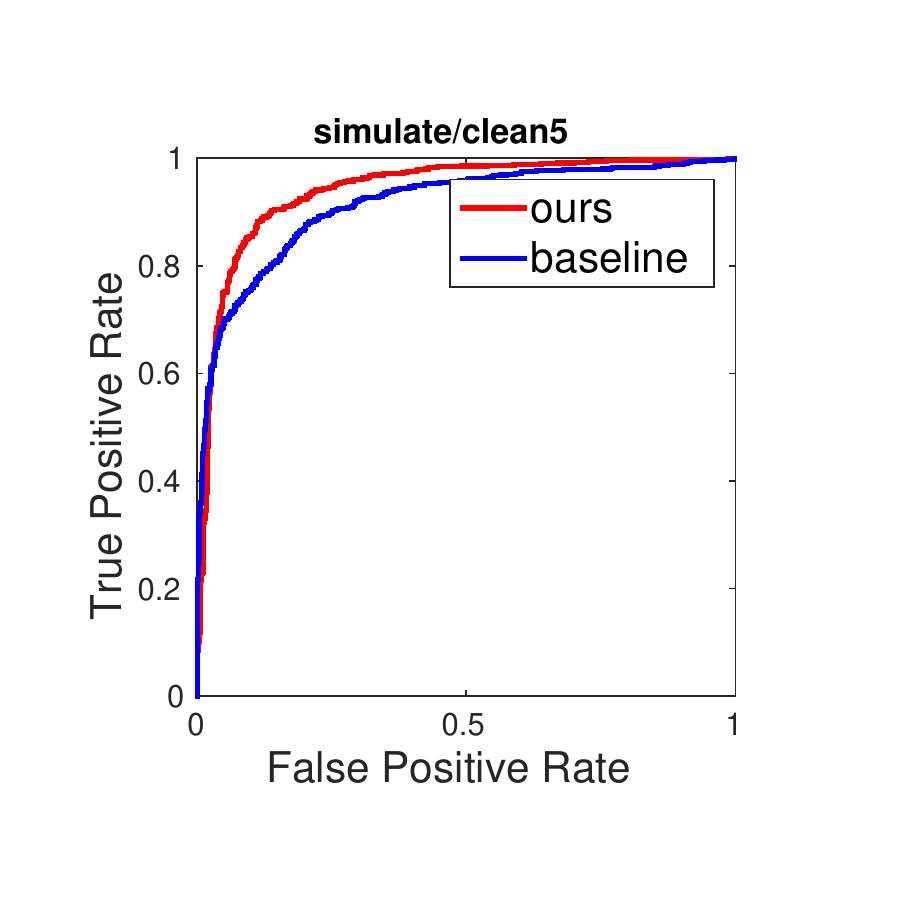}
		&
		\includegraphics[scale=0.27, trim=30 50 0 0]{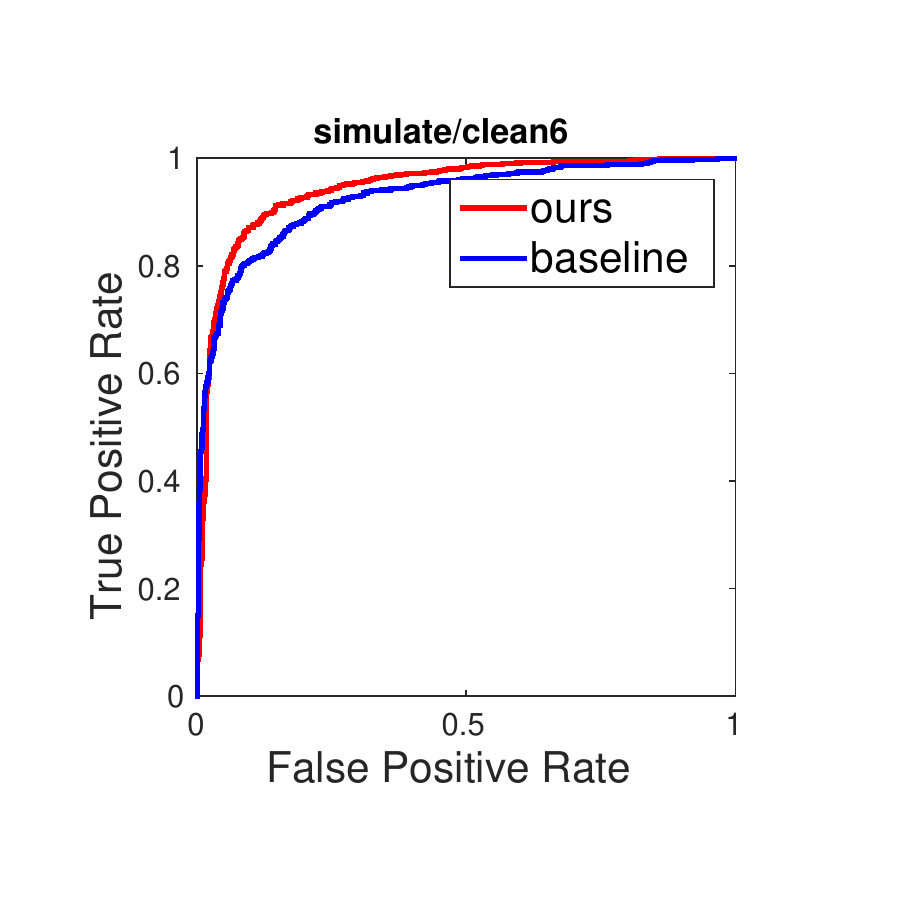}
		&
		\includegraphics[scale=0.27, trim=30 50 0 0]{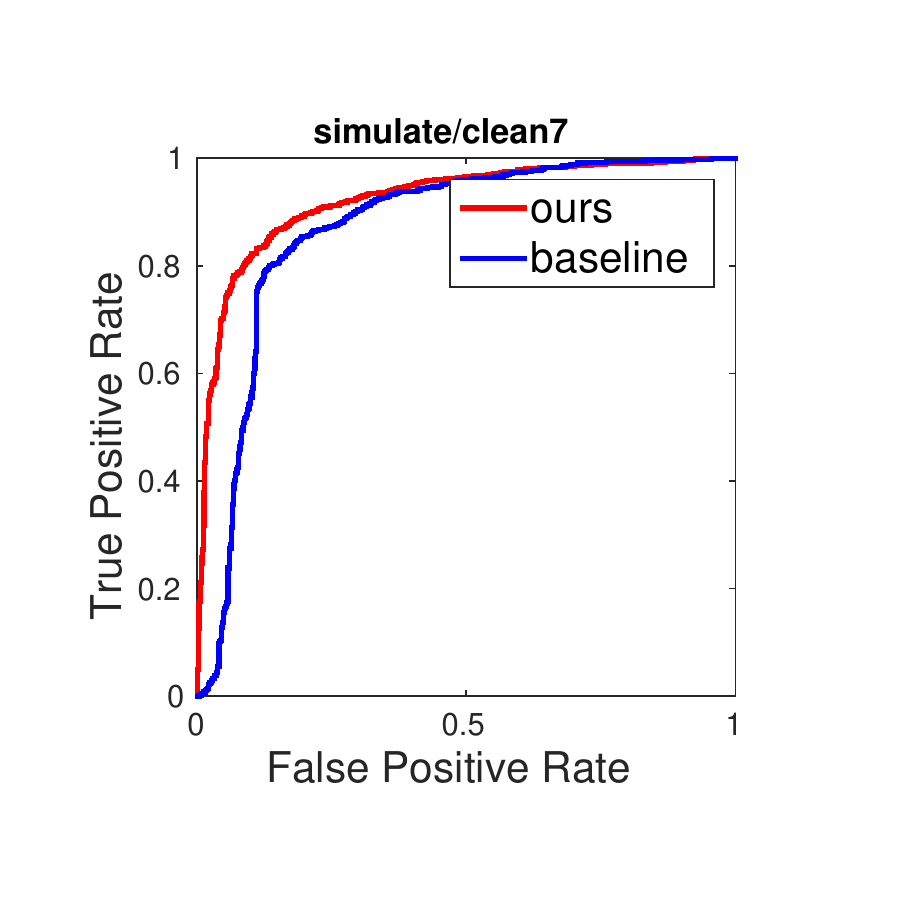}
		&		\includegraphics[scale=0.27, trim=30 50 0 0]{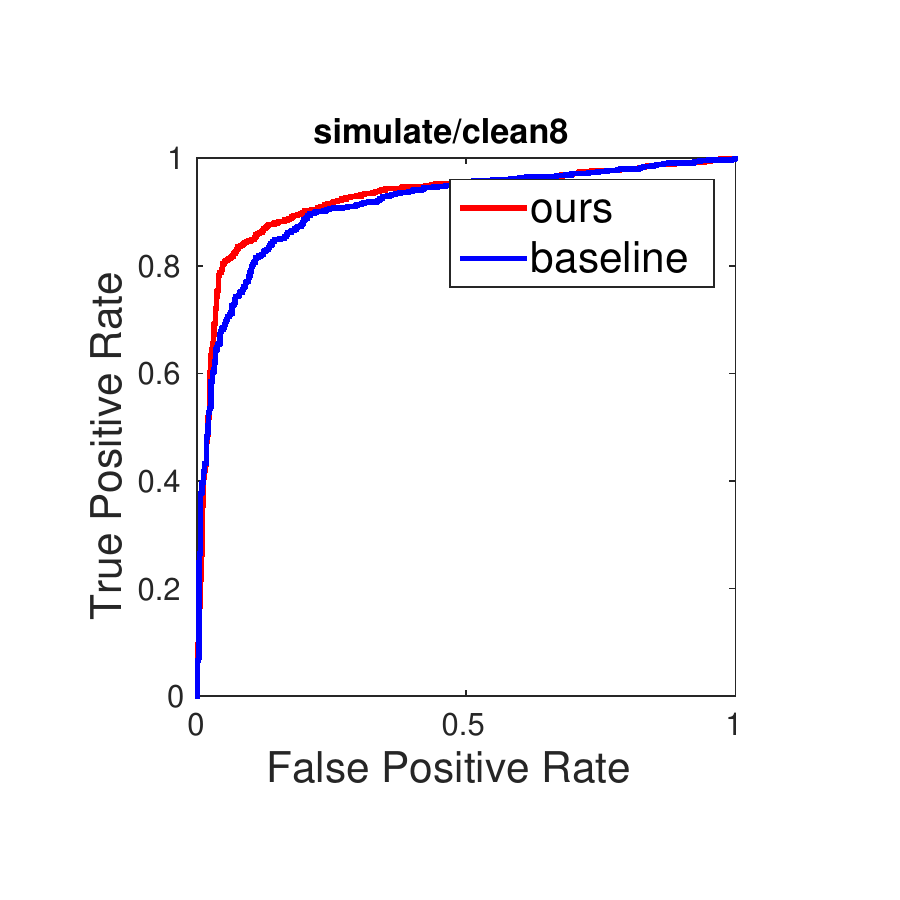}\\
		{C5} & {C6} & {C7} & {C8}\\
		\includegraphics[scale=0.27, trim=30 50 0 0]{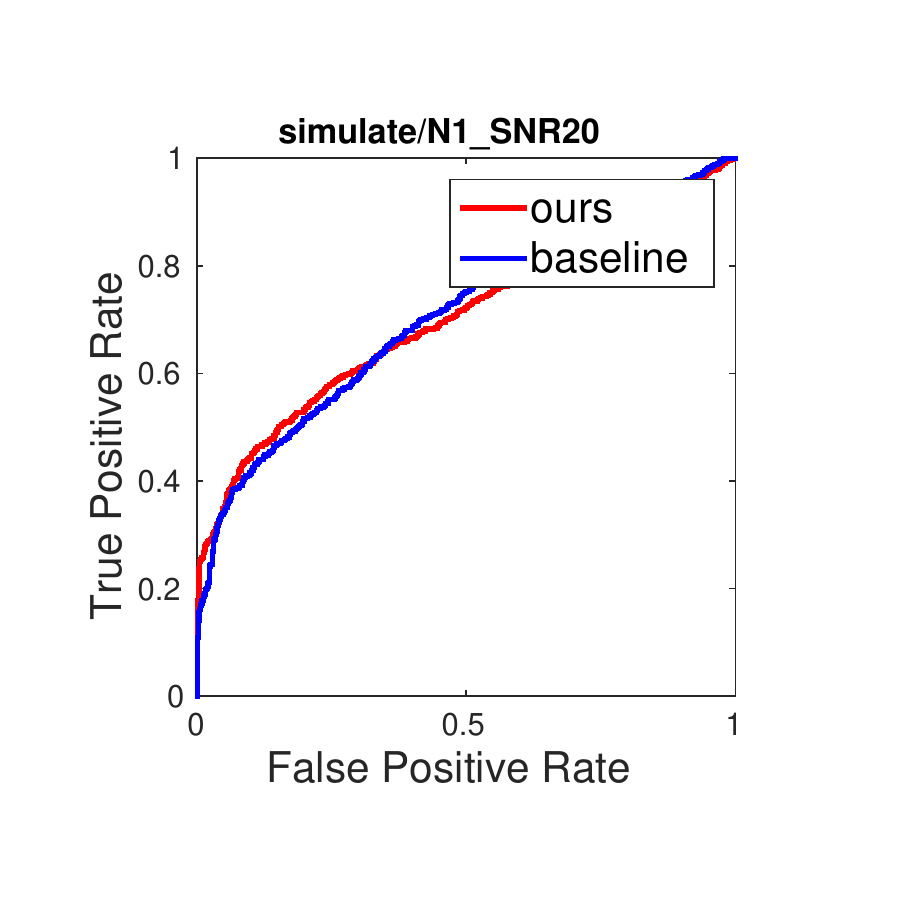}
		&
		\includegraphics[scale=0.27, trim=30 50 0 0]{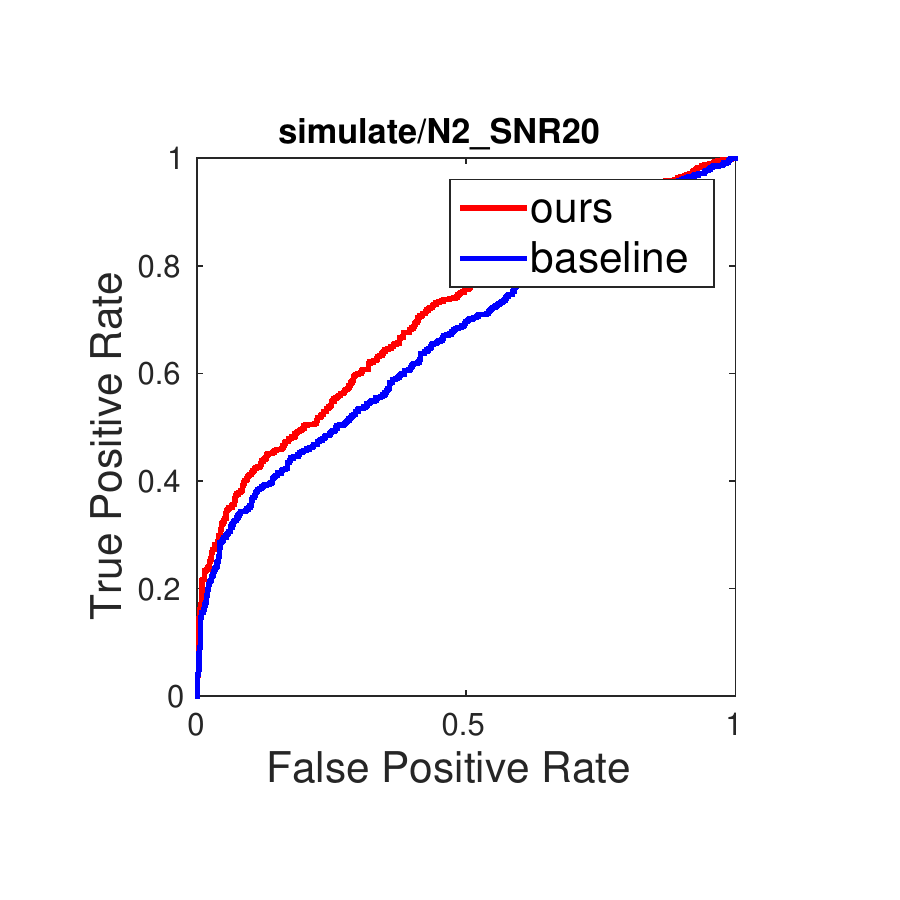}
		&
		\includegraphics[scale=0.27, trim=30 50 0 0]{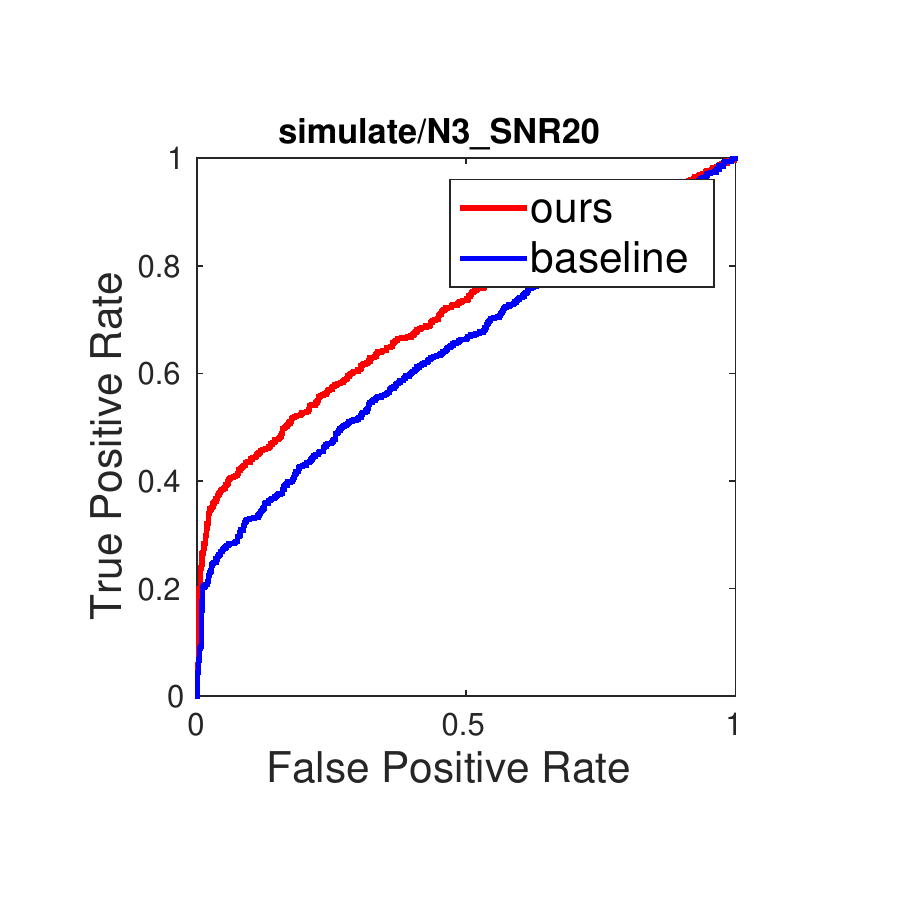}
		&		\includegraphics[scale=0.27, trim=30 50 0 0]{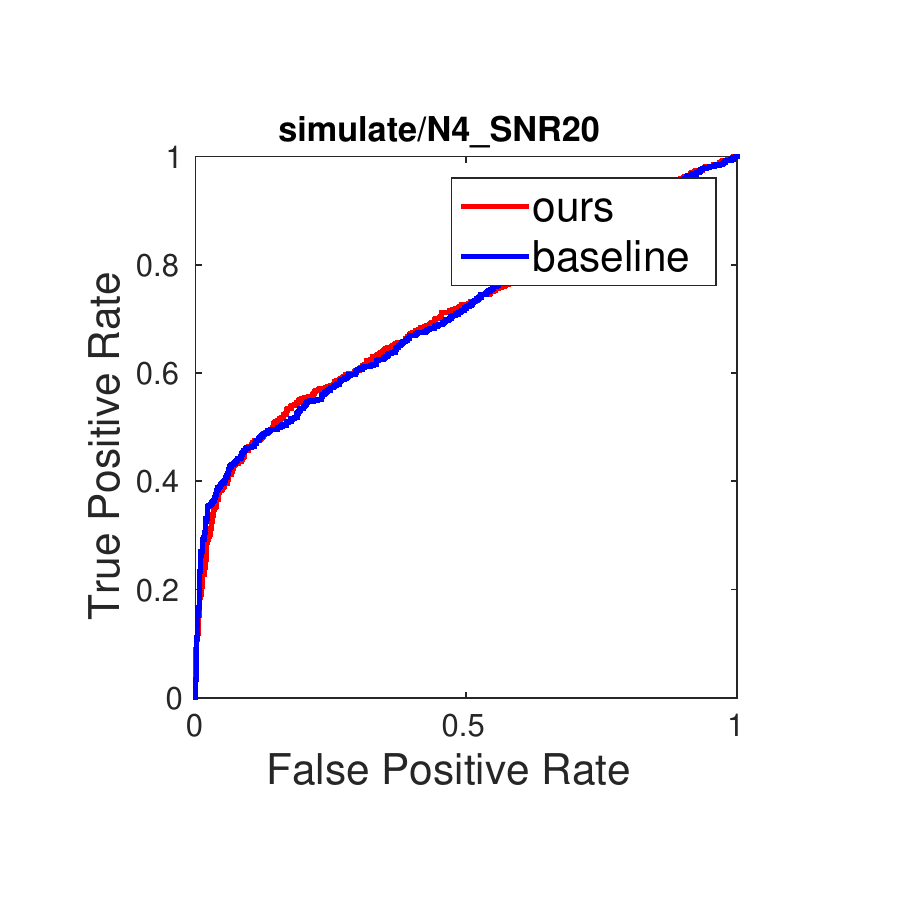} \\
		{S1} & {S2} & {S3} &{S4} \\
		\includegraphics[scale=0.27, trim=30 50 0 0]{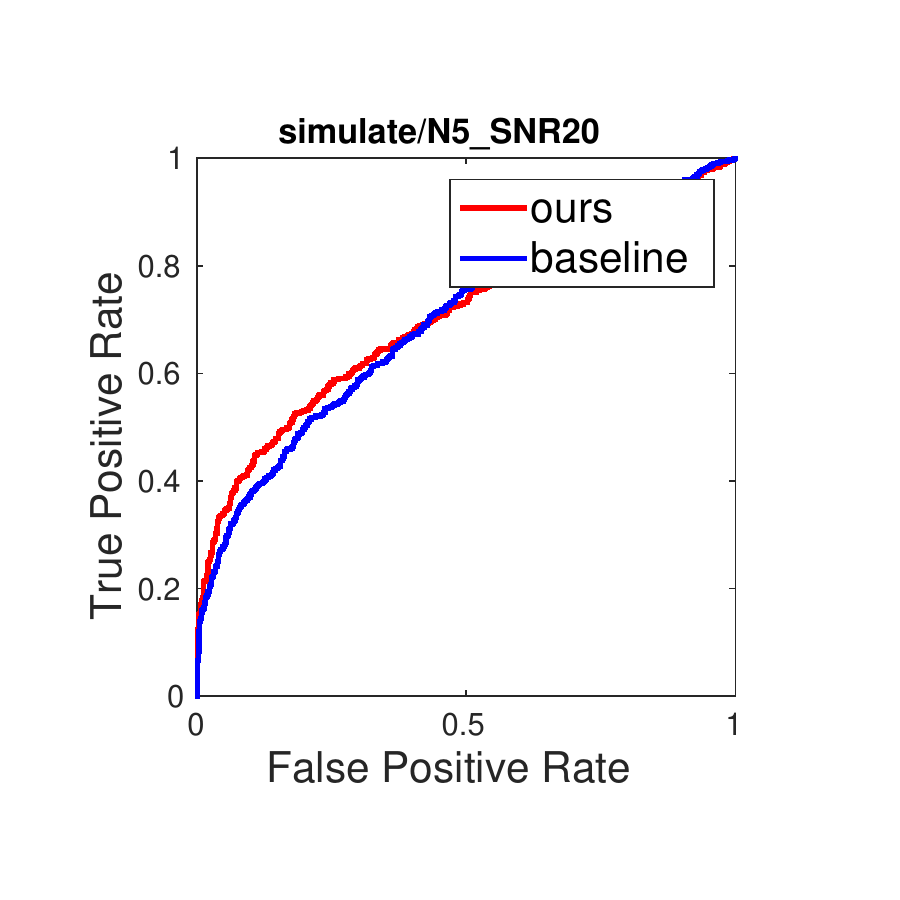}
		&
		\includegraphics[scale=0.27, trim=30 50 0 0]{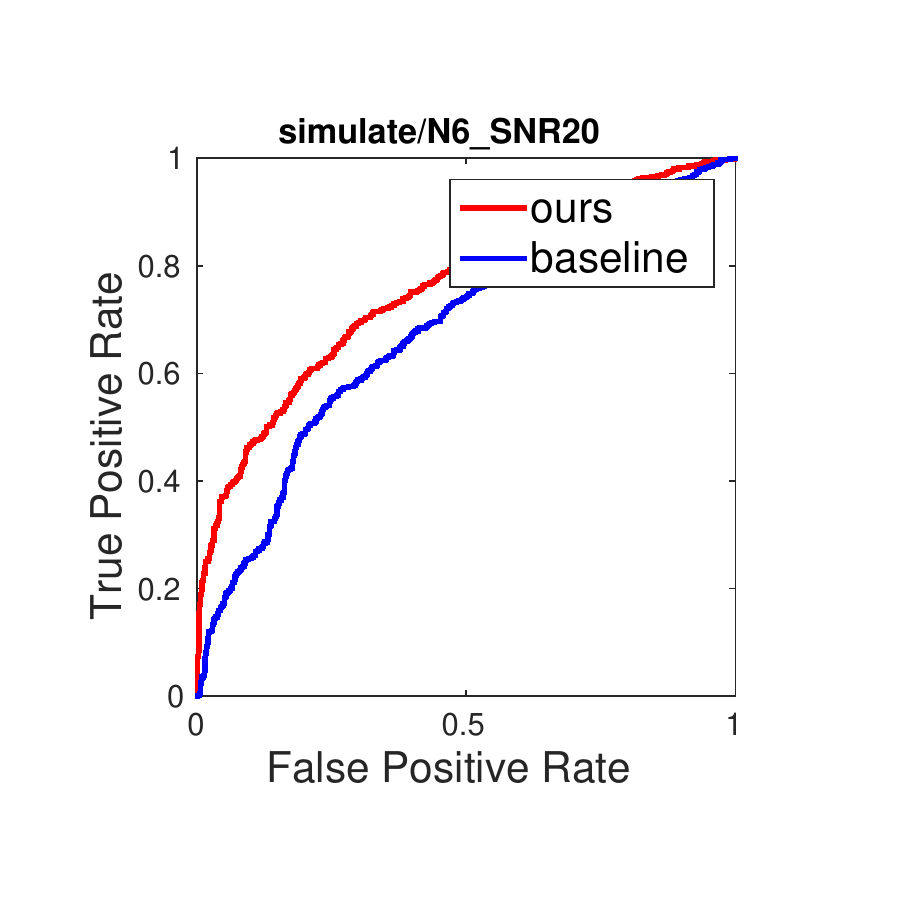}
		&
		\includegraphics[scale=0.27, trim=30 50 0 0]{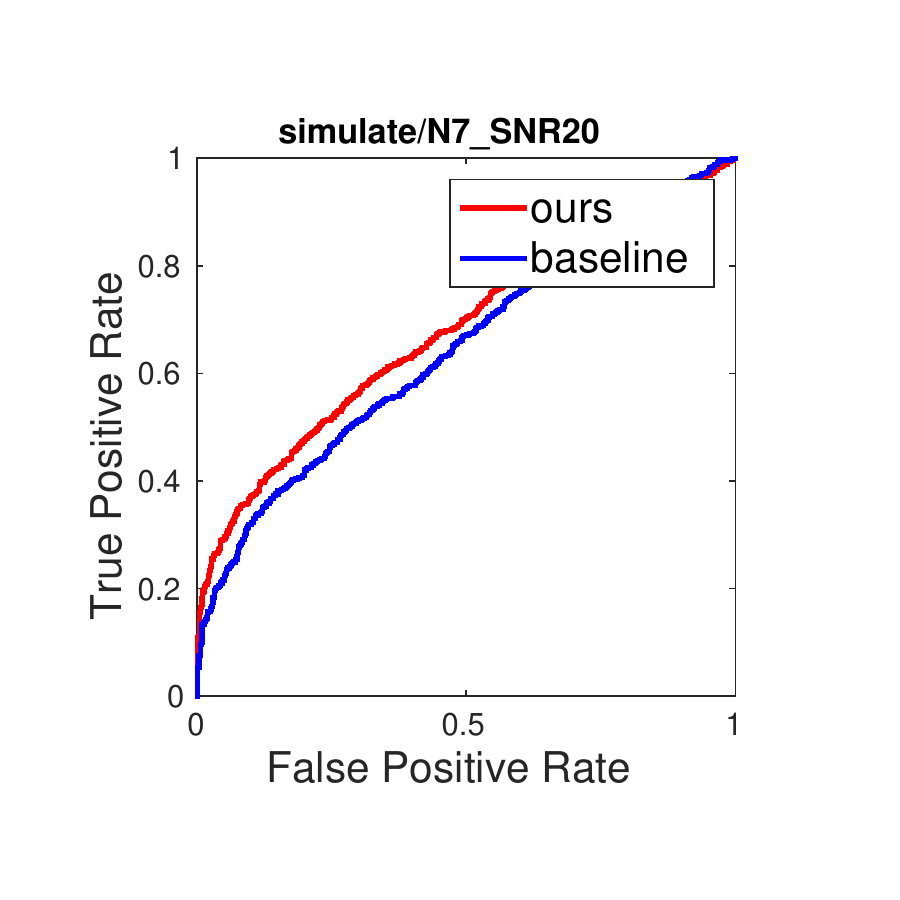}
		&
		\includegraphics[scale=0.27, trim=30 50 0 0]{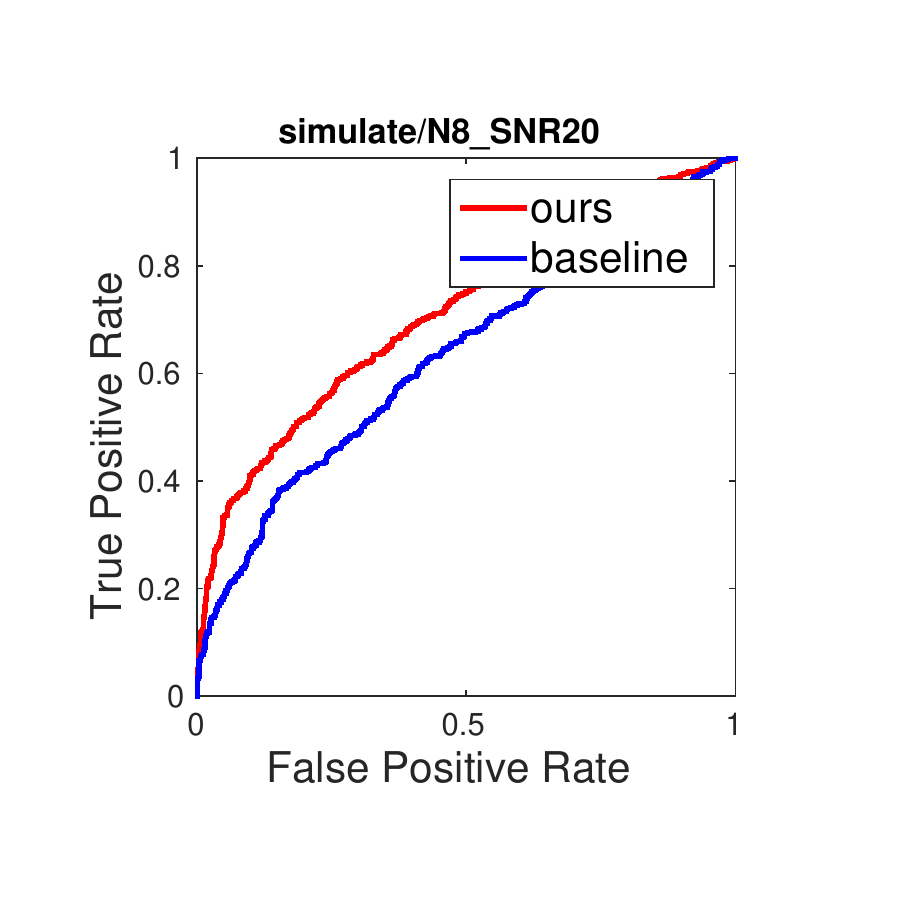}\\
		{S5} & {S6} & {S7} &{S8}
		\end{tabular}
	\caption{ROC curves comparison for speech dataset.}	
	\label{auc_speech_fig}
	\end{center}
\end{figure}

\subsection{HASC Human Activity Dataset} This data set is from \textit{Human Activity Sensing Consortium (HASC) challenge 2011\footnote{http://hasc.jp/hc2011}}. Each data consists of human activity information collected by portable three-axis accelerometers. Following the setting in \citep{density-ratio2013}, we use the $\ell_2$-norm of 3-dimensional data (i.e., the magnitude of acceleration) as the signals. 
	
	We use the `RealWorldData' from HASC Challenge 2011, which consists of 6 kinds of human activities:
	\begin{center}
	walk/jog, stairUp/stairDown, elevatorUp/elevatorDown, \\escalatorUp/escalatorDown, movingWalkway, stay. 
	\end{center}
	We make pairs of signal sequences from different activity categories, and remove the sequences which are too short. We finally get 381 sequences. We tune the parameters using the same way as in CENSREC-1-C experiment. The AUC of our algorithm is \textbf{.8871}, compared to \textbf{.7161} achieved by baseline algorithm, which greatly improved the performance. 
	
	Examples of the signals are shown in Figure~\ref{activity_example}. Some sequences are easy to find the change-point, like Figure~\ref{activity_example}(a), and \ref{activity_example}(d). Some pairs of the signals are hard to distinguish visually, like Figure~\ref{activity_example}(b) and \ref{activity_example}(c). The examples show that our algorithm can tell the change-point from walk to stairUp/stairDown, or from stairUp/stairDown to escalatorUp/escalatorDown. There are some cases when our algorithm raises false alarm. See Figure~\ref{activity_example}(h). It finds a change-point during the activity `elevatorUp/elevatorDown'. It is reasonable, since this type of action contains the phase from acceleration to uniform motion, and the phase from uniform motion to acceleration. 
	
\begin{figure}[h!]
	\begin{center}
	\begin{tabular}{ccc}
		\includegraphics[scale=0.30, trim=0 0 0 0]{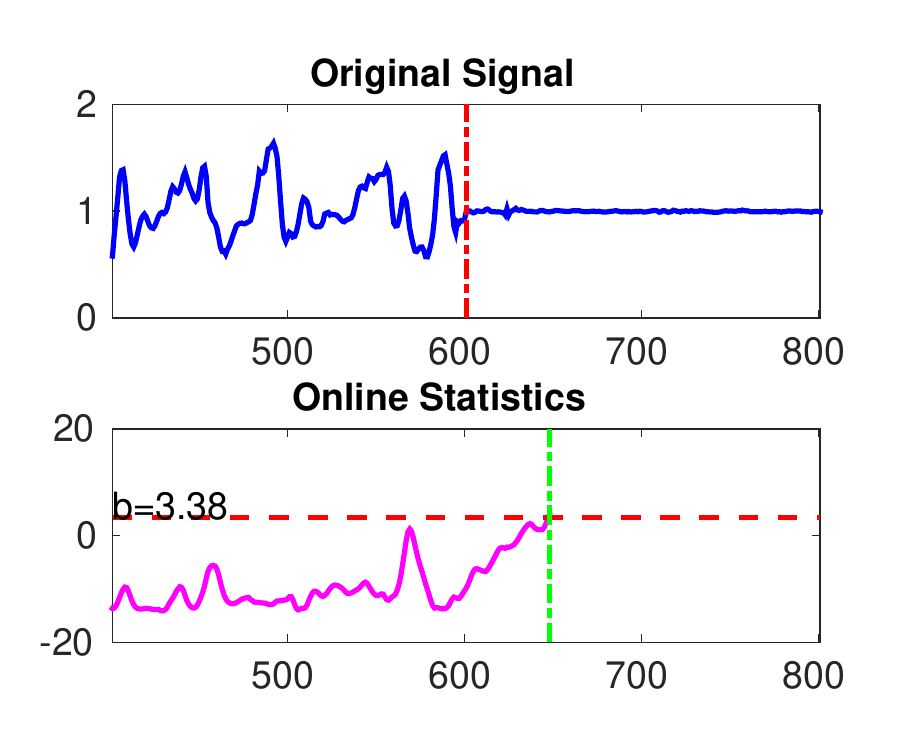}
		&
		\includegraphics[scale=0.30, trim=0 0 0 0]{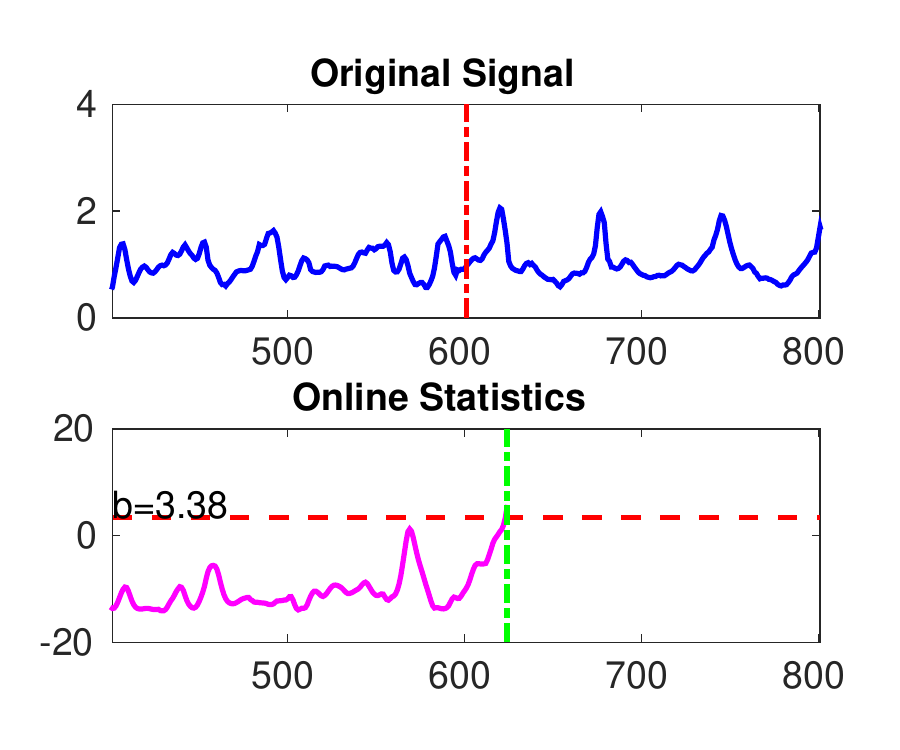}
		&
		\includegraphics[scale=0.30, trim=0 0 0 0]{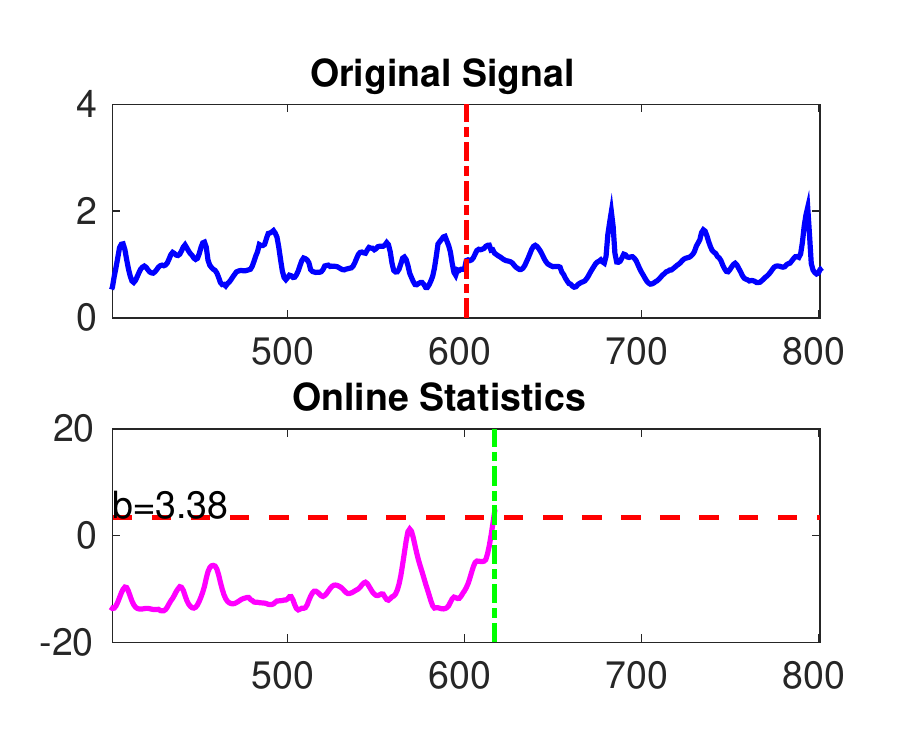} \\
		{(a) A1 vs A6} & {(b) A1 vs A4} & {(c) A1 vs A2} \\
		\includegraphics[scale=0.30, trim=0 0 0 0]{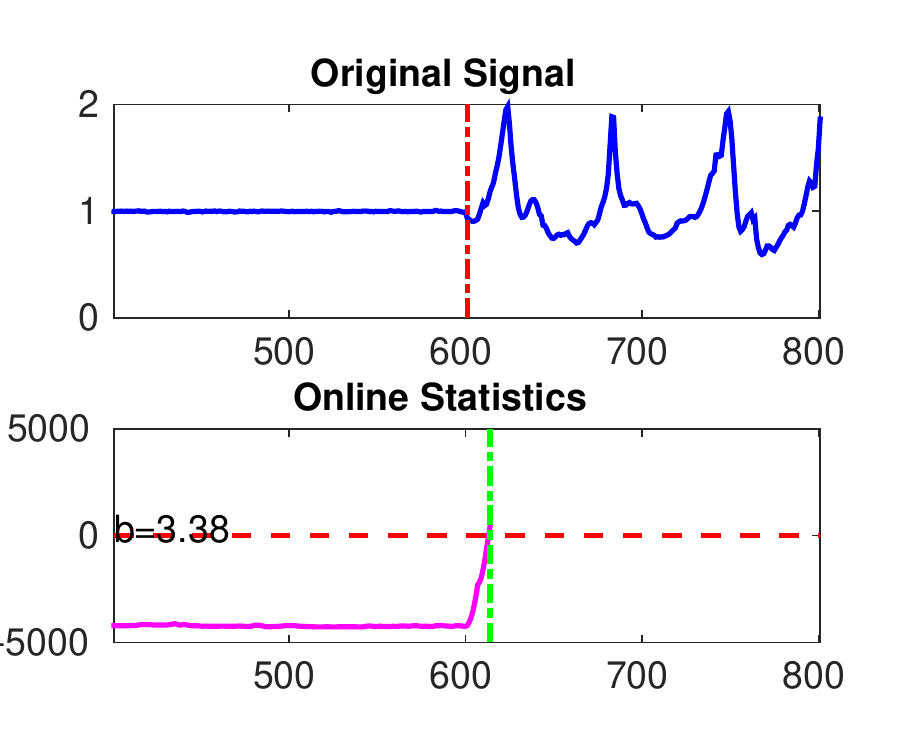}
		&
		\includegraphics[scale=0.30, trim=0 0 0 0]{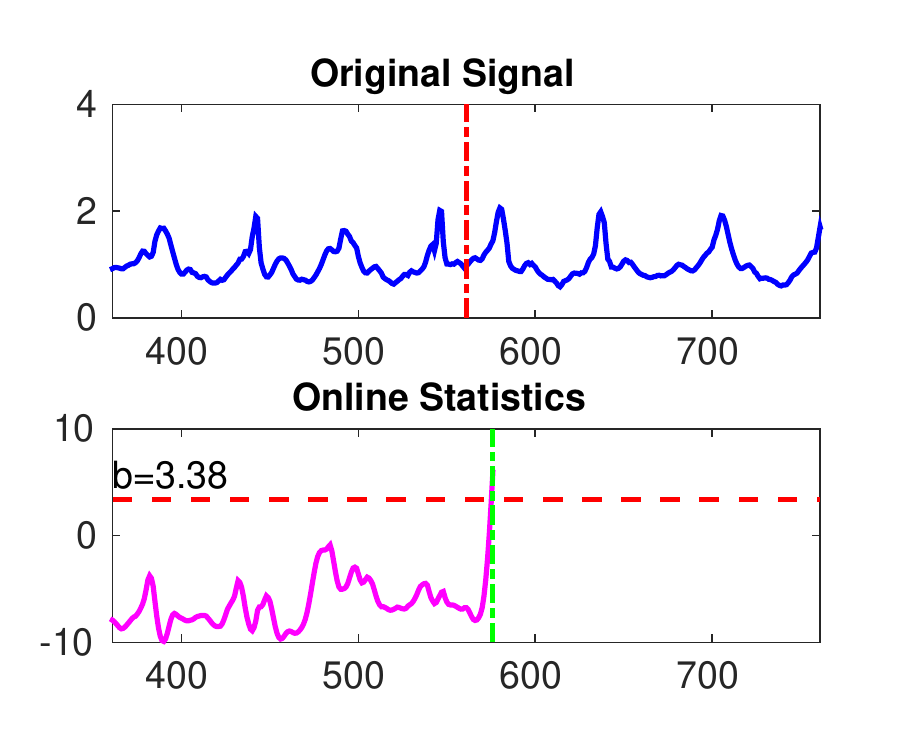}
		&
		\includegraphics[scale=0.30, trim=0 0 0 0]{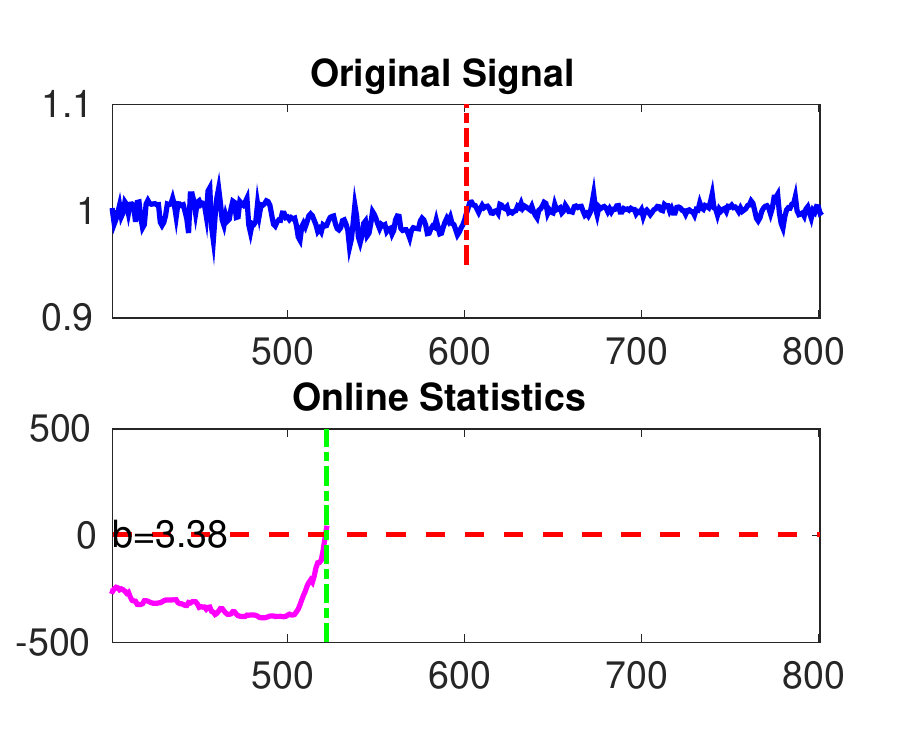} \\
		{(d) A6 vs A2} & {(e) A2 vs A4} & {(f) A4 vs A3}\\
		\includegraphics[scale=0.30, trim=0 0 0 0]{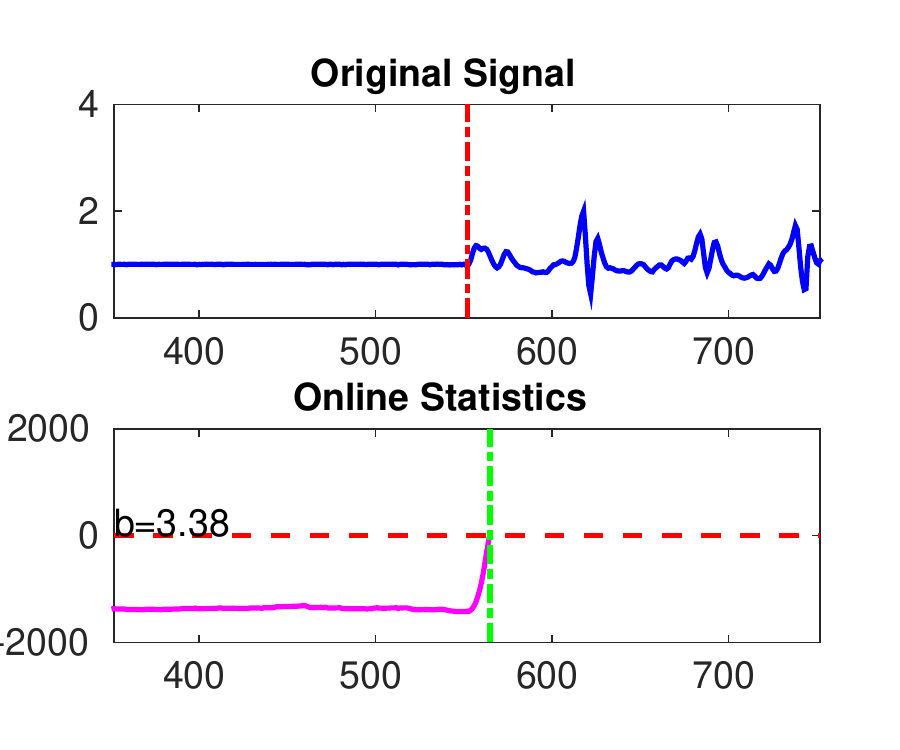}
		&
		\includegraphics[scale=0.30, trim=0 0 0 0]{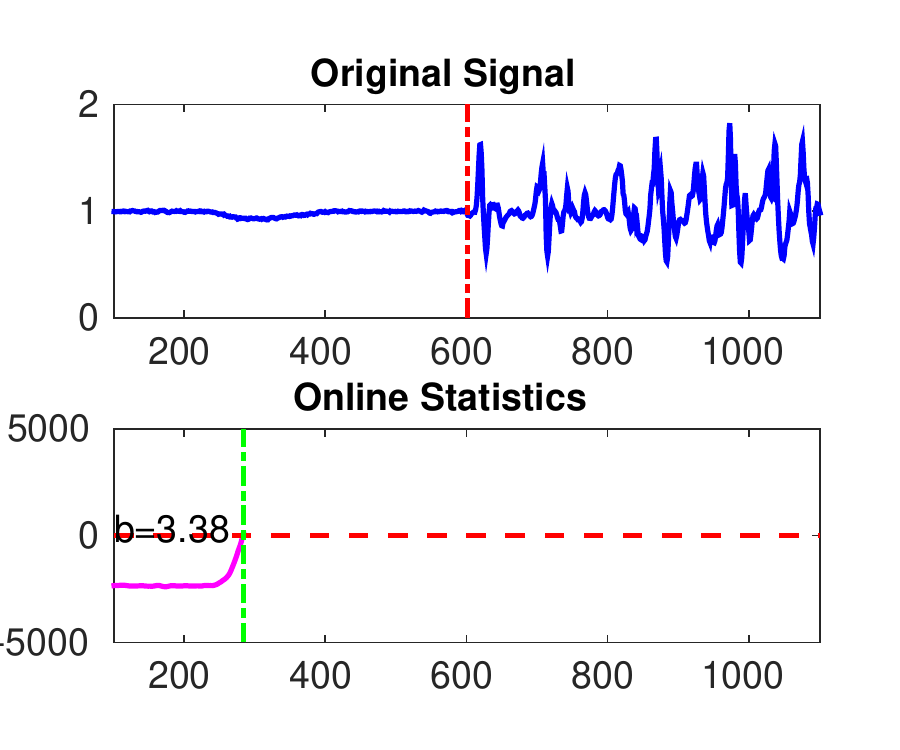}&
		\includegraphics[scale=0.30, trim=0 0 0 0]{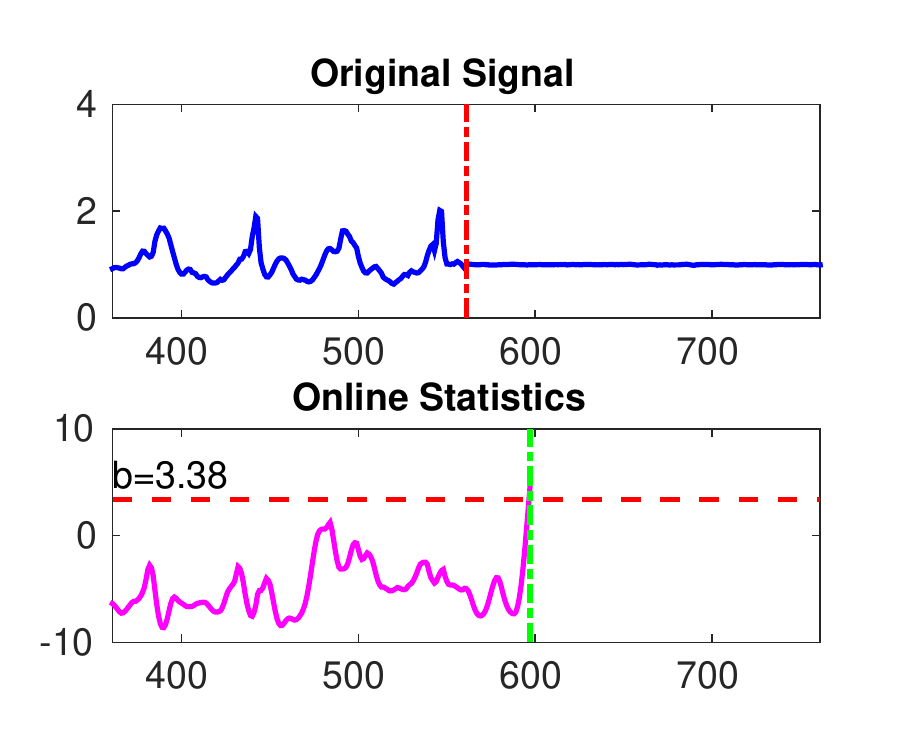}
\\
		{(g) A6 vs A4} & {(h) A3 vs A1} &				{(i) A2 vs A6}
		\end{tabular}
		\end{center}
			\caption{Examples of HASC dataset. The meaning of the markers in this figure are the same as those in Figure~\ref{speech_example}.}
	\label{activity_example}
\end{figure}

\end{document}